\documentclass{article}

\usepackage[preprint]{neurips_2026}
\usepackage[utf8]{inputenc} 
\usepackage[T1]{fontenc}    
\usepackage{hyperref}       
\usepackage{url}            
\usepackage{booktabs}       
\usepackage{amsfonts}       
\usepackage{nicefrac}       
\usepackage{microtype}      
\usepackage{xcolor}         

\usepackage{graphicx}
\usepackage{balance} 
\usepackage{multirow}

\usepackage{array}

\usepackage{amsmath}
\usepackage{amssymb}
\usepackage{mathtools}
\usepackage{amsthm}

\usepackage[capitalize,noabbrev]{cleveref}

\newcolumntype{C}[1]{>{\centering\arraybackslash}m{#1}}

\usepackage{titletoc}
\usepackage{colortbl} 
\usepackage{wrapfig}
\usepackage{enumitem}
\usepackage{extarrows}
\usepackage{multirow}
\usepackage{dsfont}
\usepackage{adjustbox}

\usepackage{mathrsfs}
\newcommand{\KL}[2]{\mathrm{KL}\!\left(#1 \,\|\, #2\right)}
\usepackage{tikz}

\usepackage{bbm}
\usepackage{cancel}
\usepackage{listings}

\usepackage[toc,page]{appendix}
\usepackage{titletoc}
\usepackage{etoolbox}

\usepackage{caption}
\usepackage{subcaption}

\usepackage[utf8]{inputenc}
\usepackage[T1]{fontenc}

\usepackage[acronym]{glossaries}
\newacronym{mino}{MINO}{Mesh-Informed Neural Operator}

\newacronym{ti}{TI}{Trajectory Inference}
\newacronym{sdyn}{$\text{SBP}_{\text{Dyn}}$}{Dynamic Schrödinger Bridge Problem}
\newacronym{ssta}{$\text{SBP}_{\text{Static}}$}{Static Schrödinger Bridge Problem}
\newacronym{kl}{KL}{Kullback–Leibler divergence}

\newacronym{fkl}{FKL}{Function-space Kullback–Leibler divergence}

\newacronym{ffm}{FFM}{Functional Flow Matching}

\newacronym{nn}{NN}{Neural Networks}

\newacronym{tv}{TV}{Total Variation}

\newacronym{sde}{SDE}{Stochastic Differential Equation}
\newacronym{ode}{ODE}{Ordinary Differential Equation}
\newacronym{pde}{PDE}{Partial Differential Equation}

\newacronym{gno}{GNO}{Graph Neural Operator}

\newacronym{ocp}{OCP}{Optimal Control Problem}
\newacronym{socp}{$\text{OCP}_{\text{Sto}}$}{Stochastic Optimal Control Problem}
\newacronym{docp}{$\text{OCP}_{\text{Det}}$}{Deterministic Optimal Control Problem}

\newacronym{mse}{MSE}{Mean Squared Error}
\newacronym{nmse}{NMSE}{Normalized Mean Squared Error}
\newacronym{onb}{ONB}{orthonormal basis}

\newacronym{pca}{PC}{Principal  Components}
\newacronym{sb}{SB}{Schrödinger Bridge}
\newacronym{gsb}{GSB}{Gaussian Schrödinger Bridge}
\newacronym{gt}{GT}{Ground Truth}
\newacronym{cd}{CD}{Critical Difference}
\newacronym{fno}{FNO}{Fourier Neural Operator}

\newacronym{gp}{GP}{Gaussian Process}
\newacronym{gm}{GM}{Gaussian Mixture}
\newacronym{lr}{LR}{Learning Rate}

\newacronym{imf}{IMF}{Iterative Markovian Fitting}
\newacronym{ipf}{IPF}{Iterative Proportional Fitting}
\newacronym{mfl}{MFL}{Mean-Field Langevin in path space}
\newacronym{eam}{AM}{entropic Action Matching}

\newacronym{emd}{EMD}{Earth Mover's Distance}
\newacronym{w2}{W2}{Wasserstein-2 Distance}
\newacronym{swd}{SWD}{Sliced Wasserstein Distance}
\newacronym{mwd}{MWD}{Max-Sliced Wasserstein Distance}
\newacronym{mmd}{MMD}{Maximum Mean Discrepancy}

\newacronym{val}{VAL}{Validation}
\newacronym{ot}{OT}{Optimal Transport}
\newacronym{sbirr}{SBIRR}{Schrödinger Bridge Iterative Reference Refinement}
\newacronym{vsb}{vSB}{vanilla IPF-based baseline} 
\newacronym{msbm}{MSBM}{Multi-Marginal Schrödinger Bridge Matching}

\newacronym{tigon}{TIGON}{Trajectory Inference with Growth via
Optimal transport and Neural network}

\newacronym{dpi}{DPI}{Data Processing Inequality}

\newacronym{eb}{EB}{Embryoid Body}
\newacronym{hesc}{hESC}{Human Embryonic Stem Cell}
\newacronym{me}{ME}{Mouse Erythroid}
\newacronym{hf}{HF}{Human Fibroblast}

\usepackage{float}

\theoremstyle{plain}
\newtheorem{theorem}{Theorem}[section]

\newtheorem{lemma}[theorem]{Lemma}

\theoremstyle{definition}

\newtheorem{assumption}[theorem]{Assumption}
\theoremstyle{remark}

\newtheorem*{remark*}{Remark}    
\newtheorem*{remarks*}{Remarks}    

\usepackage[textsize=tiny]{todonotes}

\title{Relative Entropy Estimation in Function Space: 
Theory and Applications to Trajectory Inference
}

\author{%
   Chao Wang\thanks{Equal contribution.} \\
  EURECOM \\
  \texttt{chao.wang@eurecom.fr} \\
  \And
  Luca Nepote\footnotemark[1] \\
  EURECOM \\
  \texttt{luca.nepote@eurecom.fr} \\
  \AND
  Giulio Franzese \\
  EURECOM \\
  \texttt{giulio.franzese@eurecom.fr} \\
  \And
  Pietro Michiardi \\
  EURECOM \\
  \texttt{pietro.michiardi@eurecom.fr} 
}

\begin{document}

\maketitle

\begin{abstract}

\gls{ti} seeks to recover latent dynamical processes from snapshot data, where only independent samples from time-indexed marginals are observed. 
In applications such as single-cell genomics, destructive measurements make path-space laws non-identifiable from finitely many marginals, leaving held-out marginal prediction as the dominant but limited evaluation protocol.
We introduce a general framework for estimating the \gls{kl} divergence between probability measures on function space, yielding a tractable, data-driven estimator that is scalable to realistic snapshot datasets. 
We validate the accuracy of our estimator on a benchmark suite, where the estimated functional \gls{kl} closely matches the analytic \gls{kl}.
Applying this framework to synthetic and real scRNA-seq datasets, we show that current evaluation metrics often give inconsistent assessments, whereas path-space \gls{kl} enables a coherent comparison of trajectory inference methods and exposes discrepancies in inferred dynamics, especially in regions with sparse or missing data. These results support functional \gls{kl} as a principled criterion for evaluating trajectory inference under partial observability.
\end{abstract}

\section{Introduction}
Trajectory inference (\acrshort{ti})
is a rapidly evolving field, especially in single-cell genomics applications that seek to reconstruct latent dynamical processes from observed omics data, typically formalized as probability distributions over trajectories. A major challenge is that omics measurements are destructive, providing only snapshots of cellular states at discrete time points rather than continuous trajectories~\citep{PMID:24658644,nowakowski2017,weinreb2018,Lamanno,trevino2021}.

This setting has motivated stochastic-process formulations of \gls{ti}, often borrowing from \gls{ot}~\citep{peyré2020computationaloptimaltransport, villani2021topics} and 
\gls{sb}
problems~\citep{leonard2013survey,chen2021}. In this work, we set aside pseudo-time methods~\citep{lange2022cellrank,haghverdi2016diffusion,weiler2024cellrank2}, since pseudo-orderings flatten real temporal structure and can obscure lineage relationships. We focus instead on model families
that consider \gls{ode} dynamics \cite{shaTIGONReconstructingGrowthDynamic2024} and/or \gls{sde}-based approaches~\citep{tong2020trajectorynet, neklyudov2023action, chizat2022trajectory}, including the \gls{sb} variants~\citep{shi2023diffusionschrodingerbridgematching, shen2025multi, chen2023deepmomentummultimarginalschrodinger, park2025multimarginalschrodingerbridgematching}. \gls{ode}-based methods assume noiseless flows and lead to deterministic optimal control, which under marginal constraints yields Monge-Kantorovich-type formulations~\citep{peyré2020computationaloptimaltransport,villani2021topics}. 
\gls{sde}-based models relax determinism and solve a stochastic control problem that steers a Brownian prior with minimal control effort, typically optimizing only the drift while treating the marginal flow as fixed or estimated from data. 
\gls{sb}-based models are less prescriptive: given a reference diffusion (e.g., Brownian or Ornstein-Uhlenbeck), they infer a path measure whose snapshot marginals match the data while remaining close to the reference, equivalently solving a stochastic optimal control problem~\citep{chen2021}. 
Yet, computational constraints often push \gls{sb} methods away from native path-space representations, relying on discretization and finite-dimensional parameterizations that only approximately preserve path-space structure.
 
In the literature, most \gls{ti} methods are evaluated via held-out marginals. 
Such metrics are intrinsically limited because marginals do not determine how states at different times are coupled. 
Indeed, 
while marginal evaluation only compares snapshot distributions in finite-dimensional space, 
the underlying inference target is a probability measure over full trajectories in function space. 
Without constraints on temporal correlations, transition structure, or other path-wise properties, distinct trajectory models can attain similar marginal scores while inducing very different distributions over trajectories, differences that remain invisible to marginal-only criteria (see \Cref{fig:petal_zoom}). This motivates evaluating the inferred object itself: a probability measure on trajectory space. In this work, we propose a general framework for estimating the \gls{kl} divergence between probability measures on function space, 
validate it on special cases where the analytic \gls{kl} is available, 
and 
use it to assess \gls{ti} methods in both synthetic settings (e.g., physical simulations) and real settings (single-cell omics).
\begin{wrapfigure}{r}{0.36\textwidth}
\vspace{-0.6em}
    \centering
    \includegraphics[width=0.93\linewidth]{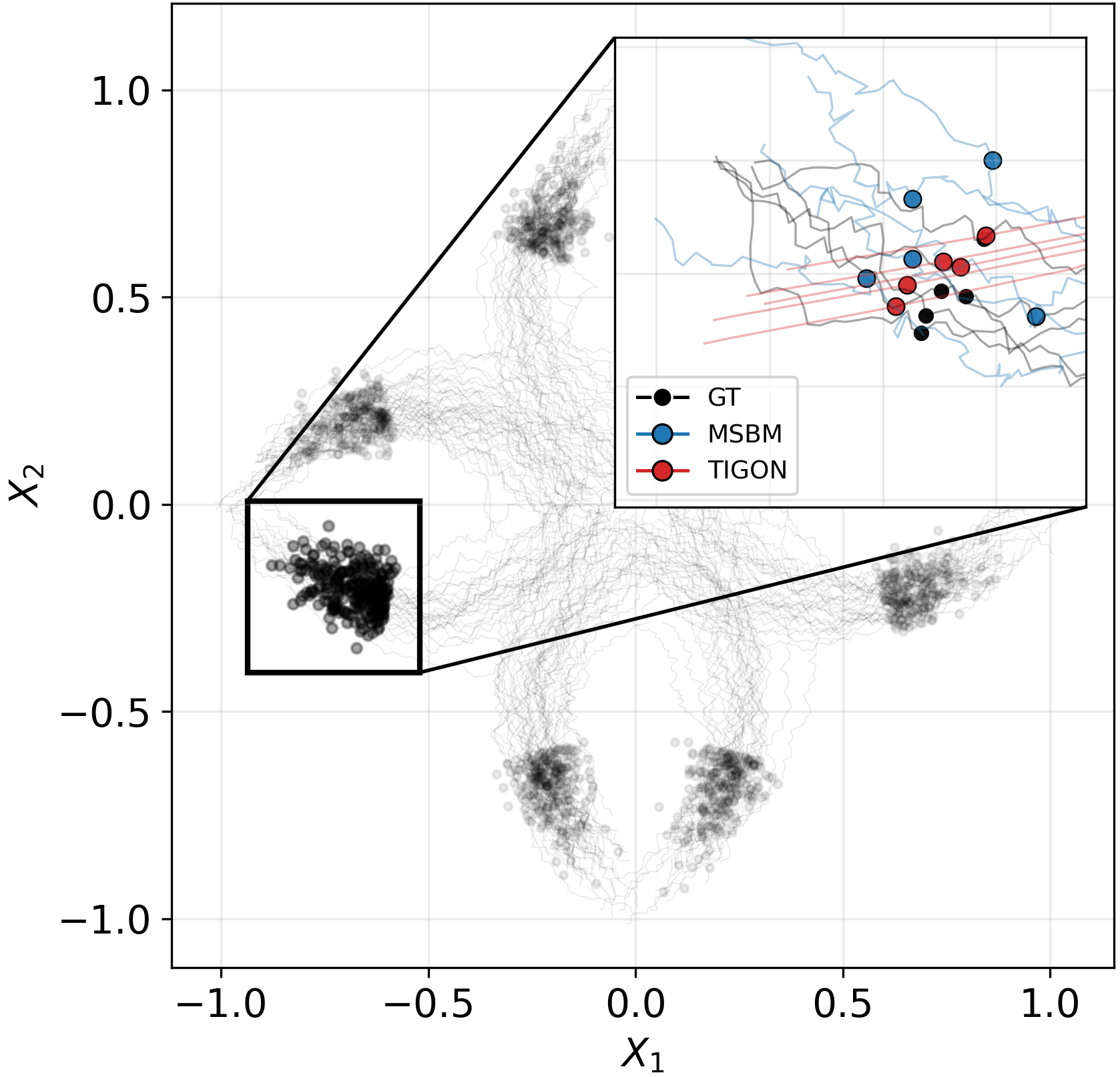}
    \vspace{-0.6em}
    \caption{Petal dataset at $\tau=0.75$: MSBM and TIGON yield marginals close to GT but differ in trajectory dynamics, captured by FKL.}
    \label{fig:petal_zoom}
    \vspace{-1em}
\end{wrapfigure}
This work builds on the growing literature of infinite dimensional generative models \cite{kerrigan2023diffusion,hagemann2023multilevel,jakiw,lim2024score,yang2024simulating, baker2024conditioning,park2024stochastic}.
In particular, our approach represents trajectory distributions through functional flows using \gls{ffm}~\citep{kerriganFunctionalFlowMatching2023}. \gls{ffm} constructs a path of conditional Gaussian measures interpolating between a reference Gaussian law and a data-induced law; marginalizing these conditional laws over the data distribution yields a corresponding path of unconditional measures. 
Exploiting this structure, we derive a tractable estimator of the \gls{kl} divergence between trajectory distributions, expressible, up to model- and approximation-dependent terms, in terms of squared discrepancies between the associated (approximate) velocity fields. The resulting estimator is practical and scales to realistic regimes in both snapshot count and trajectory resolution. Our work connects to recent progress on information-measure estimation~\citep{belghazi2018mutual, franzese2023minde, kong2023interpretable,butakov2024,butakov2025fmmi,gowri2024,piepersethmacher2025simulationinfinitedimensionaldiffusionbridges}; to our knowledge, it is among the first practically tractable estimators designed to act directly on distributions over functions.

Our contributions are fourfold. 
First, we give a general construction for estimating the \gls{kl} divergence between distributions over functions, based on a careful treatment of absolute continuity and trace-class noise covariances, leading to a simple estimator. 
Second, we validate our functional \gls{kl} estimator on an extensive set of special cases with known analytic \gls{kl} divergence, and systematically assess its robustness across multiple estimation factors.
Third, we conduct an extensive evaluation of prominent \gls{ti} methods on a range of synthetic and real datasets, demonstrating that five widely used snapshot-based metrics can yield inconsistent rankings. 
Fourth, we show that our functional \gls{kl} estimator provides a coherent comparison that exposes discrepancies in inferred dynamics, particularly under sparse or missing observations, supporting functional relative entropy as a principled criterion for evaluating \gls{ti} under partial observability.

\section{Preliminaries: Functional Flow Matching} 
\label{sec:pre}
We now revisit the \gls{ffm}~\citep{kerriganFunctionalFlowMatching2023} framework, which is instrumental for the derivation of our method.

Let $H$ be a real separable Hilbert  space, equipped with inner product $\langle  \cdot ,  \cdot \rangle_H$ 
(and, consequently, norm $\| \cdot \|_H$) and denote its Borel $\sigma$-field by $B(H)$. By separability, 
$H$ has a countable dense subset $\left\{d_k\right\}_{k\in \mathbb{N}}$, 
which allows us to define 
a countable \gls{onb} $\left\{e_k\right\}_{k\in \mathbb{N}}$ in $H$.
Given $\left\{e_k\right\}_{k\in \mathbb{N}}$,   
the canonical projection map 
is defined by 
$\forall K\in \mathbb{N}, \pi^K: H \rightarrow \mathbb{R}^K, ~ 
\pi^K(x)=\left(\left\langle x, e_1\right\rangle, \ldots,\left\langle x, e_K\right\rangle\right) 
$, 
and 
we write  $\operatorname{Cyl}(H \times I)$ for the space of all 
smooth cylindrical test functions 
$
\varphi: H \times I \to \mathbb{R}, ~
\varphi(x, t)=\psi\left(\pi^K(x), t\right)$, where $  K \in \mathbb{N}, ~I=(0,1),~
\text{and } 
\psi \in C_c^{\infty}\left(\mathbb{R}^K \times I ; \mathbb{R}\right)$ is an arbitrary smooth function.


Consider a  probability space $(\Omega,\mathcal{F},P)$ supporting two \textit{independent} random variables $X_0,X_1:\Omega\rightarrow H$, with laws respectively $P_{\# X_1}=\mu_1 $ and $P_{\# X_0}=\mu_0$.
Assume the  $H-$valued  random variable $X_0$ to be a Gaussian variable, fully described by its zero mean and its covariance operator $C$. We assume $C$ to be trace-class, self-adjoint, bounded, and strictly positive. We adopt the identification $\mu_0=\mathcal{N}(0, C)$ when useful and use the notation $H_{\mu_0}$ to indicate the Cameron--Martin space associated with such measure. 
\\
We then define an $H$-valued random process $X=\left(X_t\right)_{t\in[0,1]}$ by the linear interpolation between the two \textit{independent} random variables $X_0$ and $X_1$:
\begin{equation}
    X_t = (1-t)X_0 + tX_1, ~ t\in[0,1]
    \label{eq:linear_interpolation} 
\end{equation} 
Denote the associated 
curve of finite positive Borel probability measures
of $X$ by 
$\left(\mu_t\right)_{t\in[0,1]}$ with 
$\mu_t=P_{\# X_t}$.
We say that the velocity field $v:H\times[0,1]\rightarrow H$ \emph{generates} the path of measures $(\mu_t)_{t\in[0,1]}$ if the path $\mu_t$ is the pushforward of $\mu_0$ along the  flow  associated with $(v_t)_{t\in[0,1]}$, where flow refers to the mapping $\phi:[0,1]\times H\rightarrow H$  that solves \textit{the initial value problem}~\citep{kerriganFunctionalFlowMatching2023}:
$ 
    \partial_t \phi_t(x)=v_t(\phi_t(x)),~ \phi_0(x)=x. 
$ 
A standard way to verify this is to check that the pair $(\mu_t, v_t)$ satisfies \textit{the continuity equation in the sense of distributions (i.e., weakly)}~\citep{zhang2025flowstraightfasthilbert, kerriganFunctionalFlowMatching2023}:
for arbitrary test function $\varphi \in \operatorname{Cyl}(H \times I)$, 
\begin{equation}
\begin{aligned}
& 
\int_H \varphi (x, 1) d \mu_1(x) 
-
\int_H \varphi (x, 0) d \mu_0(x)
=
\int_{I}
\int_H\left(\partial_t \varphi(x, t)+\left\langle v_t(x), \nabla_x \varphi(x, t)\right\rangle\right) d \mu_t(x) 
d t
\end{aligned}
\label{eq:ctn_eq_H_main}
\end{equation}



where 
$\nabla_x  \varphi (x,t)$ is the unique vector $g$ such that 
the Fréchet derivative $D_x \varphi (\cdot, t) $   can be identified with
$D_x \varphi (\cdot, t)=\langle\cdot, g\rangle$ 
by the Riesz representation theorem.


If known, the velocity field could be leveraged for generative modeling purposes (simulating an $H-$valued \gls{ode}). Since $\mu_1$ and $v$ are unknown in practice, the velocity field is approximated with neural networks optimized using the conditional flow-matching loss~\citep{kerriganFunctionalFlowMatching2023}.



\section{Derivation of Functional KL} 
\label{sec:fkl_main}
\textbf{To begin with, we clarify the probabilistic setting and state our objective. }

We extend the previous construction to accommodate \textit{two different} probability spaces $\left(\Omega, \mathcal{F}, P^A\right)$ and $\left(\Omega, \mathcal{F}, P^B\right)$, both supporting independent random variables $X_0$ and $X_1$, such that $X_0$ has the same Gaussian law in both spaces whereas $X_1$ has laws $\mu_1^A=\nu^A$ and $\mu_1^B=\nu^B$ respectively.
We then train one \gls{ffm} model for each endpoint law, obtaining two parametrized pairs $\left(\mu_t^A, v_t^A\right)$ and $\left(\mu_t^B, v_t^B\right)$, each satisfying the weak continuity equation.
We overload the notation previously introduced by considering superscripts to indicate whether the measures and fields of interest refer to the first or second probability space.

Our objective in this work is to estimate the \gls{kl} divergence between two probability measures
$\nu^A$ and $\nu^B$ on the Hilbert space $H$, which is formally defined as 
\begin{equation}
\KL{\nu^A}{\nu^B}
:=
\begin{cases}
\displaystyle \int_H \log\!\left(\frac{d\nu^A}{d\nu^B}\right)\, d\nu^A, & \text{if } \nu^A \ll \nu^B,\\[4pt]
+\infty, & \text{otherwise}.
\end{cases}
\end{equation}
\textbf{Next, we state the assumptions needed to construct our framework.}
\begin{assumption}[Existence 
of Radon--Nikodym derivative]
\label{ass:1_main}
We assume that $\nu^A \ll \nu^B$, so that the Radon--Nikodym derivative
$\frac{d\nu^A}{d\nu^B}$ exists. 

\end{assumption}

\begin{remark*} 
Indeed, as the goal of this work is to obtain \gls{kl} estimates via a novel estimator, we need to assume that the underlying \textit{estimand} (the true \gls{kl}) is well-defined. 
\end{remark*}

\begin{assumption}[Cameron--Martin support]
\label{ass:2_main}
The measures are fully supported on the Cameron--Martin space of the noise ($\nu^A(H_{\mu_0})=\nu^B(H_{\mu_0})=1$).
\end{assumption}

\begin{remarks*} 
 First, Assumption~\ref{ass:2_main} can be relaxed: together with Assumption~\ref{ass:1_main}, it suffices to require that $\nu^B$  be fully supported on $H_{\mu_0}$, since $\nu^B(H_{\mu_0})=1$  paired with Assumption~\ref{ass:1_main} implies $\nu^A(H_{\mu_0})=1$. 
 Indeed, 
 by Assumption~\ref{ass:1_main}, 
 $\nu^A$ is absolutely continuous with respect to $\nu^B$ on $B(H)$, i.e., $\nu^B(\Gamma)=0$ implies $\nu^A(\Gamma)=0$ for all $\Gamma\in B(H)$. If $\nu^B(H_{\mu_0})=1$, then $\nu^B(H\setminus H_{\mu_0})=0$, and since the Hilbert subspace $H_{\mu_0} 
 $
 is Borel, Assumption~\ref{ass:1_main} yields $\nu^A(H\setminus H_{\mu_0})=0$, hence $\nu^A(H_{\mu_0})=1$.
\\
   Second,  
    Assumption~\ref{ass:2_main} can  be satisfied by choosing noise that is rougher than the data distribution (so the data are fully supported on $H_{\mu_0}$) while still being trace-class (and hence in $H$). In practice, 
  in $H:=L^2(\mathbb{T};\mathbb{R}^d)$
 with Fourier orthonormal basis, 
    we first estimate the data's Fourier spectrum using empirically determined variances for each Fourier coefficient, and then multiply each coefficient by the wavenumber magnitude $k=\|m\|_2$ to produce noise with a rougher spectrum 
    while still in $H$ as discussed in 
    \cite{chenScaleAdaptiveGenerativeFlows2025}.

\end{remarks*}


\textbf{Under these assumptions, 
we now develop
a  velocity-only representation of the \gls{kl} divergence
 $\KL{\nu^A}{\nu^B}$.}
The derivation is organized into three lemmas:
Lemma~\ref{lem:abs_ctn_main}
establishes 
the
absolute continuity  needed to ensure the Radon–Nikodym derivatives are well-defined;
Lemma~\ref{lem:kl_1_main}
applies weak continuity equation to 
express \gls{kl} divergence with 
logarithmic Radon–Nikodym derivative 
and velocities. 
Lemma~\ref{lem:link_main} links the logarithmic gradients to the velocity fields.







    
    \begin{lemma} 
    Under the Assumptions 
    ~\ref{ass:1_main}
    and 
    ~\ref{ass:2_main}, 
    \vspace{-0.5em}
        \begin{enumerate}
            \item[(a)] $\mu^{A,B}_t\ll \rho_t$, 
            where 
$\rho_t =\left( (1-t) \mathrm{Id} \right)_{\#}\mu_0$
denotes the push-forward of $\mu_0$ by the map $(1-t) \mathrm{Id}$  for every $t\in [0,1)$.
            \item[(b)] $\mu^{A}_t\ll \mu^{B}_t$
              for every $t\in [0,1]$.

        \end{enumerate}
        \label{lem:abs_ctn_main}
    \end{lemma}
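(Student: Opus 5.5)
The plan is to write each marginal $\mu_t^{A,B}$ as a mixture of Gaussian measures by conditioning on the endpoint $X_1$, and then to use Cameron--Martin quasi-invariance.

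\textbf{Setup.} By independence of $X_0$ and $X_1$, for $t\in[0,1)$ and every $\Gamma\in B(H)$ we have
\[
\mu^{A}_t(\Gamma)=\int_H \mathcal{N}\big(t x_1,(1-t)^2C\big)(\Gamma)\,d\nu^A(x_1),
\]
and likewise for $B$. The reference measure is $\rho_t=\mathcal{N}(0,(1-t)^2C)$. Its Cameron--Martin space is $(1-t)C^{1/2}H=C^{1/2}H=H_{\mu_0}$ as a set, since $t<1$. Scaling the covariance by a positive constant leaves this space unchanged.

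\textbf{Part (a).} By Assumption~\ref{ass:2_main}, $\nu^A$-almost every $x_1$ lies in $H_{\mu_0}$, and then $t x_1\in H_{\mu_0}$ as well. The Cameron--Martin theorem therefore gives $\mathcal{N}(t x_1,(1-t)^2C)\ll\rho_t$, with an explicit positive density. Let $\Gamma$ satisfy $\rho_t(\Gamma)=0$. Then the integrand vanishes for $\nu^A$-a.e.\ $x_1$, so $\mu_t^A(\Gamma)=0$. The same argument applies to $B$. The only measurability point is that $x_1\mapsto \mathcal{N}(tx_1,(1-t)^2C)(\Gamma)$ is measurable, which follows from Fubini applied to the product law of $(X_0,X_1)$.

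\textbf{Part (b).} At $t=1$ we have $\mu_1^{A}=\nu^A$ and $\mu_1^B=\nu^B$, so the claim is exactly Assumption~\ref{ass:1_main}. For $t\in[0,1)$, I will write $\mu_t^{A,B}$ as the law of $(1-t)X_0+tX_1$. Let $T(x_0,x_1)=(1-t)x_0+tx_1$, so that $\mu_t^{A}=T_\#(\mu_0\otimes\nu^A)$ and $\mu_t^{B}=T_\#(\mu_0\otimes\nu^B)$. Assumption~\ref{ass:1_main} gives $\mu_0\otimes\nu^A\ll\mu_0\otimes\nu^B$, with density $(x_0,x_1)\mapsto \frac{d\nu^A}{d\nu^B}(x_1)$. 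Absolute continuity is preserved under pushforward by a measurable map: if $\mu_t^B(\Gamma)=0$, then $(\mu_0\otimes\nu^B)(T^{-1}\Gamma)=0$, hence $(\mu_0\otimes\nu^A)(T^{-1}\Gamma)=0$, i.e.\ $\mu_t^A(\Gamma)=0$. This argument does not actually require Assumption~\ref{ass:2_main}.

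\textbf{Main obstacle.} The only delicate step is part (a): correctly identifying the Cameron--Martin space of the rescaled Gaussian $\rho_t$ and checking that the shift $t x_1$ lies in it. This is where Assumption~\ref{ass:2_main} is essential. It also explains why $t=1$ is excluded in (a): $\rho_1=\delta_0$ is degenerate.
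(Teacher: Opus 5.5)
Your proof is correct. Part (a) follows essentially the paper's route: you represent $\mu_t^{A,B}$ as a mixture of the translates $(\tau_{tx})_{\#}\rho_t$, note that $H_{\rho_t}=H_{\mu_0}$ as sets for $t<1$, and apply Cameron--Martin. The one difference is that you argue through null sets. The paper instead uses Fubini to extract the explicit density $\frac{d\mu_t^{A,B}}{d\rho_t}(y)=\int_H \frac{d(\tau_{tx})_{\#}\rho_t}{d\rho_t}(y)\,\mu_1^{A,B}(dx)$, which it reuses both in (b) and in the score computation of Lemma~\ref{lem:link_main}.

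Part (b) is where you genuinely diverge. The paper works with $f_t^{A,B}=d\mu_t^{A,B}/d\rho_t$ and reduces the claim to showing $f_t^A=0$ $\rho_t$-a.s.\ on $\{f_t^B=0\}$. It verifies this pointwise in $y$ by applying $\nu^A\ll\nu^B$ to the sets $\{x:\ g_t(x,y)>0\}$. You instead write $\mu_t^{A,B}=T_{\#}(\mu_0\otimes\nu^{A,B})$ and push the absolute continuity of the product measures forward.

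Your argument is shorter and strictly more general in three ways. It uses neither Assumption~\ref{ass:2_main}, nor part (a), nor the Gaussianity of $X_0$; it only needs independence and a common law for $X_0$. It sidesteps the choice of a jointly measurable, everywhere-defined version of the kernel $g_t(x,y)$, which the paper's pointwise-in-$y$ step tacitly requires. And it handles $t=1$ uniformly: there $\rho_1=\delta_0$ and the paper's density argument is unavailable, while with $T(x_0,x_1)=x_1$ your argument is literally Assumption~\ref{ass:1_main}.

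What the paper's heavier route buys is the mixture-density representation itself, which is the object needed downstream. If your write-up is to support Lemma~\ref{lem:link_main}, you would still need to add that Fubini step to (a).
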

\vspace{-1.5em}
\begin{proof}
The proof follows from a conditional Gaussian representation of
$\mu^{A,B}_t$
and 
the Cameron-Martin theorem. 
See Appendix \ref{sec:fkl_app} for the full proof.\qedhere
    


\end{proof}

Given the well-definedness 
of Radon--Nikodym derivatives, 
next 
we apply the weak continuity equation using 
the Radon--Nikodym derivative and its logarithm as test functions to derive an integral representation of the \gls{kl} divergence.

\begin{lemma}
Let $r_t := \frac{d\mu_t^A}{d\mu_t^B}$, which is a well-defined Radon-Nikodym derivative by Lemma~\ref{lem:abs_ctn_main} $(b)$. Then, under mild regularity conditions, 
we have that 
\begin{equation}
\KL{\nu^A}{\nu^B}=
\int_I 
\int_{H}
\bigl\langle
v^{A}_t(x)-v^{B}_t(x),\,
\nabla_x \log
 r_t
(x)
\bigr\rangle
\,
\mathrm d\mu^{A}_t(x).
\end{equation}

\label{lem:kl_1_main}
\end{lemma}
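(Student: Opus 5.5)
The plan is to track the function $t\mapsto \KL{\mu_t^A}{\mu_t^B}=\int_H \log r_t\, d\mu_t^A$ along the two flows and integrate its time derivative over $I$. At $t=0$ both processes start from the same Gaussian $\mu_0$, so $\mu_0^A=\mu_0^B$, $r_0\equiv 1$ and the divergence vanishes. At $t=1$ we have $\mu_1^{A}=\nu^A$ and $\mu_1^B=\nu^B$. Hence
\begin{equation*}
\KL{\nu^A}{\nu^B}=\int_I \frac{d}{dt}\int_H \log r_t\, d\mu_t^A\, dt ,
\end{equation*}
and the task reduces to computing the integrand using the weak continuity equation \eqref{eq:ctn_eq_H_main} for both pairs $(\mu_t^A,v_t^A)$ and $(\mu_t^B,v_t^B)$. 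Lemma~\ref{lem:abs_ctn_main}(b) guarantees that $r_t$ exists for every $t$.

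The computation proceeds in three steps.
\begin{enumerate}
\item[(i)] Differentiate $\int \log r_t\, d\mu^A_t$. By the product rule there are two contributions: the explicit time derivative of the test function, and the transport of $\mu_t^A$. Applying \eqref{eq:ctn_eq_H_main} for $A$ with test function $\varphi=\log r_t$ gives
\begin{equation*}
\int_H \partial_t \log r_t\, d\mu_t^A+\int_H\langle v^A_t,\nabla_x\log r_t\rangle\, d\mu_t^A .
\end{equation*}
\item[(ii)] Remove the term $\int \partial_t\log r_t\, d\mu_t^A$. Write it as $\int \partial_t r_t\, d\mu_t^B$, using $d\mu_t^A=r_t\,d\mu_t^B$. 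Since $\int r_t\, d\mu_t^B=1$ for all $t$, applying \eqref{eq:ctn_eq_H_main} for $B$ with test function $r_t$ yields
\begin{equation*}
0=\frac{d}{dt}\int r_t\, d\mu_t^B=\int \partial_t r_t\, d\mu_t^B+\int\langle v_t^B,\nabla_x r_t\rangle\, d\mu_t^B .
\end{equation*}
Using $\nabla_x r_t\, d\mu_t^B = r_t\nabla_x\log r_t\, d\mu_t^B=\nabla_x\log r_t\, d\mu_t^A$, we obtain
\begin{equation*}
\int\partial_t\log r_t\, d\mu_t^A=-\int\langle v_t^B,\nabla_x\log r_t\rangle\, d\mu_t^A .
\end{equation*}
\item[(iii)] Combine (i) and (ii). This gives $\frac{d}{dt}\KL{\mu_t^A}{\mu_t^B}=\int\langle v_t^A-v_t^B,\nabla_x\log r_t\rangle\, d\mu_t^A$, and integrating over $I$ yields the claim.
\end{enumerate}

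Formally, it is cleaner to fold (i)--(iii) into the integrated form of \eqref{eq:ctn_eq_H_main} directly, rather than differentiating in $t$. Apply the equation for $A$ with $\varphi(x,t)=\log r_t(x)$ and for $B$ with $\varphi(x,t)=r_t(x)$, then subtract. The boundary terms are $\int\log r_1\,d\nu^A-0$ and $\int r_1\,d\nu^B-\int r_0\,d\mu_0=1-1=0$.

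The main obstacle is that $\log r_t$ and $r_t$ are not cylindrical test functions. They are neither compactly supported nor finite-dimensional, and they are singular as $t\to 0,1$. This is where the ``mild regularity conditions'' enter. I would assume that $r_t$ is Fréchet differentiable in $x$ and $C^1$ in $t$ on $H\times I$, and that $\log r_t$, $\nabla_x\log r_t$, $\partial_t\log r_t$ and $\langle v_t^{A,B},\nabla_x\log r_t\rangle$ are integrable with respect to $\mu_t^A\,dt$. I would then extend \eqref{eq:ctn_eq_H_main} to such test functions by approximation, in two stages:
\begin{enumerate}
\item[(a)] project with $\pi^K$ and take conditional expectations onto the first $K$ coordinates, which gives finite-dimensional functions;
\item[(b)] truncate with smooth cutoffs in $x$ and in $t\in[\epsilon,1-\epsilon]$, then pass $K\to\infty$ and $\epsilon\to 0$ by dominated convergence, using the integrability assumptions.
\end{enumerate}
The endpoint behaviour also needs care. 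Continuity of $t\mapsto\int\log r_t\,d\mu^A_t$ at $t=1$ is handled by lower semicontinuity of KL together with the data-processing bound $\KL{\mu_t^A}{\mu_t^B}\le\KL{\nu^A}{\nu^B}$, since $\mu_t^{A,B}$ arise from $\nu^{A,B}$ through the same Markov kernel. Near $t=0$, Lemma~\ref{lem:abs_ctn_main}(a) and the Cameron--Martin representation relative to $\rho_t$ should give $r_t\to1$.
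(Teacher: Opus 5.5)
Your proposal is correct and follows the paper's route. The paper likewise tests the weak continuity equation with $\log r_t$ for $(\mu_t^A,v_t^A)$, giving boundary term $\KL{\nu^A}{\nu^B}$, and with $r_t$ for $(\mu_t^B,v_t^B)$, giving boundary term $1-1=0$; it then eliminates $\int\!\int\partial_t\log r_t\,d\mu_t^A\,dt$ via $d\mu_t^A=r_t\,d\mu_t^B$, and justifies admissibility by conditioning onto the first $K$ coordinates (Doob's martingale convergence for $E[r_t\mid\mathcal{G}_K]$) followed by $C_c^\infty$ approximation. Your extra treatment of the endpoints (lower semicontinuity plus data processing at $t=1$, and $r_t\to 1$ at $t=0$) is a sensible refinement that the paper leaves implicit, since it simply asserts the boundary identities.
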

\vspace{-1.5em}
\begin{proof}
The proof proceeds in two steps. First,
 $r_t$ and $\log r_t$ are shown to be admissible test functions,
using Doob’s martingale convergence theorem 
for the projected Radon–Nikodym derivatives
and 
the density of $C_c^{\infty}$ functions in the $L^1$ spaces associated with finite-dimensional projections.
Second, the weak continuity equation is tested with $\log r_t$ for $\left(\mu_t^A, v_t^A\right)$ and with $r_t$ for $\left(\mu_t^B, v_t^B\right)$, then combining the resulting identities with  $\mu_t^A=r_t \mu_t^B$ gives the desired \gls{kl} identity.
See
Appendix \ref{sec:fkl_app} for the full proof.
\end{proof}

By Lemma \ref{lem:abs_ctn_main} $(a)$, 
we can rewrite the term $\nabla_x \log r_t(x)$ appearing in Lemma \ref{lem:kl_1_main} as
\begin{equation}
 \nabla_x  
\log r_t(x)=
 \nabla_x  
 \log \frac{d  \mu_t^A }{ d  \rho_t}(x) -
  \nabla_x  
  \log \frac{d  \mu_t^B }{ d  \rho_t }.
  \label{eq:diff_score}
  \end{equation}
  The next lemma links this 
  logarithmic gradients mismatch 
  to 
  velocity fields mismatch, 
  which is useful to derive 
   a velocity-only representation of $\KL{\nu^A}{\nu^B}$.
  



   \begin{lemma}
For the linear
interpolation 
used in \gls{ffm} \cite{kerriganFunctionalFlowMatching2023}, 
\begin{equation}
\begin{aligned}
&
\nabla_x  
\log \frac{d  \mu_t^A }{ d  \rho_t}(x) -
\nabla_x  
\log \frac{d  \mu_t^B }{ d  \rho_t }
=
\frac{t}{1-t} 
C^{-1}
\left(
v^{A}_t(x) - 
v^{B}_t(x)
\right)
\end{aligned}
\end{equation}
       \label{lem:link_main}
   \end{lemma}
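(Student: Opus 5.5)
We prove the identity separately for each of $A$ and $B$, in the form
\[
\nabla_x \log \frac{d\mu_t}{d\rho_t}(x)=\frac{t}{1-t}\,C^{-1}v_t(x)+(\text{a term independent of }\nu),
\]
for $\mu_t\in\{\mu_t^A,\mu_t^B\}$ with marginal velocity $v_t\in\{v^A_t,v^B_t\}$. Subtracting the two versions then cancels the $\nu$-independent term. As in the proof of Lemma~\ref{lem:abs_ctn_main}, we start from the conditional Gaussian representation. Conditionally on $X_1=x_1$, we have $X_t\sim\mathcal N(tx_1,(1-t)^2C)$, which is a shift of $\rho_t=\mathcal N(0,(1-t)^2C)$. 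Assumption~\ref{ass:2_main} gives $x_1\in H_{\mu_0}$ almost surely, so Cameron--Martin applies. Writing $C_t=(1-t)^2C$, this yields
\[
\frac{d\mu_t(\cdot\mid x_1)}{d\rho_t}(x)=\exp\Big(\langle C_t^{-1/2}tx_1,C_t^{-1/2}x\rangle-\tfrac12\|C_t^{-1/2}tx_1\|^2\Big).
\]
Integrating over $\nu$ gives $\frac{d\mu_t}{d\rho_t}(x)=\int \frac{d\mu_t(\cdot\mid x_1)}{d\rho_t}(x)\,d\nu(x_1)$.

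\textbf{Step 1: gradient of the density.} We differentiate under the integral. Formally, $\nabla_x$ of the exponent is $\frac{t}{(1-t)^2}C^{-1}x_1$. Dividing by the mixture density therefore gives
\[
\nabla_x\log\frac{d\mu_t}{d\rho_t}(x)=\frac{t}{(1-t)^2}\,C^{-1}\,\mathbb E[X_1\mid X_t=x],
\]
where the conditional expectation is taken under the posterior with weights proportional to the Cameron--Martin densities.

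\textbf{Step 2: express the velocity as a posterior mean.} The FFM marginal velocity is the conditional expectation of the conditional velocity $\dot X_t=X_1-X_0$. Using $X_0=(X_t-tX_1)/(1-t)$,
\[
v_t(x)=\mathbb E[X_1-X_0\mid X_t=x]=\frac{\mathbb E[X_1\mid X_t=x]-x}{1-t}.
\]
Hence $\mathbb E[X_1\mid X_t=x]=x+(1-t)v_t(x)$. Substituting into Step 1 gives
\[
\nabla_x\log\frac{d\mu_t}{d\rho_t}(x)=\frac{t}{(1-t)^2}C^{-1}x+\frac{t}{1-t}C^{-1}v_t(x).
\]
The first term does not depend on $\nu$, so it cancels in the difference $A-B$, which yields the claim.

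\textbf{Main obstacle.} The difficulty is rigor in infinite dimensions. The exponent $\langle C^{-1}x_1,x\rangle$ is only a measurable linear functional, not a continuous one, unless $x_1$ lies in the range of $C$. Likewise, $C^{-1}x$ is not defined for typical $x$. We would therefore state the gradient identity in the sense of Fréchet derivatives along cylindrical directions, or equivalently tested against $\mathrm{Cyl}$ functions, matching how $\nabla_x\log r_t$ enters Lemma~\ref{lem:kl_1_main}. Concretely, we first work with the finite-dimensional projections $\pi^K$, where all formulas are classical Gaussian-mixture computations. We then pass to the limit via the same martingale convergence used for Lemma~\ref{lem:kl_1_main}.

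In the difference, the problematic $C^{-1}x$ terms cancel exactly, and only $C^{-1}(v^A_t-v^B_t)$ remains. Justifying differentiation under the $\nu$-integral, via dominated convergence under the mild regularity conditions, is routine by comparison.
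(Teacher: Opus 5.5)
Your proposal is correct and follows essentially the same route as the paper: Cameron--Martin for the shifted Gaussian $\mathcal{N}(tx_1,(1-t)^2C)$ combined with Bayes' rule gives $\nabla_x\log\frac{d\mu_t}{d\rho_t}(x)=\frac{t}{(1-t)^2}C^{-1}\mathbb{E}[X_1\mid X_t=x]$. The FFM formula $v_t(x)=(\mathbb{E}[X_1\mid X_t=x]-x)/(1-t)$ then converts the difference of posterior means into $(1-t)(v_t^A-v_t^B)$. The only cosmetic difference is that you pass through the $\nu$-independent term $\frac{t}{(1-t)^2}C^{-1}x$, which you rightly flag as purely formal, whereas the paper subtracts the two posterior means directly and never writes $C^{-1}x$.
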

   \vspace{-1.5em}
   \begin{proof}
       The proof first uses the Gaussian-mixture representation of $\mu_t^{A, B}$, the Cameron-Martin theorem on translated Gaussian measures, and Bayes' rule to identify
$$
\nabla_x \log \frac{d \mu_t^{A, B}}{d \rho_t}(x)=\frac{t}{(1-t)^2} C^{-1} \mathbb{E}_{P^{A, B}}\left[X_1 \mid X_t=x\right]
$$

and then combines this with the  velocity formula given by linear interpolation in \gls{ffm}
$$
v_t^{A, B}(x)=\frac{\mathbb{E}_{P^{A, B}}\left[X_1 \mid X_t=x\right]-x}{1-t}
$$
to obtain the desired relation between score mismatch and velocity mismatch.
See
Appendix \ref{sec:fkl_app} for the full proof.\qedhere

   \end{proof}

Finally, with $ \nabla_x  
\log r_t$ in  Lemma~\ref{lem:kl_1_main}
replaced by velocity fields mismatch in Lemma~\ref{lem:link_main}, 
we obtain 
a velocity-only expression for   what we label the \gls{fkl}:
\begin{theorem}
For the linear interpolation used in \gls{ffm} \cite{kerriganFunctionalFlowMatching2023}, under Assumptions~\ref{ass:1_main}--\ref{ass:2_main} and mild regularity conditions, we have
\begin{equation}
\begin{aligned}
&
\KL{\nu^A}{\nu^B}
=
\int_0^1\int_{H}
\frac{t}{1-t}\,
\left\|v_t^A(x)-v_t^B(x)\right\|_{
H_{\mu_0}
}^2\,
d\mu_t^A(x)\,dt.
\end{aligned}
  \label{eq:finalKL}
\end{equation}
\end{theorem}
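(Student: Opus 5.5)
The plan is to combine the three lemmas directly. Lemma~\ref{lem:abs_ctn_main}(b) guarantees that $r_t = d\mu_t^A/d\mu_t^B$ is well defined. Lemma~\ref{lem:kl_1_main} gives
\[
\KL{\nu^A}{\nu^B}=\int_0^1\int_H \bigl\langle v_t^A(x)-v_t^B(x),\,\nabla_x\log r_t(x)\bigr\rangle\,d\mu_t^A(x)\,dt .
\]
By Lemma~\ref{lem:abs_ctn_main}(a), for $t\in[0,1)$ both $\mu_t^A$ and $\mu_t^B$ are absolutely continuous with respect to the common Gaussian reference $\rho_t$. Hence the chain rule for Radon--Nikodym derivatives gives $r_t = \frac{d\mu_t^A}{d\rho_t}\big/\frac{d\mu_t^B}{d\rho_t}$, $\mu_t^A$-a.e. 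Taking logarithms and gradients yields the decomposition~\eqref{eq:diff_score}. The endpoint $t=1$ is a single time and has Lebesgue measure zero, so it can be dropped from the time integral.

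Next I substitute Lemma~\ref{lem:link_main}, which replaces $\nabla_x\log r_t$ by $\frac{t}{1-t}C^{-1}(v_t^A-v_t^B)$. The integrand becomes $\frac{t}{1-t}\langle w_t, C^{-1}w_t\rangle_H$ with $w_t := v_t^A - v_t^B$.

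To recognize this as a Cameron--Martin norm, I write $C^{-1} = C^{-1/2}C^{-1/2}$ and use self-adjointness: $\langle w, C^{-1}w\rangle_H = \|C^{-1/2}w\|_H^2 = \|w\|_{H_{\mu_0}}^2$. This uses the standard identification $H_{\mu_0} = C^{1/2}H$ with $\langle h,k\rangle_{H_{\mu_0}} = \langle C^{-1/2}h, C^{-1/2}k\rangle_H$. Integrating over $x$ against $\mu_t^A$ and over $t\in(0,1)$ then gives~\eqref{eq:finalKL}.

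The main obstacle is justifying that $w_t(x)\in H_{\mu_0}$, so that $C^{-1}w_t$ makes sense even though $C^{-1}$ is unbounded. I would argue this from the velocity formula quoted in the proof of Lemma~\ref{lem:link_main}. The $-x/(1-t)$ terms cancel in the difference, leaving
\[
w_t(x) = \frac{\mathbb{E}_{P^A}[X_1\mid X_t=x]-\mathbb{E}_{P^B}[X_1\mid X_t=x]}{1-t}.
\]
By Assumption~\ref{ass:2_main}, $X_1\in H_{\mu_0}$ almost surely under both laws. The conditional expectations are then Bochner integrals of $H_{\mu_0}$-valued variables and lie in $H_{\mu_0}$, provided $\mathbb{E}\|X_1\|_{H_{\mu_0}}<\infty$. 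I would include this integrability among the ``mild regularity conditions''.

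Finally, I need the inner product pairing in Lemma~\ref{lem:kl_1_main} to be interpretable as $\langle w, C^{-1}w\rangle$ with both sides finite. A Fubini/Tonelli step should handle this, since the integrand is nonnegative and the resulting identity holds in $[0,\infty]$. The singular weight $t/(1-t)$ near $t=1$ then needs no separate control beyond the finiteness of the KL, which holds by Assumption~\ref{ass:1_main}.
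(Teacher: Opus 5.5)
Your proposal is correct and is essentially the paper's argument: the paper obtains the theorem in one step by substituting Lemma~\ref{lem:link_main} (through the decomposition~\eqref{eq:diff_score} given by Lemma~\ref{lem:abs_ctn_main}(a)) into Lemma~\ref{lem:kl_1_main} and reading $\frac{t}{1-t}\langle w_t, C^{-1}w_t\rangle_H$ as $\frac{t}{1-t}\|w_t\|_{H_{\mu_0}}^2$, and your additional checks go beyond what the paper verifies. Two small corrections to those extras: $w_t(x)\in H_{\mu_0}=C^{1/2}H$ only gives meaning to $C^{-1/2}w_t$ (hence to the norm), whereas $C^{-1}w_t\in H$, which the pairing in Lemma~\ref{lem:kl_1_main} needs, requires $w_t(x)$ to lie in the smaller range $C(H)$; and Assumption~\ref{ass:1_main} alone does not make $\KL{\nu^A}{\nu^B}$ finite (absolute continuity still allows infinite KL), so both points have to be absorbed into the ``mild regularity conditions,'' as the paper implicitly does.
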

In the sequel, we use Equation~\eqref{eq:finalKL} for 
\gls{ffm}-based 
\gls{fkl} estimation.
To this end, we use a MINO-based conditional neural operator~\citep{shiMeshInformedNeuralOperator} to jointly approximate the two velocity fields $v^A$ and $v^B$, where a binary conditioning variable $c$ specifies which velocity field is being approximated.
To cancel the singularity from $\frac{t}{1-t}$ in \Cref{eq:finalKL} and thereby stabilize stable \gls{fkl} estimation, 
we enforce the boundary condition $v_\theta(x,1)=x$
through a subtraction-based parameterization~\citep{hu2025improvingrectifiedflowboundary}, so that the difference between the two learned fields vanishes at $t=1$. 
The model is trained with the conditional flow-matching objective, and the resulting \gls{fkl} is estimated by Monte Carlo evaluation of \Cref{eq:finalKL} from samples of $\nu^A$, $\mu_0$, and $t\sim\mathcal{U}[0,1]$, without requiring simulating the full generative dynamics via ODE integration.
Full implementation details of \gls{fkl} estimation are provided in Appendix~\ref{sec:implementation_fkl}.

\section{Validation of Function KL Estimation on Special Cases}

To validate our theory and \gls{kl} estimation pipeline, we conduct an extensive comparison between analytic and estimated \gls{kl} divergences in two special cases where the analytic \gls{kl} is available. 
This enables a direct evaluation of the estimation accuracy. Specifically, we consider
\textit{(i) Gaussian measures:}
$  
\nu^{A} = \mathcal{N}\!\big(s \sin(2\pi f x), \mathcal{R}\big),
~ 
\nu^{B} = \mathcal{N}(0, \mathcal{R})
$,
where $\mathcal{R}$ is a Matérn covariance operator with smoothness parameter $\alpha_1=3.5$, and we vary the codomain dimension $D$, frequency $f$, and scale $s$;
and 
\textit{(ii) linear \gls{sde}s:}
$ 
\nu^{A,B}: ~ dY_t = c^{A,B} Y_t\,dt + g\,dW_t $,
where we vary the codomain dimension $D$, drift $c^{A,B}$, and diffusion $g$.
Derivations of the closed-form ground truth \gls{kl} for both cases are provided in Appendix~\ref{app:special_cases}; 
all KL estimates are computed with 
both the training and estimation resolutions of \gls{ffm} 
set to $256$.
For the centered noise measure $\mu_0$ used in  \gls{fkl} estimation, we choose the covariance operator $C$ as follows: in the \textit{(i) Gaussian special case}, we use Matérn with smoothness $\alpha_0=0.5$; in the \textit{(ii) SDE special case} and in all experiments in the next section, 
we estimate the data's Fourier spectrum using empirically determined variances for each Fourier coefficient, then scale each coefficient by the wavenumber magnitude $k=\|m\|_2$ to obtain a rougher noise spectrum.
As shown in Table~\ref{fig:special_cases_campaign}, across a range of settings, our estimated \gls{kl} matches the closed-form ground truth very closely.
This provides compelling empirical validation that our functional-space \gls{kl} formulation and its mathematical derivation are correct, and that the proposed estimation procedure is reliable in practice.

Appendix~\ref{sec:ablation} provides a detailed ablation analysis, including sensitivity to the estimation factors and the choice of noise covariance operator $C$. Overall, the results show that \gls{fkl} estimation is accurate and numerically stable, supporting our discretization choices and underscoring the importance of choosing a noise covariance that is trace class on $H$ and satisfies the Cameron-Martin support assumption (Assumption~\ref{ass:2_main}).

\begin{figure}[htbp]
\centering

\begin{minipage}[c][5cm][c]{0.71\linewidth} 
\centering
\scriptsize
\begin{tabular*}{\linewidth}{@{\extracolsep{\fill}}cccc|cc|cc@{}}
\toprule
\multirow{2}{*}{Case} & \multirow{2}{*}{$D$} & \multirow{2}{*}{$f$} & \multirow{2}{*}{$s$} & \multicolumn{2}{c|}{$\mathrm{KL}(\nu^A \| \nu^B)$} & \multicolumn{2}{c}{$\mathrm{KL}(\nu^B \| \nu^A)$} \\
& & & & Analytic & Estimated & Analytic & Estimated \\
\midrule
1 & 1 & 1 & 0.5 & 3.64 & 3.95 & 3.64 & 3.88 \\
2 & 1 & 1 & 1.5 & 32.79 & 32.61 & 32.79 & 33.31 \\
3 & 1 & 3 & 1.5 & 50.00 & 49.58 & 50.00 & 49.00 \\
4 & 1 & 5 & 1.5 & 103.74 & 104.46 & 103.74 & 100.58 \\
5 & 2 & 1 & 0.5 & 7.29 & 6.77 & 7.29 & 6.72 \\
6 & 3 & 1 & 0.5 & 10.93 & 10.48 & 10.93 & 10.49 \\
7 & 5 & 1 & 0.5 & 18.22 & 22.32 & 18.22 & 22.28 \\
8 & 10 & 1 & 0.5 & 36.43 & 45.12 & 36.43 & 45.73 \\
\bottomrule
\end{tabular*}

\end{minipage}
\hfill
\begin{minipage}[c][5cm][c]{0.28\linewidth} 
\centering
\includegraphics[width=\linewidth]{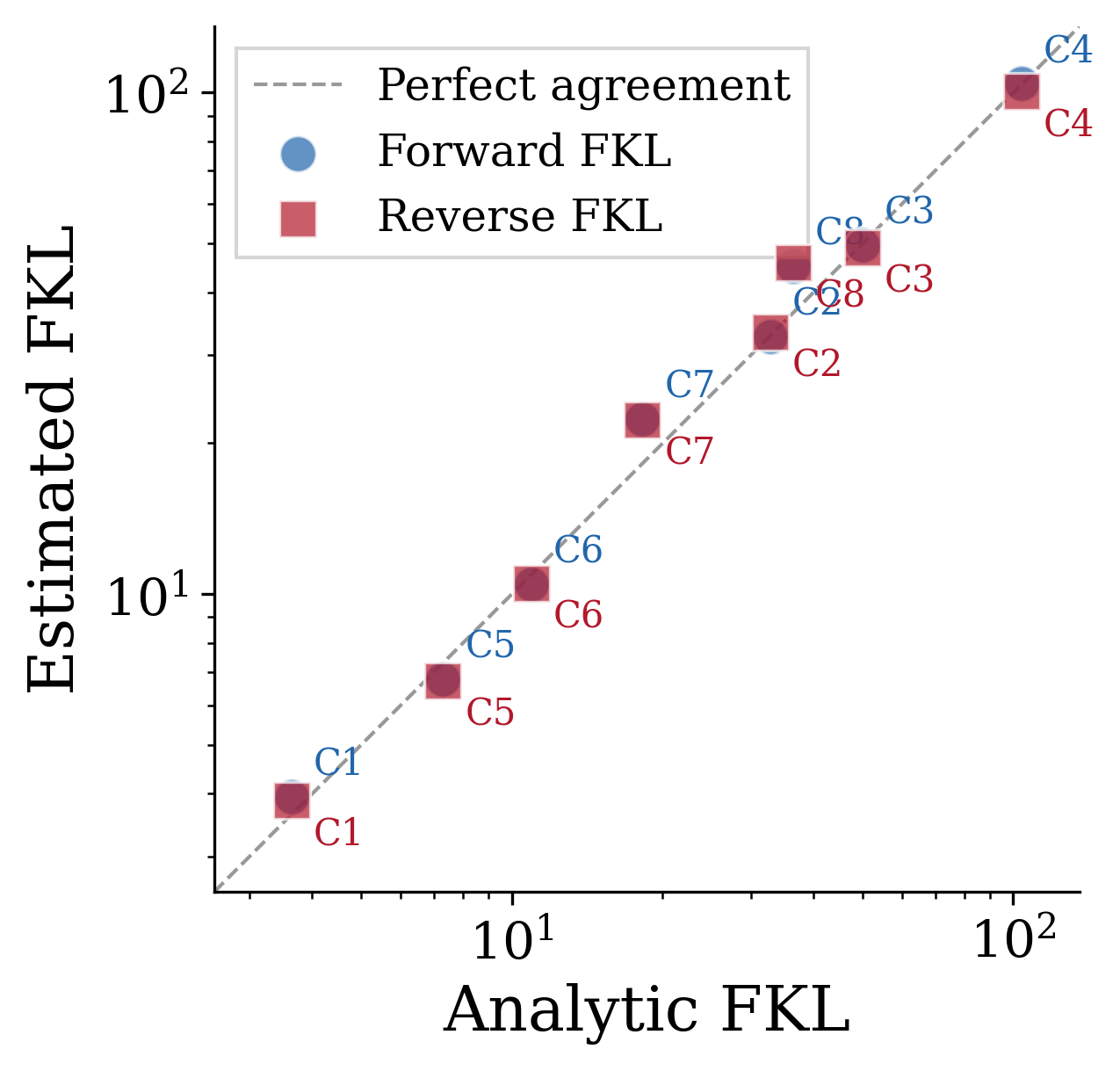}

\end{minipage}

\vspace{-4em}

\begin{minipage}[c][5cm][c]{0.71\linewidth} 
\centering
\scriptsize
\begin{tabular*}{\linewidth}{@{\extracolsep{\fill}}ccccc|cc|cc@{}}
\toprule
\multirow{2}{*}{Case} & 
\multirow{2}{*}{$D$} & 
\multirow{2}{*}{$c_{\mathcal A}$} & 
\multirow{2}{*}{$c_{\mathcal B}$} & 
\multirow{2}{*}{$g$} & 
\multicolumn{2}{c|}{$\mathrm{KL}(\nu^A \| \nu^B)$} & 
\multicolumn{2}{c}{$\mathrm{KL}(\nu^B \| \nu^A)$} \\
& & & & & 
Analytic & Estimated & 
Analytic & Estimated \\
\midrule
1 & 1 & 0.01 & 1.50 & 0.75 & 8.93 & 8.87 & 54.71 & 53.77 \\
2 & 1 & 0.10 & 2.00 & 0.75 & 15.89 & 13.89 & 186.19 & 200.64 \\
3 & 2 & 0.01 & 1.50 & 0.75 & 17.86 & 18.05 & 109.43 & 127.67 \\
4 & 3 & 0.01 & 1.50 & 0.75 & 26.79 & 34.58 & 164.14 & 145.06 \\
5 & 5 & 0.01 & 1.50 & 1.00 & 26.34 & 32.82 & 158.22 & 135.28 \\
\bottomrule
\end{tabular*}

\end{minipage}
\hfill
\begin{minipage}[c][5cm][c]{0.28\linewidth} 
\centering
\includegraphics[width=\linewidth]{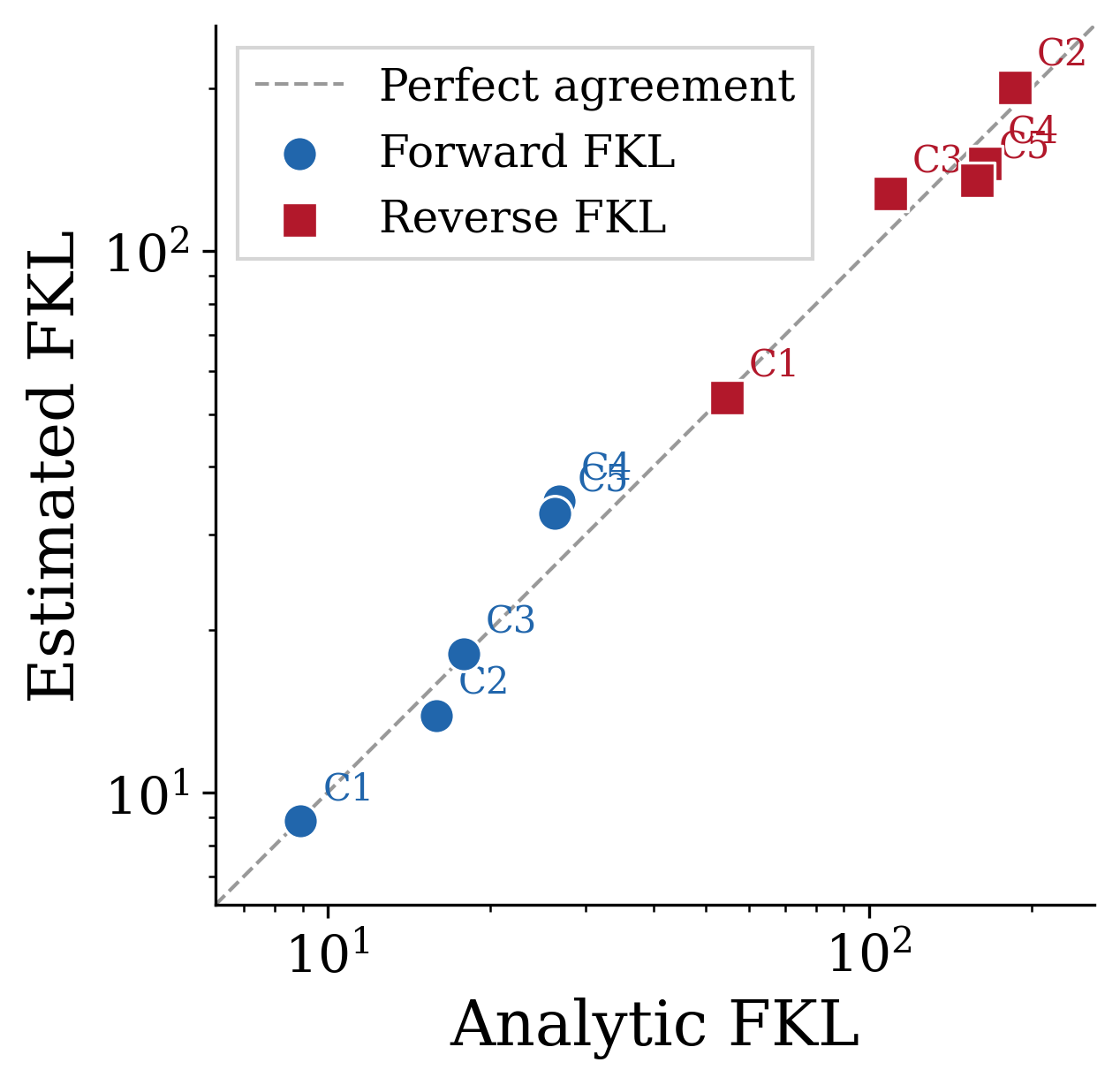}

\end{minipage}

\vspace{-2.5em}

\captionof{table}{
\textbf{Comparison of analytic and estimated \gls{fkl}.} Top: Gaussian case. Bottom: linear \gls{sde} case.
Tables (left) report numerical values; plots (right) compare analytic vs estimated values.
}
\label{fig:special_cases_campaign}
\end{figure}

\section{Application of Function KL
on
TI Evaluation}

\subsection{TI Benchmark Setup}


\paragraph{\gls{ti} methods.}
We consider several recent \gls{sde}-based methods. 
\gls{vsb}~\citep{ipf}, \gls{sbirr}~\citep{shen2025multi}, and \gls{msbm}~\citep{park2025multimarginalschrodingerbridgematching} formulate \gls{ti} as a constrained stochastic optimal control problem, which via the Hopf-Cole transform reduces to learning forward and backward diffusion processes. 
\gls{mfl}~\citep{chizat2022trajectory} relaxes the multi-marginal fitting constraints and solves the resulting entropy-regularized variational problem using mean-field Langevin dynamics. 
\gls{eam} fixes the marginal densities and therefore optimizes only one variable. 
As an \gls{ode}-based baseline, we include \gls{tigon}, which formulates \gls{ti} as a deterministic optimal control problem governed by the continuity equation with multi-marginal constraints.

For each method, we sample trajectories after training. \gls{sbirr} and \gls{vsb} simulate trajectory segments independently between consecutive training snapshots and concatenate them into full trajectories. \gls{tigon}, \gls{msbm}, and \gls{eam} sample an initial state at an endpoint and then simulate the trajectory end-to-end.
\gls{mfl} connects optimized snapshots at training time points using conditional reference Brownian bridges. 
We denote by \gls{val} the validation trajectories resampled from the same distribution as the ground-truth trajectories.


\vspace{-0.5em}
\paragraph{Datasets.} 
We use both synthetic and real single-cell datasets. 
The synthetic datasets, generated from known \gls{sde}s, include \textit{(i)}  \emph{Lotka-Volterra}, modeling predator--prey dynamics; \textit{(ii)} \emph{Repressilator}~\citep{shen2025multi}, exhibiting cyclic behavior; and \textit{(iii)} \emph{Petal}~\citep{neklyudov2023action}, capturing bifurcations and merges similar to those in cellular differentiation. 
For \emph{Lotka--Volterra} and \emph{Repressilator}, odd-indexed snapshots are used for training and even-indexed snapshots for validation. In contrast, for \emph{Petal}, all snapshot times are used for both training and validation, but the validation set consists of samples different from those used for training.
Real data consists of four scRNA-seq benchmarks, preprocessed following \cite{shen2025multi}: \textit{(i)} \gls{eb}~\cite{Moon2019VisualizingSA}, \textit{(ii)} \gls{hesc}~\cite{hescdata}, \textit{(iii)} \gls{me}~\cite{Mouse_Erythroid_dataset}, and \textit{(iv)} \gls{hf}~\cite{Human_Fibroblast_dataset}. 
Each dataset is projected to 5 dimensions and split into 3 equally spaced training snapshots and 2 validation snapshots.

\vspace{-0.5em}
\paragraph{Evaluation metrics.} 
The \gls{ti} literature typically evaluates performance via marginal reconstruction on held-out validation snapshots, using metrics such as \gls{emd} ($W_1$), \gls{w2}, \gls{swd}, \gls{mwd}, and RBF-\gls{mmd} (see Appendix~\ref{app:metrics}), all of which we report here. 
To the best of our knowledge, this is the first benchmark using such an extensive set of standard marginal metrics.

However, marginal metrics cannot detect discrepancies at the level of full trajectory distributions. 
We therefore estimate \gls{fkl} between the ground-truth trajectory distribution and that produced by each method by learning velocity fields from complete trajectories.
For synthetic datasets, ground-truth trajectories are simulated from the known \gls{sde}s, whereas for real scRNA-seq datasets, where only snapshot samples are observed, we use the trajectory measure inferred by \gls{sbirr}~\citep{shen2025multi} from all snapshots as a biologically motivated reference.
This is consistent with standard assumptions in cellular development modeling, since \gls{sbirr} learns stochastic dynamics aligned with Waddington’s landscape view of differentiation; under this proxy, \gls{fkl} quantifies how closely alternative methods match a biologically interpretable view of developmental dynamics.

\subsection{Results with Marginal-Based Metrics}

\paragraph{Finding 1: Marginal-based rankings vary substantially with \textit{(i)} validation time-point,
\textit{(ii)}  metric choice, and \textit{(iii)} metric hyperparameters.}
Our benchmark highlights that inconsistencies are intrinsic to marginal-based evaluation. 
\textit{(i)
The same method can rank best at one validation snapshot and substantially worse at another, even though methods are trained to learn the full underlying dynamics.}
This pattern is clear on both synthetic and real datasets. 
On Lotka-Volterra, \gls{sbirr} leads at early snapshots but is overtaken by \gls{vsb} and \gls{tigon} at later ones. 
Repressilator shows even stronger reversals, with \gls{sbirr} leading at some snapshots and \gls{tigon} at others. 
Petal is more stable, but still not invariant across $\tau$ (Table~\ref{tab:metrics_syn}). 
The same effect appears on real datasets, where the leading method at early snapshot often differs from that at later snapshot  (Table~\ref{tab:metrics_real}).
\textit{(ii)
Even at a fixed validation snapshot, different metrics on the same marginal can induce different rankings.}
For example, at the last validation snapshot of Lotka-Volterra, \gls{tigon} ranks best under \gls{emd} and \gls{mmd}, but not under the other metrics  (Table~\ref{tab:metrics_lv}). Similar metric-dependent reversals are also observed at early snapshots on Petal, \gls{eb}, and \gls{me}.
\textit{(iii)  
Some marginal metrics are themselves sensitive to hyperparameter choices, so rankings can vary even within a fixed metric family.} Kernel-based distances illustrate this effect: their usefulness depends on whether the kernel captures a notion of similarity aligned with the data.
For \gls{mmd} with RBF kernel, the statistic is informative only when Euclidean distances in data space are meaningful, and even then it can remain highly sensitive to bandwidth.
With a fixed bandwidth, \gls{mmd} may saturate and become insensitive to growing discrepancies. On Repressilator, this appears for \gls{eam}: while \gls{ot}-based metrics such as \gls{emd} and \gls{w2} increase steadily, reflecting transport over larger spatial scales, \gls{mmd} remains nearly constant  (Table~\ref{tab:metrics_repre}). 
Taken together, these results suggest that marginal-based rankings are highly sensitive to evaluation choices, making them difficult to interpret robustly.

\vspace{-0.5em}
\paragraph{Finding 2: Matching marginals does not imply correct dynamics.}
Marginal evaluation is fundamentally non-identifiable: many distinct path measures can share the same finite set of time-indexed marginals, so matching these marginals does not determine the underlying dynamics.
Therefore, 
\textit{(i) strong marginal agreement can mask incorrect temporal behavior.}
A model may achieve competitive marginal scores without recovering the true dynamics. 
On Repressilator, for example, \gls{tigon} attains competitive marginal distances despite generating a smooth spiral rather than the characteristic three-gene oscillations (Table~\ref{tab:metrics_repre}). 
%
\textit{(ii) Marginal comparisons are further affected by method-specific trajectory generation procedures.}
Choices such as the numerical integration direction can improve agreement at some validation snapshots while degrading it at others. 
For example, on Lotka-Volterra, \gls{tigon} with backward integration performs well at the last validation snapshot (Table~\ref{tab:metrics_lv}). 
On Repressilator, \gls{sbirr} performs well at early snapshots but deteriorates later, whereas \gls{vsb} scores well at the last snapshot despite missing the correct dynamics (Table~\ref{tab:metrics_repre}).


To further assess both the statistical robustness of marginal-metric rankings and their relation to dynamical fidelity, we examine the \textit{\gls{cd}} plots on the three synthetic datasets (Figure~\ref{fig:cd_syn}). 
On Lotka-Volterra and Repressilator, the \gls{cd} plots show that, except for \gls{val}, most differences are not statistically significant, underscoring the statistical weakness of marginal-metric rankings. 
On Petal, several differences become significant, yet \gls{cd} ranks \gls{sbirr} ahead of \gls{val}, suggesting that even statistically significant marginal-based rankings need not align with trajectory-level fidelity. 
Overall, these results \textit{support our observations}: marginal-based metrics often yield rankings that are not only statistically indistinguishable but also misaligned with trajectory-level fidelity.
As a result, the fact that no method consistently recovers the correct dynamics is largely invisible under marginal evaluation, which can give an incoherent and misleading view of performance.

\subsection{Results with Functional KL}

Next, we analyze the behavior of all methods for all datasets under a new light, by considering the divergence between trajectory distributions, which is unlocked by our estimator.
For all the result tables, we report two additional columns: the forward $\KL{\nu^A}{\nu^B}$, and the reverse $\KL{\nu^B}{\nu^A}$, where $\nu^A$ stands for the reference, ground-truth path measure, and $\nu^B$ is the path-measure inferred by each \gls{ti} method. Recall that these are learned through our estimation method, through the lenses of the parametric velocity fields trained on sampled trajectories.

\begin{figure*}[htbp] 
\centering












\includegraphics[width=\linewidth]{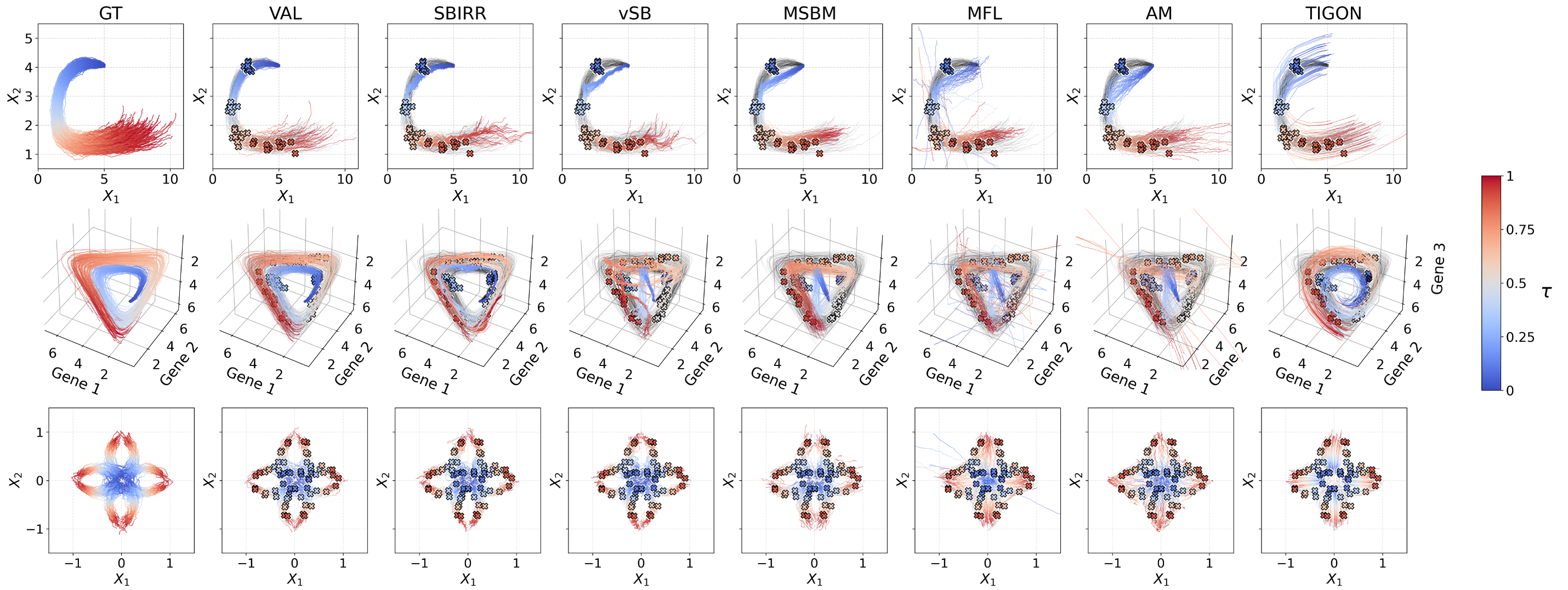}

\vspace{-0.6em}
\caption{Trajectories generated by different \gls{ti} methods across three synthetic datasets, from top to bottom: Lotka--Volterra, Repressilator, and Petal.
Colored curves show generated trajectories, while black curves show \gls{gt} trajectories. Colored crosses mark sampled validation snapshots used to compute marginal metrics.
}
\label{fig:lv_repr_petal_trajs}

\begin{minipage}{\textwidth}
\centering
\scriptsize
\setlength{\tabcolsep}{2pt}
\renewcommand{\arraystretch}{0.95}
\resizebox{\textwidth}{!}
{
\begin{subfigure}[t]{\textwidth} 
\centering
\resizebox{\linewidth}{!}{
\begin{tabular}{lcccccccccccccccccccc|c|c}
\toprule
\textbf{Method} & \multicolumn{5}{c}{\textbf{$\tau=0.125$}} & \multicolumn{5}{c}{\textbf{$\tau=0.375$}} & \multicolumn{5}{c}{\textbf{$\tau=0.625$}} & \multicolumn{5}{c}{\textbf{$\tau=0.875$}} & \multicolumn{2}{c}{\textbf{FKL}} \\
\cmidrule(lr){2-6} \cmidrule(lr){7-11} \cmidrule(lr){12-16} \cmidrule(lr){17-21}\cmidrule(lr){22-23}
 & $EMD$ & $W2$ & SWD & MWD & MMD & $EMD$ & $W2$ & SWD & MWD & MMD & $EMD$ & $W2$ & SWD & MWD & MMD & $EMD$ & $W2$ & SWD & MWD & MMD & $\mathrm{KL}(\nu^A \| \nu^B)$ & $\mathrm{KL}(\nu^B \| \nu^A)$  \\
\midrule
\gls{val} & $0.018$ & $0.024$ & $0.011$ & $0.015$ & $0.007$ & $0.026$ & $0.035$ & $0.015$ & $0.020$ & $0.013$ & $0.036$ & $0.049$ & $0.021$ & $0.027$ & $0.012$ & $0.061$ & $0.082$ & $0.035$ & $0.044$ & $0.017$ & 
%
$0.271$ & $0.268$
\\
\midrule
\gls{sbirr} & $\mathbf{0.179}$ & $\mathbf{0.186}$ & $\mathbf{0.125}$ & $\mathbf{0.178}$ & $\mathbf{0.174}$ & $\mathbf{0.090}$ & $\mathbf{0.105}$ & $\mathbf{0.064}$ & $\mathbf{0.070}$ & $\mathbf{0.051}$ & $\mathbf{0.254}$ & $\mathbf{0.281}$ & $\mathbf{0.200}$ & $\mathbf{0.260}$ & $0.198$ & $0.368$ & $0.420$ & $0.273$ & $0.375$ & $0.198$ & 
%
$\mathbf{43.352}$ & $\mathbf{42.779}$
\\
\gls{vsb} & $1.009$ & $1.022$ & $0.764$ & $1.015$ & $0.874$ & $0.529$ & $0.568$ & $0.381$ & $0.516$ & $0.482$ & $0.306$ & $0.366$ & $0.254$ & $0.311$ & $0.173$ & $0.272$ & $\mathbf{0.323}$ & $\mathbf{0.199}$ & $\mathbf{0.244}$ & $0.156$ & 
%
$165.057$ & $126.886$
\\
\gls{msbm} & $0.784$ & $0.786$ & $0.595$ & $0.784$ & $0.716$ & $0.321$ & $0.325$ & $0.232$ & $0.323$ & $0.306$ & $0.486$ & $0.503$ & $0.361$ & $0.501$ & $0.426$ & $0.554$ & $0.637$ & $0.452$ & $0.625$ & $0.388$ & 
%
$79.872$ & $46.023$
\\
\gls{mfl} & $1.004$ & $1.140$ & $0.801$ & $0.905$ & $0.615$ & $0.393$ & $0.639$ & $0.458$ & $0.590$ & $0.247$ & $0.402$ & $0.573$ & $0.396$ & $0.490$ & $0.265$ & $0.659$ & $0.958$ & $0.698$ & $0.875$ & $0.240$ & 
%
$43.929$ & $130.579$
\\
\gls{eam} & $0.862$ & $0.864$ & $0.655$ & $0.863$ & $0.763$ & $0.342$ & $0.406$ & $0.299$ & $0.381$ & $0.277$ & $0.569$ & $1.079$ & $0.804$ & $1.072$ & $0.239$ & $1.134$ & $1.958$ & $1.449$ & $1.945$ & $0.203$ & 
%
$44.914$ & $55.488$
\\
\gls{tigon} & $0.446$ & $0.500$ & $0.340$ & $0.390$ & $0.293$ & $0.323$ & $0.386$ & $0.256$ & $0.361$ & $0.174$ & $0.266$ & $0.381$ & $0.257$ & $0.330$ & $\mathbf{0.131}$ & $\mathbf{0.245}$ & $0.351$ & $0.241$ & $0.312$ & $\mathbf{0.104}$ &
%
$179.367$ & $65.442$
\\
\bottomrule
\end{tabular}
}
\vspace{-0.5em}
\caption{Lotka–Volterra dataset.} 
\label{tab:metrics_lv}
\end{subfigure} 
}
\end{minipage}

\begin{minipage}{\textwidth}
\centering
\scriptsize
\setlength{\tabcolsep}{2pt}
\renewcommand{\arraystretch}{0.95}
\resizebox{\textwidth}{!}
{
\begin{subfigure}[t]{\textwidth}
\centering
\resizebox{\linewidth}{!}{
\begin{tabular}{lccccccccccccccccccccccccc|c|c}
\toprule
\textbf{Method} & \multicolumn{5}{c}{\textbf{$\tau=0.1$}} & \multicolumn{5}{c}{\textbf{$\tau=0.3$}} & \multicolumn{5}{c}{\textbf{$\tau=0.5$}} & \multicolumn{5}{c}{\textbf{$\tau=0.7$}} & \multicolumn{5}{c}{\textbf{$\tau=0.9$}} & \multicolumn{2}{c}{\textbf{FKL}} \\
\cmidrule(lr){2-6} \cmidrule(lr){7-11} \cmidrule(lr){12-16} \cmidrule(lr){17-21} \cmidrule(lr){22-26}\cmidrule(lr){27-28}
 & $EMD$ & $W2$ & SWD & MWD & MMD & $EMD$ & $W2$ & SWD & MWD & MMD & $EMD$ & $W2$ & SWD & MWD & MMD & $EMD$ & $W2$ & SWD & MWD & MMD & $EMD$ & $W2$ & SWD & MWD & MMD & $\mathrm{KL}(\nu^A \| \nu^B)$ & $\mathrm{KL}(\nu^B \| \nu^A)$ \\
\midrule
\gls{val} & $0.036$ & $0.044$ & $0.014$ & $0.019$ & $0.014$ & $0.056$ & $0.070$ & $0.023$ & $0.034$ & $0.011$ & $0.080$ & $0.099$ & $0.033$ & $0.052$ & $0.014$ & $0.094$ & $0.114$ & $0.042$ & $0.060$ & $0.023$ & $0.116$ & $0.147$ & $0.055$ & $0.081$ & $0.030$ & 
$0.015$ & $0.014$
%
\\
\midrule
\gls{sbirr} & $\mathbf{0.390}$ & $\mathbf{0.413}$ & $\mathbf{0.256}$ & $\mathbf{0.401}$ & $\mathbf{0.366}$ & $0.865$ & $\mathbf{0.899}$ & $0.558$ & $0.868$ & $0.688$ & $\mathbf{0.442}$ & $\mathbf{0.475}$ & $\mathbf{0.263}$ & $\mathbf{0.345}$ & $\mathbf{0.291}$ & $\mathbf{0.742}$ & $\mathbf{0.815}$ & $0.477$ & $0.750$ & $\mathbf{0.390}$ & $1.832$ & $1.894$ & $1.055$ & $1.852$ & $0.545$ &
$\mathbf{23.519}$ & $\mathbf{25.242}$
%
\\
\gls{vsb} & $1.884$ & $1.889$ & $1.174$ & $1.881$ & $1.270$ & $1.227$ & $1.248$ & $0.724$ & $1.156$ & $0.880$ & $0.983$ & $1.017$ & $0.596$ & $0.972$ & $0.611$ & $1.059$ & $1.122$ & $0.640$ & $0.948$ & $0.605$ & $0.944$ & $1.078$ & $0.576$ & $0.901$ & $0.537$ & 
$82.933$ & $79.014$
%
\\
\gls{msbm} & $1.465$ & $1.469$ & $0.790$ & $1.464$ & $1.120$ & $1.324$ & $1.348$ & $0.694$ & $1.306$ & $0.982$ & $1.011$ & $1.106$ & $0.643$ & $1.062$ & $0.713$ & $0.995$ & $1.128$ & $0.571$ & $0.992$ & $0.614$ & $1.225$ & $1.400$ & $0.698$ & $1.288$ & $0.678$ & 
%
$90.011$ & $49.395$
\\
\gls{mfl} & $1.796$ & $1.914$ & $1.101$ & $1.529$ & $0.931$ & $1.737$ & $1.837$ & $1.083$ & $1.552$ & $0.869$ & $1.500$ & $1.627$ & $0.975$ & $1.391$ & $0.652$ & $1.361$ & $1.510$ & $0.823$ & $1.277$ & $0.523$ & $1.173$ & $1.317$ & $0.718$ & $1.036$ & $0.466$ & 
$63.077$ & $84.621$
%
\\
\gls{eam} & $1.454$ & $1.456$ & $0.894$ & $1.419$ & $1.111$ & $1.355$ & $1.385$ & $0.840$ & $1.321$ & $0.964$ & $2.208$ & $5.554$ & $3.119$ & $5.278$ & $0.618$ & $7.566$ & $19.112$ & $11.037$ & $18.390$ & $0.551$ & $17.516$ & $39.206$ & $22.624$ & $37.685$ & $\mathbf{0.352}$ &
$66.901$ & $126.248$
%
\\
\gls{tigon} & $1.074$ & $1.135$ & $0.656$ & $0.915$ & $0.687$ & $\mathbf{0.851}$ & $0.952$ & $\mathbf{0.501}$ & $\mathbf{0.760}$ & $\mathbf{0.480}$ & $0.821$ & $0.884$ & $0.496$ & $0.720$ & $0.441$ & $0.813$ & $0.860$ & $\mathbf{0.444}$ & $\mathbf{0.663}$ & $0.431$ & $\mathbf{0.842}$ & $\mathbf{0.916}$ & $\mathbf{0.483}$ & $\mathbf{0.677}$ & $0.430$ & 
$54.515$ & $42.844$
%
\\
\bottomrule
\end{tabular}
}
\vspace{-0.5em}
\caption{Repressilator dataset.}
\label{tab:metrics_repre}
\end{subfigure}
}
  \end{minipage}

 \begin{minipage}{\textwidth}
\centering
\scriptsize
\setlength{\tabcolsep}{2pt}
\renewcommand{\arraystretch}{0.95}
\resizebox{\textwidth}{!}
{
\begin{subfigure}[t]{\textwidth}
\centering
\resizebox{\linewidth}{!}{
\begin{tabular}{lccccccccccccccccccccccccc|c|c}
\toprule
\textbf{Method} & \multicolumn{5}{c}{\textbf{$\tau=0$}} & \multicolumn{5}{c}{\textbf{$\tau=0.25$}} & \multicolumn{5}{c}{\textbf{$\tau=0.5$}} & \multicolumn{5}{c}{\textbf{$\tau=0.75$}} & \multicolumn{5}{c}{\textbf{$\tau=1$}} & \multicolumn{2}{c}{\textbf{FKL}}\\
\cmidrule(lr){2-6} \cmidrule(lr){7-11} \cmidrule(lr){12-16} \cmidrule(lr){17-21} \cmidrule(lr){22-26}\cmidrule(lr){27-28}
 & $EMD$ & $W2$ & SWD & MWD & MMD & $EMD$ & $W2$ & SWD & MWD & MMD & $EMD$ & $W2$ & SWD & MWD & MMD & $EMD$ & $W2$ & SWD & MWD & MMD & $EMD$ & $W2$ & SWD & MWD & MMD & $\mathrm{KL}(\nu^A \| \nu^B)$ & $\mathrm{KL}(\nu^B \| \nu^A)$\\
\midrule
\gls{val} & $0.011$ & $0.015$ & $0.006$ & $0.008$ & $0.006$ & $0.023$ & $0.040$ & $0.019$ & $0.034$ & $0.017$ & $0.038$ & $0.080$ & $0.036$ & $0.063$ & $0.028$ & $0.053$ & $0.143$ & $0.069$ & $0.099$ & $0.036$ & $0.068$ & $0.246$ & $0.113$ & $0.173$ & $0.043$ &
%
$0.079$ & $0.078$
\\
\midrule
\gls{sbirr} & $0.017$ & $0.021$ & $0.011$ & $0.014$ & $0.011$ & $\mathbf{0.029}$ & $\mathbf{0.046}$ & $\mathbf{0.021}$ & $\mathbf{0.033}$ & $0.015$ & $\mathbf{0.037}$ & $\mathbf{0.077}$ & $\mathbf{0.030}$ & $\mathbf{0.059}$ & $\mathbf{0.020}$ & $\mathbf{0.045}$ & $\mathbf{0.105}$ & $\mathbf{0.046}$ & $\mathbf{0.073}$ & $0.022$ & $\mathbf{0.045}$ & $\mathbf{0.156}$ & $\mathbf{0.071}$ & $\mathbf{0.121}$ & $\mathbf{0.021}$ & 
%
$15.991$ & $49.360$
\\
\gls{vsb} & $\mathbf{0.014}$ & $\mathbf{0.019}$ & $0.007$ & $\mathbf{0.009}$ & $0.004$ & $\mathbf{0.029}$ & $0.052$ & $0.023$ & $\mathbf{0.033}$ & $0.016$ & $0.046$ & $0.096$ & $0.043$ & $0.062$ & $0.029$ & $0.055$ & $0.131$ & $0.061$ & $0.079$ & $0.034$ & $0.059$ & $0.202$ & $0.098$ & $0.155$ & $0.035$ & 
%
$18.881$ & $53.435$
\\
\gls{msbm} & $0.015$ & $0.021$ & $\mathbf{0.007}$ & $0.009$ & $\mathbf{0.003}$ & $0.043$ & $0.062$ & $0.027$ & $0.050$ & $0.021$ & $0.070$ & $0.111$ & $0.053$ & $0.089$ & $0.043$ & $0.096$ & $0.160$ & $0.084$ & $0.131$ & $0.057$ & $0.119$ & $0.254$ & $0.126$ & $0.226$ & $0.067$ & 
%
$\mathbf{9.641}$ & $\mathbf{17.055}$
\\
\gls{mfl} & $0.194$ & $0.690$ & $0.482$ & $0.515$ & $0.051$ & $0.287$ & $0.516$ & $0.352$ & $0.385$ & $0.054$ & $0.284$ & $0.445$ & $0.296$ & $0.316$ & $0.106$ & $0.184$ & $0.387$ & $0.251$ & $0.264$ & $0.062$ & $0.105$ & $0.397$ & $0.254$ & $0.285$ & $0.023$ & 
%
$42.660$ & $68.191$
\\
\gls{eam} & $0.016$ & $0.022$ & $0.008$ & $0.011$ & $0.008$ & $0.102$ & $0.115$ & $0.059$ & $0.081$ & $0.028$ & $0.167$ & $0.194$ & $0.095$ & $0.138$ & $0.067$ & $0.183$ & $0.224$ & $0.114$ & $0.163$ & $0.082$ & $0.166$ & $0.283$ & $0.149$ & $0.205$ & $0.088$ &
%
$12.328$ & $31.641$
\\
\gls{tigon} & $0.094$ & $0.098$ & $0.061$ & $0.071$ & $0.020$ & $0.121$ & $0.126$ & $0.053$ & $0.080$ & $\mathbf{0.011}$ & $0.163$ & $0.169$ & $0.091$ & $0.117$ & $0.022$ & $0.128$ & $0.155$ & $0.076$ & $0.100$ & $\mathbf{0.018}$ & $0.053$ & $0.169$ & $0.083$ & $0.147$ & $0.028$ & 
%
$96.144$ & $35.505$
\\
\bottomrule
\end{tabular}
}
\vspace{-0.5em}
\caption{Petal dataset.}
\label{tab:metrics_petal}
\end{subfigure}
}
\vspace{-0.7em}
\captionof{table}{Marginal evaluation at each validation time $\tau$ and \gls{fkl} results.
 }
\label{tab:metrics_syn}
\end{minipage}

 \begin{minipage}{\textwidth}

\centering

    \begin{subfigure}{0.33\linewidth}
        \centering
        \includegraphics[width=\linewidth]{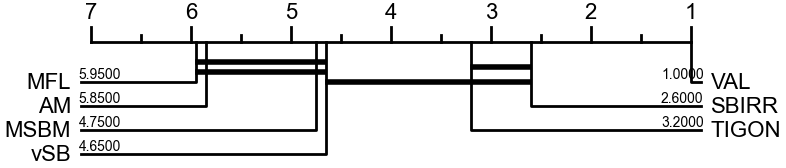}
        \caption{Lotka-Volterra dataset.}
        \label{fig:cd_lv}
    \end{subfigure}\hfill
    \begin{subfigure}{0.33\linewidth}
        \centering
        \includegraphics[width=\linewidth]{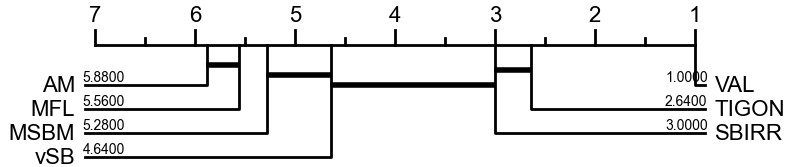}
        \caption{Repressilator dataset.}
       \label{fig:cd_repre}
    \end{subfigure}\hfill
    \begin{subfigure}{0.33\linewidth}
        \centering
        \includegraphics[width=\linewidth]{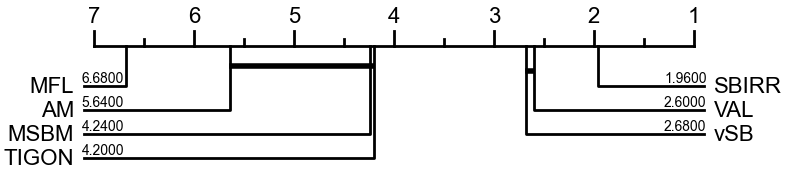}
        \caption{Petal dataset.}
        \label{fig:cd_petal}
    \end{subfigure}
\vspace{-0.7em}
\caption{Critical difference diagrams of marginal evaluation.
}
    \label{fig:cd_syn}

\end{minipage}

\end{figure*}

\begin{figure*}[t]
    \centering

     

     

     

     \includegraphics[width=0.9\linewidth]{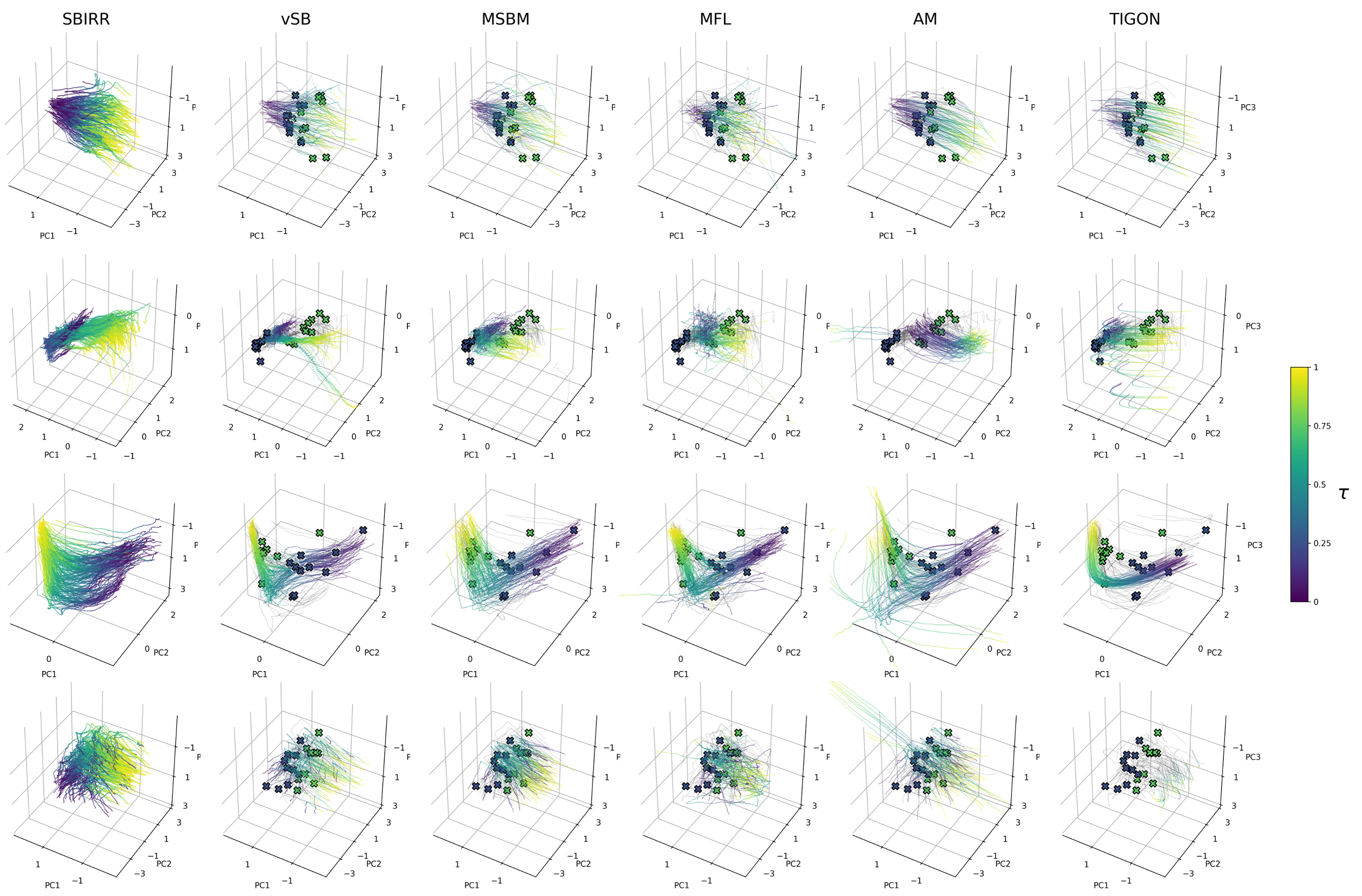}

      \vspace{-0.7em}
    
    \caption{Trajectories generated by different \gls{ti} methods
    across four real-world datasets, from top to bottom: \gls{eb}, \gls{hesc}, \gls{me}, and \gls{hf}.
    Colored curves show generated trajectories, while black curves show reference trajectories inferred by \gls{sbirr}. 
    }
    \label{fig:traj_full_eb_hesc}


\begin{minipage}{\textwidth}
\centering
\scriptsize
\setlength{\tabcolsep}{2pt}
\renewcommand{\arraystretch}{0.95}

\begin{subfigure}[t]{0.49\textwidth}
\centering
\resizebox{\linewidth}{!}{%
\begin{tabular}{lcccccccccc|c|c}
\toprule
\textbf{Method} & \multicolumn{5}{c}{\textbf{$\tau=0.25$}} & \multicolumn{5}{c}{\textbf{$\tau=0.75$}} & \multicolumn{2}{c}{\textbf{FKL}} \\
\cmidrule(lr){2-6} \cmidrule(lr){7-11}\cmidrule(lr){12-13}
 & $EMD$ & $W_2$ & SWD & MWD & MMD & $EMD$ & $W_2$ & SWD & MWD & MMD & $\mathrm{KL}(\nu^A \| \nu^B)$ & $\mathrm{KL}(\nu^B \| \nu^A)$ \\
\midrule
\gls{vsb} & $\mathbf{0.699}$ & $\mathbf{0.763}$ & $0.215$ & $0.342$ & $0.195$ & $0.856$ & $0.920$ & $0.290$ & $0.502$ & $0.213$ & 
%
$23.778$ & $ 27.727 $
\\
\gls{msbm} & $0.737$ & $0.888$ & $0.268$ & $0.507$ & $0.179$ & $\mathbf{0.736}$ & $\mathbf{0.826}$ & $\mathbf{0.212}$ & $\mathbf{0.351}$ & $\mathbf{0.157}$ & 
%
$29.452$ & $\mathbf{21.454}$
\\
\gls{mfl} & $\mathbf{0.699}$ & $0.819$ & $\mathbf{0.202}$ & $\mathbf{0.290}$ & $\mathbf{0.171}$ & $0.882$ & $0.975$ & $0.322$ & 
$0.462$ & $0.316$ & 
%
$\mathbf{22.058}$ & $73.201$
\\
\gls{eam} & $0.781$ & $0.919$ & $0.258$ & $0.416$ & $0.179$ & $1.296$ & $1.389$ & $0.506$ & $0.911$ & $0.371$ & 
%
$74.803$ & $32.180$
\\
\gls{tigon} & $1.151$ & $1.324$ & $0.385$ & $0.758$ & $0.299$ & $0.922$ & $1.033$ & $0.295$ & $0.441$ & $0.170$ & 
%
$122.486$ & $41.076$
\\
\bottomrule
\end{tabular}
}
\caption{\gls{eb} dataset.}
\label{tab:metrics_eb}
\end{subfigure}
\hfill
\begin{subfigure}[t]{0.49\textwidth}
\centering
\resizebox{\linewidth}{!}{%
\begin{tabular}{lcccccccccc|c|c}
\toprule
\textbf{Method} & \multicolumn{5}{c}{\textbf{$\tau=0.25$}} & \multicolumn{5}{c}{\textbf{$\tau=0.75$}} & \multicolumn{2}{c}{\textbf{FKL}} \\
\cmidrule(lr){2-6} \cmidrule(lr){7-11}\cmidrule(lr){12-13}
 & $EMD$ & $W2$ & SWD & MWD & MMD & $EMD$ & $W2$ & SWD & MWD & MMD & $\mathrm{KL}(\nu^A \| \nu^B)$ & $\mathrm{KL}(\nu^B \| \nu^A)$ \\
\midrule     
\gls{vsb} & $1.008$ & $1.026$ & $0.470$ & $0.871$ & $0.735$ & $\mathbf{1.130}$ & $\mathbf{1.168}$ & $\mathbf{0.466}$ & $\mathbf{0.890}$ & $0.722$ & 
%
$127.241$ & $124.057 $
\\
\gls{msbm} & $1.011$ & $1.036$ & $0.444$ & $0.891$ & $0.703$ & $1.179$ & $1.216$ & $0.508$ & $0.951$ & $0.736$ &
%
$111.151$ & $\mathbf{81.697 }$
\\
\gls{mfl} & $1.167$ & $1.193$ & $0.508$ & $0.951$ & $0.794$ & $1.183$ & $1.223$ & $0.481$ & $0.956$ & $0.757$ & 
%
$\mathbf{97.134}$ & $117.901$
\\
\gls{eam} & $2.194$ & $2.243$ & $1.086$ & $2.068$ & $1.023$ & $2.206$ & $2.242$ & $1.025$ & $1.875$ & $1.047$ &
%
$145.552$ & $283.293$
\\
\gls{tigon} & $\mathbf{0.908}$ & $\mathbf{0.990}$ & $\mathbf{0.368}$ & $\mathbf{0.673}$ & $\mathbf{0.565}$ & $1.195$ & $1.237$ & $0.497$ & $0.950$ & $\mathbf{0.695}$ &
%
$293.102$ & $161.731$
\\
\bottomrule
\end{tabular}
}
\caption{\gls{hesc} dataset.}
\label{tab:metrics_hesc}
\end{subfigure}
\hfill
\begin{subfigure}[t]{0.49\textwidth}
\centering
\resizebox{\linewidth}{!}{%
\begin{tabular}{lcccccccccc|c|c}
\toprule
\textbf{Method} & \multicolumn{5}{c}{\textbf{$\tau=0.25$}} & \multicolumn{5}{c}{\textbf{$\tau=0.75$}} & \multicolumn{2}{c}{\textbf{FKL}} \\
\cmidrule(lr){2-6} \cmidrule(lr){7-11}\cmidrule(lr){12-13}
 & $EMD$ & $W_2$ & SWD & MWD & MMD & $EMD$ & $W_2$ & SWD & MWD & MMD & $\mathrm{KL}(\nu^A \| \nu^B)$ & $\mathrm{KL}(\nu^B \| \nu^A)$ \\
\midrule
vSB & $1.041$ & $1.089$ & $0.361$ & $0.592$ & $0.364$ & $\mathbf{1.107}$ & $\mathbf{1.248}$ & $\mathbf{0.496}$ & $\mathbf{0.849}$ & $\mathbf{0.416}$ & $51.119$ & $48.054$\\
MSBM & $0.943$ & $1.011$ & $\mathbf{0.288}$ & $\mathbf{0.444}$ & $\mathbf{0.290}$ & $1.209$ & $1.298$ & $0.501$ & $0.921$ & $0.417$ & $ 65.571$ & $\mathbf{37.563}$\\
MFL & $1.054$ & $1.125$ & $0.385$ & $0.563$ & $0.417$ & $1.398$ & $1.517$ & $0.603$ & $1.014$ & $0.586$ & $\mathbf{42.268}$ & $79.306$\\
AM & $\mathbf{0.938}$ & $\mathbf{1.003}$ & $0.317$ & $0.459$ & $0.316$ & $1.493$ & $1.652$ & $0.600$ & $1.013$ & $0.485$ & $89.223$ & $81.838$ \\
TIGON & $1.091$ & $1.189$ & $0.472$ & $0.803$ & $0.425$ & $1.432$ & $1.524$ & $0.656$ & $1.094$ & $0.563$ & $250.619$ & $82.386$ 
\\
\bottomrule
\end{tabular}%
}
\caption{\gls{me} dataset.}
\label{tab:metrics_me}
\end{subfigure}
\hfill
\begin{subfigure}[t]{0.49\textwidth}
\centering
\resizebox{\linewidth}{!}{%
\begin{tabular}{lcccccccccc|c|c}
\toprule
\textbf{Method} & \multicolumn{5}{c}{\textbf{$\tau=0.25$}} & \multicolumn{5}{c}{\textbf{$\tau=0.75$}} & \multicolumn{2}{c}{\textbf{FKL}} \\
\cmidrule(lr){2-6} \cmidrule(lr){7-11}\cmidrule(lr){12-13}
 & $EMD$ & $W_2$ & SWD & MWD & MMD & $EMD$ & $W_2$ & SWD & MWD & MMD & $\mathrm{KL}(\nu^A \| \nu^B)$ & $\mathrm{KL}(\nu^B \| \nu^A)$ \\
\midrule
vSB & $1.215$ & $1.343$ & $0.446$ & $0.823$ & $0.268$ & $1.336$ & $1.432$ & $0.510$ & $0.998$ & $0.283$ & $56.990$ & $41.638$ \\
MSBM & $\mathbf{0.956}$ & $1.063$ & $0.362$ & $0.674$ & $0.216$ & $\mathbf{1.168}$ & $\mathbf{1.275}$ & $\mathbf{0.426}$ & $\mathbf{0.765}$ & $0.279$ & $58.914$ & $\mathbf{26.784}$ \\
MFL & $1.151$ & $1.266$ & $0.477$ & $0.827$ & $0.323$ & $1.368$ & $1.476$ & $0.542$ & $0.930$ & $0.379$ & $\mathbf{29.706}$ & $69.830$ \\
AM & $0.964$ & $\mathbf{1.063}$ & $\mathbf{0.352}$ & $\mathbf{0.631}$ & $\mathbf{0.207}$ & $1.701$ & $2.201$ & $0.807$ & $1.475$ & $\mathbf{0.262}$ & $73.227$ & $66.942$ \\
TIGON & $2.371$ & $2.482$ & $1.097$ & $2.138$ & $0.592$ & $2.031$ & $2.199$ & $0.907$ & $1.688$ & $0.441$ & $197.769$ & $69.540$ \\
\bottomrule
\end{tabular}%
}
\caption{\gls{hf} dataset.}
\label{tab:metrics_hf}
\end{subfigure}
\vspace{-0.7em}
\captionof{table}{Marginal evaluation at each validation time $\tau$ and \gls{fkl} results, with \gls{sbirr} trajectories used as the reference.}
\label{tab:metrics_real}
\end{minipage}

\vspace{-1em}

\end{figure*}


    



\vspace{-0.5em}
\paragraph{Finding 3: \gls{fkl} yields more plausible rankings than marginal metrics.}
Across datasets, \gls{fkl} provides a consistent trajectory-level assessment of all methods, with lower values indicating smaller discrepancies between generated and ground-truth trajectory distributions.
\textit{(i) When marginal-metric-based \gls{cd} diagrams agree with visual inspection, \gls{fkl} agrees as well.}
For example, on 2 of the 3 synthetic datasets (Lotka--Volterra and Repressilator), both marginal-metric-based \gls{cd} diagrams (Figure~\ref{fig:cd_syn}) and \gls{fkl} (Table~\ref{tab:metrics_syn}) correctly identify \gls{val} as the best-performing method.
Likewise, both correctly assign poor performance to \gls{eam} on Repressilator and to \gls{mfl} on Lotka--Volterra and Petal.
\textit{(ii) More importantly, \gls{fkl} remains consistent with visual inspection when marginal metrics become misleading, correcting the inconsistent rankings produced by existing methods.}
Among all trajectory inference methods, \gls{fkl} ranks \gls{sbirr} as the best method on Lotka--Volterra and Repressilator, and \gls{msbm} as the best on Petal. These conclusions match visual inspection but are not fully captured by marginal-metric-based \gls{cd} plots.
For instance, on Repressilator, marginal metrics rank \gls{tigon} above \gls{sbirr}, whereas \gls{fkl} places \gls{sbirr} higher, in agreement with visual evidence.
On Petal, the discrepancy is even more pronounced: although \gls{fkl} correctly identifies \gls{val} as the best and \gls{msbm} as the strongest trajectory inference method, marginal-metric-based \gls{cd} plots rank \gls{val} only second and place \gls{msbm} among the worst methods.
On real-world dataset \gls{hesc},  \gls{tigon} is consistently ranked best by all marginal metrics at the first validation timepoint ($\tau=0.25$); in contrast, \gls{fkl} reveals a mismatch in dynamics: \gls{tigon} generates smooth trajectories, whereas the reference paths are stochastic.
Overall, these results suggest that marginal-based evaluation can be misaligned with trajectory-level quality, whereas  
our discrepancy measure \gls{fkl} 
can 
recover the similarity in the dynamics between \gls{ti} methods 
and 
produce rankings that are better aligned with visual evidence.

\vspace{-0.5em}
\paragraph{Finding 4: Mode seeking (reverse \gls{kl}) vs.\ mode coverage (forward \gls{kl}).}
Recall that $\nu^A$ denotes the reference, ground-truth path measure and $\nu^B$ the path measure inferred by a \gls{ti} method. 
The asymmetry between $\KL{\nu^A}{\nu^B}$ and $\KL{\nu^B}{\nu^A}$ explains why the two divergences favor different methods empirically.
\textit{(i)
The forward \gls{kl}, $\KL{\nu^A}{\nu^B}$, favors mode coverage}: 
it strongly penalizes regions where the reference measure has nonzero mass but the inferred measure assigns little or no mass. 
Therefore, it favors methods that preserve broad support over the target distribution. 
This helps explain why forward \gls{kl} ranks \gls{mfl} best all real datasets (Table~\ref{tab:metrics_real}), as supported by both the optimization procedure of \gls{mfl} and the trajectory visualizations. 
In \gls{mfl}, a family of point clouds, one for each training snapshot, is initially sampled from a Gaussian distribution and then evolved toward the data by noisy gradient descent. 
Some particles that start far from the data region fail to move back to the data support, and the Brownian bridges connecting these outliers to correctly optimized particles then produce trajectories that extend well beyond the data support, giving rise to a pronounced radiating pattern in the visualizations. As a result, the inferred trajectories cover not only the data region but also substantial off-support space.
\textit{(ii)
By contrast, the reverse \gls{kl}, $\KL{\nu^B}{\nu^A}$, favors mode-seeking}: 
it penalizes probability mass assigned to regions where the reference measure has little or no support, thereby favoring precision over coverage.
Consistent with this property, reverse \gls{kl} ranks \gls{msbm} best on all real datasets (Table~\ref{tab:metrics_real}), since its inferred trajectories stay closest to the data distribution, sometimes at the cost of missing lower-density modes. At the other extreme, \gls{mfl} is among the worst methods on almost all synthetic and real datasets because it assigns substantial mass to regions far from the reference support.
Taken together, these results reveal an asymmetry in the preferences of the two divergences that marginal metrics do not capture.



Figure~\ref{fig:petal_zoom} highlights the limitations of marginal-only evaluation. On the petal dataset, at validation time point $\tau=0.75$, \gls{tigon} performs slightly better than \gls{msbm} under snapshot metrics, with $\textsc{W2}=0.155$ for \gls{tigon} versus $0.160$ for \gls{msbm}, and a clearer advantage in \gls{mmd} ($0.018$ versus $0.057$), reflecting differences in local sample dispersion. 
Yet these marginal scores do not assess the underlying dynamics. 
In contrast, our path-level metric \gls{fkl}, which compares full trajectory distributions rather than isolated time marginals, strongly separates the two methods (reverse \gls{fkl}$=17.055$ for \gls{msbm} and $35.505$ for \gls{tigon}), revealing the different dynamics despite similar marginal fit. 
The zoomed inset in Figure~\ref{fig:petal_zoom} provides a visual counterpart to this observation: although snapshot distances at $\tau=0.75$ are similar, the generated trajectories exhibit differences in their dynamics.

\section{Conclusion}
We introduced \gls{fkl}, a general approach for estimating divergences between probability measures over function spaces. Grounded in the theory of infinite-dimensional flows, \gls{fkl} yields a tractable estimator that can be learned from data and scales to realistic regimes in both the number of snapshots and the dimensionality of observations.

We used \gls{fkl} to re-examine trajectory inference in physics simulations and single-cell genomics. Current state-of-the-art methods are typically evaluated via marginal reconstruction on held-out snapshots, which does not directly assess the implied dynamics. Through an extensive empirical study on both synthetic and real datasets, we showed that these snapshot-based metrics can induce inconsistent method rankings and may even favor models whose assumptions conflict with the dynamics of the process generating the trajectories.
By comparing full trajectory distributions to a reference distribution, \gls{fkl} enables a coherent and principled assessment of \gls{ti} methods. Across all datasets considered, this evaluation method aligns with qualitative expectations and resolves several counterintuitive outcomes produced by marginal criteria used in the literature. 



\bibliography{KL_Deterministic/bibliography/bibliography}
\bibliographystyle{icml2026}

\balance


\clearpage
\onecolumn
\appendix
\appendixpage
\startcontents[appendix]


\section*{Contents}
{\small
\printcontents[appendix]{}{1}{\setcounter{tocdepth}{2}}
}

\section{Derivation of Functional KL Divergence}
\label{sec:fkl_app}

In this section, we present an expanded version of 
Section \ref{sec:fkl_main}, 
with detailed proofs of the three lemmas included.

\textbf{To begin with, we clarify the probabilistic setting and state our objective. }

We extend the  construction in Section \ref{sec:pre} to accommodate \textit{two different}  probability spaces $\left(\Omega, \mathcal{F}, P^A\right)$ and $\left(\Omega, \mathcal{F}, P^B\right)$, both supporting independent random variables $X_0$ and $X_1$, such that $X_0$ has the same Gaussian law in both spaces whereas $X_1$ has laws $\mu_1^A=\nu^A$ and $\mu_1^B=\nu^B$, respectively.
We then train one \gls{ffm} model for each endpoint law, obtaining two parametrized pairs $\left(\mu_t^A, v_t^A\right)$ and $\left(\mu_t^B, v_t^B\right)$, each satisfying the weak continuity equation.
We overload the notation previously introduced by considering superscripts to indicate whether the measures and fields of interest refer to the first or second probability space.

Our objective in this work is to estimate the \gls{kl} divergence between two probability measures
$\nu^A$ and $\nu^B$ on the Hilbert space $H$, which is formally defined as 
\begin{equation}
\KL{\nu^A}{\nu^B}
:=
\begin{cases}
\displaystyle \int_H \log\!\left(\frac{d\nu^A}{d\nu^B}\right)\, d\nu^A, & \text{if } \nu^A \ll \nu^B,\\[4pt]
+\infty, & \text{otherwise}.
\end{cases}
\end{equation}

\textbf{Next, we state the assumptions needed to construct our framework.}
 
\begin{assumption}[Existence 
of Radon--Nikodym derivative]
\label{ass:1}
We assume that $\nu^A \ll \nu^B$, so that the Radon--Nikodym derivative
$\frac{d\nu^A}{d\nu^B}$ exists. 

\end{assumption}

\begin{remark*} 
Indeed, as the goal of this work is to obtain \gls{kl} estimates via a novel estimator, we need to assume that the underlying \textit{estimand} (the true \gls{kl}) is well-defined. 
\end{remark*}



\begin{assumption}[Cameron--Martin support]
\label{ass:2}
The measures are fully supported on the Cameron--Martin space of the noise ($\nu^A(H_{\mu_0})=\nu^B(H_{\mu_0})=1$).
\end{assumption}

\begin{remarks*} $\quad$
\begin{enumerate}

 \item 
 Assumption~\ref{ass:2} can be relaxed: together with Assumption~\ref{ass:1}, it suffices to require that $\nu^B$  be fully supported on $H_{\mu_0}$, since $\nu^B(H_{\mu_0})=1$  paired with Assumption~\ref{ass:1} implies $\nu^A(H_{\mu_0})=1$. 
 Indeed, 
 by Assumption~\ref{ass:1}, 
 $\nu^A$ is absolutely continuous with respect to $\nu^B$ on $B(H)$, i.e., $\nu^B(\Gamma)=0$ implies $\nu^A(\Gamma)=0$ for all $\Gamma\in B(H)$. If $\nu^B(H_{\mu_0})=1$, then $\nu^B(H\setminus H_{\mu_0})=0$, and since the Hilbert subspace $H_{\mu_0} 
 $
 is Borel, Assumption~\ref{ass:1} yields $\nu^A(H\setminus H_{\mu_0})=0$, hence $\nu^A(H_{\mu_0})=1$.
 
    \item 
    Assumption~\ref{ass:2} can  be satisfied by choosing noise that is rougher than the data distribution (so the data are fully supported on $H_{\mu_0}$) while still being trace-class (and hence in $H$). In practice, 
  in $H:=L^2(\mathbb{T};\mathbb{R}^d)$
 with Fourier orthonormal basis, 
    we first estimate the data's Fourier spectrum using empirically determined variances for each Fourier coefficient, and then multiply each coefficient by the wavenumber magnitude $k=\|m\|_2$ to produce noise with a rougher spectrum 
    while still in $H$ as discussed in 
    \cite{chenScaleAdaptiveGenerativeFlows2025}.

\end{enumerate}
\end{remarks*}


\textbf{Under these assumptions, 
we now develop
a  velocity-only representation of the \gls{kl} divergence
 $\KL{\nu^A}{\nu^B}$.}
The derivation is organized into three lemmas:
Lemma~\ref{lem:abs_ctn}
establishes 
the
absolute continuity  needed to ensure the Radon–Nikodym derivatives are well-defined;
Lemma~\ref{lem:kl_1}
applies weak continuity equation to 
express \gls{kl} divergence with 
logarithmic Radon–Nikodym derivative 
and velocities. 
Lemma~\ref{lem:link} links the logarithmic gradients to the velocity fields.







    
    \begin{lemma} 
    Under the Assumptions 
    ~\ref{ass:1}
    and 
    ~\ref{ass:2}, 
    
        \begin{enumerate}
            \item[(a)] $\mu^{A,B}_t\ll \rho_t$, 
            where 
$\rho_t =\left( (1-t) \mathrm{Id} \right)_{\#}\mu_0$
denotes the push-forward of $\mu_0$ by the map $(1-t) \mathrm{Id}$  for every $t\in [0,1)$.
            \item[(b)] $\mu^{A}_t\ll \mu^{B}_t$
              for every $t\in [0,1]$.

        \end{enumerate}
        \label{lem:abs_ctn}
    \end{lemma}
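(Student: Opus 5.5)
Both parts rest on the same representation of $\mu_t$ as a mixture of shifted Gaussians. Since $X_0$ and $X_1$ are independent and $X_t=(1-t)X_0+tX_1$, conditioning on $X_1=y$ gives
\[
\mu_t^{A,B}=\int_H \mathcal{N}\bigl(ty,(1-t)^2C\bigr)\,d\nu^{A,B}(y),\qquad t\in[0,1).
\]
Here $\rho_t=\mathcal{N}(0,(1-t)^2C)$, whose Cameron--Martin space equals $H_{\mu_0}$ as a set for $t<1$, since scaling the covariance only rescales the norm. At $t=1$ we simply have $\mu_1^{A,B}=\nu^{A,B}$.

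\textbf{Part (a).} Fix $t\in[0,1)$ and a Borel set $\Gamma$ with $\rho_t(\Gamma)=0$.
\begin{enumerate}
\item For every $y\in H_{\mu_0}$, the shift $ty$ also lies in $H_{\mu_0}$.
\item By the Cameron--Martin theorem, $\mathcal{N}(ty,(1-t)^2C)$ is equivalent to $\rho_t$, with an explicit positive density.
\item Hence $\mathcal{N}(ty,(1-t)^2C)(\Gamma)=0$ for all $y\in H_{\mu_0}$.
\item By Assumption~\ref{ass:2}, $\nu^{A,B}(H_{\mu_0})=1$. Integrating the mixture representation in $y$ therefore gives $\mu_t^{A,B}(\Gamma)=0$.
\end{enumerate}
A technical point: the map $y\mapsto \mathcal{N}(ty,(1-t)^2C)(\Gamma)$ must be measurable for the mixture integral to make sense. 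This follows from measurability of $(x,y)\mapsto \mathbf{1}_\Gamma(x+ty)$ and Fubini's theorem applied to the product construction of $(X_0,X_1)$.

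\textbf{Part (b).} For $t=1$ the claim is exactly Assumption~\ref{ass:1}. For $t<1$ I would avoid Gaussian densities and use the product structure directly. Under $P^A$ and $P^B$ the pair $(X_0,X_1)$ has joint law $\mu_0\otimes\nu^A$ and $\mu_0\otimes\nu^B$ respectively. By Assumption~\ref{ass:1},
\[
\mu_0\otimes\nu^A\ll\mu_0\otimes\nu^B,\qquad \text{with density } (x_0,x_1)\mapsto \frac{d\nu^A}{d\nu^B}(x_1).
\]
Absolute continuity is preserved under pushforward by the same measurable map, here $(x_0,x_1)\mapsto(1-t)x_0+tx_1$. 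This gives $\mu_t^A\ll\mu_t^B$ for every $t\in[0,1]$, including $t=0$ where both equal $\mu_0$. It also yields the density explicitly as a conditional expectation:
\[
r_t(x)=\mathbb{E}_{P^B}\!\left[\frac{d\nu^A}{d\nu^B}(X_1)\,\Big|\,X_t=x\right].
\]
This formula may be useful later. As an alternative route, one could combine (a) with the positivity of the Cameron--Martin densities to show that $\frac{d\mu_t^B}{d\rho_t}>0$ $\rho_t$-a.e., but the pushforward argument is cleaner.

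\textbf{Expected main obstacle.} The substantive step is the Cameron--Martin equivalence in (a). It needs the shift $ty$ to lie in the Cameron--Martin space of the scaled measure $\rho_t$, and that is exactly where Assumption~\ref{ass:2} enters. Without it, a shift outside $H_{\mu_0}$ would make the shifted Gaussian mutually singular with $\rho_t$ (Feldman--Hájek dichotomy), and (a) would fail.
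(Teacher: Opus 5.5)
Your proposal is correct. Part (a) is essentially the paper's argument: a mixture of translates of $\rho_t$ by $ty$, Cameron--Martin equivalence for $y\in H_{\mu_0}$, then Fubini. The one difference is where you stop. The paper carries the Fubini step through to record the explicit density $\frac{d\mu_t^{A,B}}{d\rho_t}=\int_H \frac{d(\tau_{tx})_{\#}\rho_t}{d\rho_t}\,\mu_1^{A,B}(dx)$, which it reuses in Lemma~\ref{lem:link}. You stop at the null-set inclusion, which is enough for the statement itself.

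Part (b) is where you genuinely differ. The paper derives (b) from (a). It writes $f_t^{A,B}=\int_H g_t(x,\cdot)\,\mu_1^{A,B}(dx)$, with $g_t\ge 0$ the Cameron--Martin kernel. Then $f_t^B(y)=0$ forces $\mu_1^B(\{g_t(\cdot,y)>0\})=0$, hence the same for $\mu_1^A$ by Assumption~\ref{ass:1}, hence $f_t^A(y)=0$. You instead push $\mu_0\otimes\nu^A\ll\mu_0\otimes\nu^B$ forward along $(x_0,x_1)\mapsto(1-t)x_0+tx_1$.

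Your route is more elementary and more general:
\begin{itemize}
\item It uses neither Gaussianity nor Assumption~\ref{ass:2}; the paper's route inherits the latter through (a).
\item It treats all $t\in[0,1]$ at once, whereas the density argument only makes sense for $t<1$, where $\rho_t$ is nondegenerate.
\item It yields $r_t=\mathbb{E}_{P^B}[\frac{d\nu^A}{d\nu^B}(X_1)\mid X_t=\cdot\,]$, which is exactly the conditional-expectation viewpoint the paper adopts later in Step~1 of Lemma~\ref{lem:kl_1}.
\end{itemize}
The paper's route buys only economy within its own chain of lemmas: once the densities with respect to $\rho_t$ are written down for Lemma~\ref{lem:link}, (b) follows in a few lines.
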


\begin{proof}$\quad$ 
    \begin{enumerate}
     \item[(a)] 
By definition,
$\rho_t$
remains a centered Gaussian measure on $H$ with covariance operator $C_t :=(1-t)^2 C$. 
Using elementary Hilbert space theory, the Cameron-Martin space associated with $\mu_0$
is identical to that associated with $\rho_t$,
equipped with an inner product scaled by $\frac{1}{(1-t)^2}$.
Therefore, under Assumption~\ref{ass:2},
we also have 
$\mu_1^{A,B}(H_{\rho_t})=1$.

For every fixed $t$,
by the construction of linear interpolation~\eqref{eq:linear_interpolation}, 
the conditional law of $X_t$ given $X_1=x$ is the translation of $\rho_t$ by $t x$, i.e.,
$
\mathcal{L}\left(X_t \mid X_1=x\right)
=
\left(\tau_{t x}\right)_{\#} \rho_t
$
for $\mu_1$-almost any $x \in H$,
where $\tau_h(\cdot)=\cdot+h$. 
Therefore, this conditional measure is also Gaussian, with mean $t x$  and covariance operator $C_t$.
Since 
$\mu_1^{A,B}(H_{\rho_t})=1$
gives that 
$t x \in H_{\rho_t}$
for $\mu_1^{A,B}$-almost any $x \in H$, 
by the Cameron–Martin theorem~\citep{Da_Prato_sde_infinite_book}, 
we have that 
$
\left(\tau_{t x}\right)_{\#} \rho_t
\sim 
\rho_t,
~ x ~\mu^{A,B}_1\text{-a.s.}
$
Furthermore, 
by definition of a conditional measure
and Fubini’s Theorem, for every
$S \in B(H)$, 
we have 
$$
\begin{aligned}
\mu^{A,B}_t(S)
&=\int_H \left(\left(\tau_{t x}\right)_{\#} \rho_t \right)(S) \mu^{A,B}_1(d x)
\\
&=\int_H \int_S 
\left(\left(\tau_{t x}\right)_{\#} \rho_t \right)(dy) \mu^{A,B}_1(d x)
\\
&=\int_H \int_S 
\frac{d ~
\left(\left(\tau_{t x}\right)_{\#} \rho_t \right)
}{d ~
\rho_t
} (y)
\rho_t
(dy) \mu^{A,B}_1(d x)
\\
&=\int_S
\left(
\int_H 
\frac{d ~
\left(\left(\tau_{t x}\right)_{\#} \rho_t \right)
}{d ~
\rho_t
} (y)
\mu^{A,B}_1(d x)
\right)
\rho_t (dy).
\end{aligned}
$$

Hence $\mu_t^{A, B} \ll \rho_t$, and 
\begin{equation}
\frac{d\mu^{A,B}_t}{d\rho_t} (\cdot)=
\int_H 
\frac{d ~
\left(\left(\tau_{t x}\right)_{\#} \rho_t \right)
}{d ~
\rho_t
} (\cdot)
\mu^{A,B}_1(d x)
,~ \rho_t \text{-a.s.}
\label{eq:dmut_drhot}
\end{equation}

\item[(b)] 
Denote $\frac{d\mu^{A,B}_t}{d\rho_t}$ by 
$f_t^{A,B}$, so 
$
f_t^{A,B}
\geq 0$.
Let $S \in B(H)$ be such that $\mu_t^B(S)
=\int_S f_t^B (y)  \rho_t (dy)
=0$, 
then $\rho_t\left(S \cap\left\{
f_t^B>0\right\}\right)=0$, so

$$
\begin{aligned}
&
\mu_t^A(S)
=\int_S f_t^A (y) \rho_t (dy)
\\
=
&
\int_{S \cap\left\{f_B>0\right\}} f_t^A (y) \rho_t (dy)
+
\int_{S \cap\left\{f_B=0\right\}}f_t^A (y) \rho_t (dy)
=\int_{S \cap\left\{f_B=0\right\}} 
 f_t^A (y) d \rho_t (dy)
\end{aligned}
$$ 

Therefore, to get $\mu_t^A(S)=0$ (hence $\mu_t^A \ll \mu_t^B$), it suffices to show that
$f_t^A=0$ on $\left\{y \in H:~f_t^B(y)=0\right\}, ~ \rho_t $-a.s. 
To this end, observe that $f_t^{A, B}$ admit the representation
$
 f_t^{A,B} (y) =
\int_H 
g_t(x,y)
\mu^{A,B}_1(d x)
$,
where 
$g_t(x,y):=
\frac{d ~
\left(\left(\tau_{t x}\right)_{\#} \rho_t \right)
}{d ~
\rho_t
} (y) \geq 0
$.
Now fix $y \in H$. If $f_t^B(y)=0$, then
$
\mu^{B}_1
\left( H \cap\left\{
g_t(x,y)>0\right\}\right) = 0
$. 
Under Assumption~\ref{ass:1},
we also have 
$\mu^{A}_1
\left( H \cap\left\{
g_t(x,y)>0\right\}\right) = 0$, 
and therefore
$
f_t^A(y)=\int_H g_t(x, y) \mu_1^A(d x)=0
$.




    \end{enumerate}
\end{proof}



Given  the well-definedness 
of Radon--Nikodym derivatives, 
next  
we apply the weak continuity equation using 
the Radon--Nikodym derivative and its logarithm as test functions to derive an integral representation of the \gls{kl} divergence.

\begin{lemma}
Let $r_t := \frac{d\mu_t^A}{d\mu_t^B}$, which is well-defined by Lemma~\ref{lem:abs_ctn} $(b)$. Then, under mild regularity conditions,


\begin{equation}
\KL{\nu^A}{\nu^B}=
\int_I 
\int_{H}
\bigl\langle
v^{A}_t(x)-v^{B}_t(x),\,
\nabla_x \log
 r_t
(x)
\bigr\rangle
\,
\mathrm d\mu^{A}_t(x).
\end{equation}



\label{lem:kl_1}
\end{lemma}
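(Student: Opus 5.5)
The plan is to write the KL divergence as the endpoint of a time-dependent quantity and differentiate it in $t$ using the weak continuity equation \eqref{eq:ctn_eq_H_main}. Set $F(t) := \int_H \log r_t \, d\mu_t^A$. At $t=1$ we have $\mu_1^{A,B}=\nu^{A,B}$, so $F(1)=\KL{\nu^A}{\nu^B}$. At $t=0$ both paths start from the same Gaussian $\mu_0$, hence $r_0\equiv 1$ and $F(0)=0$. It therefore suffices to identify $F(1)-F(0)$ with the claimed time integral.

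To compute the increment, the first step is to apply the weak continuity equation for $(\mu_t^A,v_t^A)$ with the time-dependent test function $\varphi(x,t)=\log r_t(x)$. This gives
\[
F(1)-F(0)=\int_I\int_H \bigl(\partial_t \log r_t + \langle v_t^A,\nabla_x \log r_t\rangle\bigr)\, d\mu_t^A\, dt .
\]
The second step is to remove the $\partial_t \log r_t$ term. Since $\partial_t \log r_t \, d\mu_t^A = \partial_t r_t \, d\mu_t^B$, I would apply the continuity equation for $(\mu_t^B,v_t^B)$ with test function $\varphi=r_t$. The boundary terms are $\int r_1\,d\mu_1^B - \int r_0\, d\mu_0^B = 1-1=0$, so
\[
0=\int_I\int_H \bigl(\partial_t r_t+\langle v_t^B,\nabla_x r_t\rangle\bigr)\, d\mu_t^B\, dt .
\]
Using $\nabla_x r_t = r_t\nabla_x\log r_t$ and $r_t\,d\mu_t^B=d\mu_t^A$, this becomes
\[
\int_I\int_H \partial_t \log r_t\, d\mu_t^A\,dt = -\int_I\int_H\langle v_t^B,\nabla_x\log r_t\rangle\, d\mu_t^A\,dt .
\]
Substituting into the first identity yields exactly $\int_I\int_H\langle v_t^A-v_t^B,\nabla_x\log r_t\rangle\, d\mu_t^A\,dt$, which is the claim.

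The main obstacle is justifying that $r_t$ and $\log r_t$ are admissible test functions. The equation \eqref{eq:ctn_eq_H_main} is stated only for smooth cylindrical $\varphi\in\operatorname{Cyl}(H\times I)$, and $r_t$ is neither cylindrical nor compactly supported. My approach is to approximate in two stages. First, project onto $\pi^K$ and consider the finite-dimensional densities $r_t^K := d(\pi^K_\#\mu_t^A)/d(\pi^K_\#\mu_t^B)\circ\pi^K$. These form a $\mu_t^B$-martingale in $K$ with respect to the filtration generated by the projections. By Doob's martingale convergence theorem, $r_t^K\to r_t$ in $L^1(\mu_t^B)$, and $\log r_t^K\to\log r_t$ under integrability assumptions. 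Second, for each fixed $K$, approximate $r_t^K$ (as a function on $\mathbb{R}^K\times I$) by $C_c^\infty$ functions, which are dense in the corresponding $L^1$ spaces. Passing to the limit in both sides of the continuity equation then requires dominated convergence for the $\partial_t$ and $\langle v,\nabla\rangle$ terms. This is where the ``mild regularity conditions'' are used: differentiability of $r_t$ in $(x,t)$, integrability of $\|v_t^{A,B}\|\,\|\nabla\log r_t\|$ against $\mu_t^A$, and behavior near $t=1$ (with the endpoint handled by letting $t\uparrow 1$ via lower semicontinuity or monotone limits of KL under the projections). I would state these as explicit assumptions rather than derive them.
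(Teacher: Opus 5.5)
Your proposal is correct and follows essentially the same route as the paper's proof. You test the continuity equation for $(\mu_t^A,v_t^A)$ with $\log r_t$ and the one for $(\mu_t^B,v_t^B)$ with $r_t$, then use $r_t\,d\mu_t^B=d\mu_t^A$ to eliminate the $\partial_t\log r_t$ term, and you justify admissibility through cylindrical projections, Doob's martingale convergence for $r_t^K$, and density of $C_c^\infty$ functions in finite dimensions. The only differences are cosmetic: the paper assumes $v_t^{A,B}\in L^\infty$ and $L^1$ derivatives of $r_t$ and $\log r_t$ and uses H\"older where you invoke dominated convergence, and your boundary term $F(1)=\KL{\nu^A}{\nu^B}$ is stated more cleanly than the paper's pointwise identity for $\log r_1$.
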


\begin{proof} 
\underline{Step 1: Admissibility of $r_t$ and $\log r_t$
as test functions
in weak continuity equation.}$\quad$

Given $\left\{e_k\right\}_{k\in \mathbb{N}}$,   
let $
\hat{\pi}^K
=\sum_{i=1}^K e_i \otimes e_i$ be 
the orthogonal projector of $H$ onto the linear span 
$H_K=\operatorname{lin}\left\{e_1, \ldots, e_K\right\}$.

    For all $K\in \mathbb{N}$, we first  define the projected measures $\hat{\mu}^{A,B}_{t,K} :=(\hat{\pi}^K)_{\#}{\mu}^{A,B}_t$. Clearly, $\hat{\mu}^{A,B}_{t,K}=E_{P^{A,B}}[{\mu}^{A,B}_t|\mathcal{G}_K]$, where $\mathcal{G}_K :=\sigma(\hat{\pi}^K(X_t))$. Note also that $\{\mathcal{G}_K\}_{K\in \mathbb{N}}$ is a growing filtration with terminal value $ 
\mathcal{G}_\infty 
=
\sigma(X_t)$.

Next, we define the projected 
Radon-Nikodym derivative 
$r^K_t :=
\frac{d\hat{\mu}^{A}_{t,K}}{d \hat{\mu}^{B}_{t,K}}=
E_{P^B}[ r_t\mid \mathcal{G}_K]$, 
where 
$r_t:= \frac{d \mu_t^A}{d \mu_t^B}$ 
is well-defined by Lemma \ref{lem:abs_ctn} $(b)$. 
By definition of a Radon-Nikodym derivative, 
 $r_t
\in
L^1(
H \times I ,
 \mu _{t}^B ; \mathbb{R}
)
$, 
so
by Doob’s martingale convergence theorem (\cite{oksendal2010sdeintrobook} Corollary C.9), 
$r^K_t=
E\left[r_t \mid \mathcal{G}_K\right] 
\xrightarrow{K\to\infty} 
E\left[r_t \mid \mathcal{G}_{\infty}\right] = r_t, 
\mu_t^B\text{-a.s.}$ and in $L^1(\mu_t^B)$.

Now fix $K$. 
Since 
$
r_t^K \in 
L^1(
H^K\times I ,
\hat{\mu}_{t, K}^B ; \mathbb{R}
)
$,
and 
smooth functions with compact support
are dense 
in  $L^1 (H^K \times I 
, \hat{\mu}_{t, K}^B 
; \mathbb{R})$~\cite{nakaiDENSITYSETALL},
therefore
we can extract a sequence 
$(r 
_t^{K,
n 
} )_{
n 
\in \mathbb{N}} 
\subset
C_c^{\infty} (H^K \times I 
, \hat{\mu}_{t, K}^B 
; \mathbb{R})
$ such that 
$
r 
_t^{K,
n 
} 
\xrightarrow{
n 
\to\infty} 
r 
_t^K
$ in $
 L^1 (H^K \times I 
 , \hat{\mu}_{t, K}^B 
; \mathbb{R})$.


Combining 
the two convergence together 
and using
the reverse triangle inequality 
and 
the Cauchy–Schwarz inequality,
we then have
$$
\begin{aligned}
& 
\left|
\left\|
r_t^{K, 
n 
}
\right\|_{L^1(
 \hat{\mu}_{t, K}^B  
)}
-
\left\|
r_t
\right\|_{L^1(\mu_{t}^B)}
\right|
\leq
\left\|
r_t^{K, 
n 
}
-
r_t
\right\|_{L^1(\mu_{t}^B)}
\leq
\underbrace{
\left\|
r_t^{K, 
n 
}
-
r_t^{K}
\right\|_{L^1(
 \hat{\mu}_{t, K}^B  
 )}
 }_{=: 
 A^{K, 
n}
 }
+
\underbrace{
\left\|
r_t^{K}
-
r_t 
\right\|_{L^1(\mu_{t}^B)}
}_{=:B^K}
.
\end{aligned}
$$
For each $K$, since $A^{K, n} \rightarrow 0$ as $n \rightarrow \infty$, there exists $m_K \in \mathbb{N}$ such that
for
$
n \geq m_K$, we have $ A^{K, n}<\frac{1}{K}
$; 
therefore, 
$A^{K, n} + B^K 
<
\frac{1}{K} + B^K  
\xrightarrow{
K
\to \infty
}
0
$.
Consequently,
$r_t$
can
be approximated  arbitrarily well 
by 
smooth cylindrical test functions
 $\left( r_t^{K, n(K)} 
 \right)$, 
and 
the interchange of 
the integral in the definition of the $L^1$-norm
and 
the limit
 is justified.

Analogously, 
assume that 
$
\partial_t  
\frac{d \mu^A_t}{d \mu^B_t } 
\in L^1( \mu^B_t )$ 
and
$
\nabla_x 
\frac{d \mu^A_t}{d \mu^B_t } 
\in L^1( \mu^B_t )$, 
then
the
$L^1\left(\mu_t^B\right)$-convergence
and the interchange of integral  and limit 
also hold
 for their corresponding smooth cylindrical approximations. 
Furthermore, assume that 
$
v_t^{A,B}\in 
L^\infty( \mu^{A,B}_t )
$, 
by 
Hölder's inequality, 
we get 
$$
\left\|\left\langle v_t^{A,B}, \nabla 
r_t^{K, n(K)}-\nabla r_t\right\rangle
\right\|_{L^1( \mu^B_t )}
\leq
\left\|v_t^{A,B}\right\|_{L^\infty( \mu^{A,B}_t )}\left\|\nabla 
r_t^{K, n(K)}-\nabla r_t
\right\|_{L^1( \mu^B_t )}
\rightarrow 0.$$

Therefore, letting $K\to\infty$ and $n(K)\to\infty$ in the weak continuity equation 
(\eqref{eq:ctn_eq_H_main}) tested against
$r_t^{K,n(K)}
$ yields
$$ 
\int_H 
r_1
(x) d \mu_1^{A,B}(x)-\int_H 
r_0
(x) d \mu_0^{A,B}(x)
=  \int_I \int_{H}\left(\partial_t 
r_t 
(x)+\left\langle v_t^{A,B}(x), \nabla_x 
r_t 
(x)
\right\rangle\right) d \mu_t^{A,B}(x) d t.
$$ 

An analogous equation holds for 
$
\log r_t
$, 
under the assumption that
$
\log r_t
\in L^1( \mu^B_t )$, 
$
\partial_t  
\log r_t
\in L^1( \mu^B_t )$, 
and
$
\nabla_x 
\log r_t
\in L^1( \mu^B_t )$.

\underline{Step 2: KL identify from the weak continuity equation. }

We first consider the weak continuity equation
tested with $\log r_t(x)$
for the pair
$(v_t^A,\mu_t^A)$. 
Using the boundary identities 
$$\log r_1=\log\frac{d{\mu}^{A}_{1}}{d{\mu}^{B}_{1}}=\KL{\nu^A}{\nu^B},
\quad
\log r_0=\log\frac{d{\mu}^{A}_{0}}{d{\mu}^{B}_{0}}=\log 1 = 0, P^B\text{-a.s.}$$ 
the L.H.S.
of 
\eqref{eq:ctn_eq_H_main}
reduces to $ \KL{\nu^A}{\nu^B}$, so

\begin{equation}
\KL{\nu^A}{\nu^B}
=
\int_I
\int_H \partial_t 
 \log r_t (x) 
 d \mu_t^A(x) 
dt
+
\int_I
\int_H
\left\langle v_t^A(x), \nabla_x 
 \log r_t(x) 
\right\rangle 
d \mu_t^A(x) 
dt
\label{eq:ctn_kl_1}
\end{equation}

Next, we want to rewrite the time-derivative term on the R.H.S. of \cref{eq:ctn_kl_1} in terms of  $v_t^B$ and $ \nabla_x \log r_t$.  
To this end, 
we consider the weak continuity equation
tested with $  r_t(x)$
for the pair
$(v_t^B,\mu_t^B)$. 
By definition $r_t :=\frac{d\mu_t^A}{d\mu_t^B}$, we have
$$
\int_H r_1\,d\mu_1^B
= \int_H d \mu_1^A  = 1,
\quad
\int_H r_0\,d\mu_0^B
= \int_H d \mu_0^A  = 1$$ 
the L.H.S.
of 
\eqref{eq:ctn_eq_H_main}
reduces to $ 0 $, so

$$
\int_I \!\int_H \partial_t r_t (x) \,d\mu_t^B(x)\,dt
=
-\int_I \!\int_H \langle v_t^B(x),\nabla_x r_t(x)\rangle\,d\mu_t^B(x)\,dt .
 $$

Using $ \mu_t^B (dx) =  \frac{1}{r_t} (x)\mu_t^A (dx) $, we then get
\begin{equation}
\int_I \!\int_H \partial_t  \log r_t (x)\, d\mu_t^A(x)\,dt
=
-\int_I \!\int_H \big\langle v_t^B(x),
\nabla_x \log r_t (x)
\big\rangle \, d\mu_t^A(x)\,dt 
\label{eq:dt_log}
\end{equation}

Finally, 
injecting 
\eqref{eq:dt_log}
into
the R.H.S. of \cref{eq:ctn_kl_1}
gives the desired result.

\end{proof}

By Lemma \ref{lem:abs_ctn} $(a)$, 
we can rewrite the term $\nabla_x \log r_t(x)$ appearing in Lemma \ref{lem:kl_1} as
\begin{equation}
 \nabla_x  
\log r_t(x)=
 \nabla_x  
 \log \frac{d  \mu_t^A }{ d  \rho_t}(x) -
  \nabla_x  
  \log \frac{d  \mu_t^B }{ d  \rho_t }.
  \end{equation}
  The next lemma links this 
  logarithmic gradients mismatch 
  to 
  velocity fields mismatch, 
  which is useful to derive 
   a velocity-only representation of $\KL{\nu^A}{\nu^B}$.
  



   \begin{lemma}
For the linear
interpolation 
used in \gls{ffm} \cite{kerriganFunctionalFlowMatching2023}, 

\begin{equation}
\nabla_x  
\log \frac{d  \mu_t^A }{ d  \rho_t}(x) -
\nabla_x  
\log \frac{d  \mu_t^B }{ d  \rho_t }
=
\frac{t}{1-t} 
C^{-1}
\left(
v^{A}_t(x) - 
v^{B}_t(x)
\right)
\end{equation}

       \label{lem:link}
   \end{lemma}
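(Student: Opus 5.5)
The plan is to compute each logarithmic gradient $\nabla_x \log \frac{d\mu_t^{A,B}}{d\rho_t}$ explicitly through the mixture representation \eqref{eq:dmut_drhot} from Lemma~\ref{lem:abs_ctn}(a). Then we compare it with the conditional-expectation form of the \gls{ffm} velocity. The same computation handles both superscripts, so we carry it out for a generic endpoint law $\mu_1\in\{\nu^A,\nu^B\}$ and subtract at the end.

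\textbf{Step 1 (Cameron--Martin density).} Recall that $\rho_t=\mathcal N(0,C_t)$ with $C_t=(1-t)^2C$. For $h=tx\in H_{\rho_t}$, which holds $\mu_1$-a.s. by Assumption~\ref{ass:2}, the Cameron--Martin theorem gives
\[
g_t(x,y)=\exp\Bigl(\bigl\langle C_t^{-1/2}h, C_t^{-1/2}y\bigr\rangle-\tfrac12\|C_t^{-1/2}h\|_H^2\Bigr),
\]
where the first term is understood as the $L^2(\rho_t)$ limit of the finite-dimensional pairings (a measurable linear functional $W_h(y)$). Formally $W_h(y)=\langle C_t^{-1}h,y\rangle$, so
\[
\nabla_y g_t(x,y)=g_t(x,y)\,\frac{t}{(1-t)^2}\,C^{-1}x .
\]
This is to be read on cylindrical directions, or in the Cameron--Martin sense, which is exactly how $\nabla_x$ enters the pairings of Lemma~\ref{lem:kl_1}.

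\textbf{Step 2 (Bayes' rule).} Differentiate $f_t(y)=\int_H g_t(x,y)\,\mu_1(dx)$ under the integral sign; this is one of the mild regularity conditions. Dividing by $f_t$, the ratio $g_t(x,y)\mu_1(dx)/f_t(y)$ is exactly the posterior law of $X_1$ given $X_t=y$, by Bayes' rule applied to the joint law built in Lemma~\ref{lem:abs_ctn}(a). Therefore
\[
\nabla_y\log f_t(y)=\frac{t}{(1-t)^2}\,C^{-1}\,\mathbb E[X_1\mid X_t=y].
\]

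\textbf{Step 3 (velocity).} For the interpolation \eqref{eq:linear_interpolation}, the marginal velocity is $v_t(x)=\mathbb E[X_1-X_0\mid X_t=x]$. Since $X_0=(X_t-tX_1)/(1-t)$, we get $X_1-X_0=(X_1-X_t)/(1-t)$, hence
\[
v_t(x)=\frac{\mathbb E[X_1\mid X_t=x]-x}{1-t},
\]
which is the \gls{ffm} formula of \cite{kerriganFunctionalFlowMatching2023}. Substituting $\mathbb E[X_1\mid X_t=x]=(1-t)v_t(x)+x$ into Step 2 gives
\[
\nabla\log f_t^{A,B}=\frac{t}{1-t}\,C^{-1}v_t^{A,B}+\frac{t}{(1-t)^2}\,C^{-1}x .
\]
The $x$-term is identical for $A$ and $B$, so it cancels in the difference, which yields the claim.

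\textbf{Main obstacle.} The delicate step is Step 1 in infinite dimensions. $C^{-1}x$ is not defined for general $x\in H$, and $W_h$ is only a $\rho_t$-a.e.\ defined measurable linear functional, so the gradient identity must be justified in a weak sense. I would first prove the identity for the projections $\hat\pi^K$, where everything is a finite-dimensional Gaussian computation. Then I would pass to the limit $K\to\infty$ using the martingale argument already used in Lemma~\ref{lem:kl_1}. Finally, I would observe that only the difference $C^{-1}(v^A_t-v^B_t)=\frac{t}{(1-t)^2}C^{-1}\bigl(\mathbb E^A[X_1\mid X_t]-\mathbb E^B[X_1\mid X_t]\bigr)$ is needed, and this is well defined because the conditional means lie in $H_{\mu_0}$ by Assumption~\ref{ass:2}.
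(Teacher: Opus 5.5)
Your proposal is correct and follows essentially the same route as the paper: the mixture representation \eqref{eq:dmut_drhot}, the Cameron--Martin gradient $\frac{t}{(1-t)^2}C^{-1}h$ of the translated density, Bayes' rule to obtain the posterior mean $\mathbb{E}[X_1\mid X_t=x]$, and the FFM velocity formula, with the $x$-dependent term cancelling in the difference (your finite-dimensional projection plan adds rigor the paper does not attempt). One slip in your closing remark: the identity should read $C^{-1}(v_t^A-v_t^B)=\frac{1}{1-t}C^{-1}\bigl(\mathbb{E}^A[X_1\mid X_t]-\mathbb{E}^B[X_1\mid X_t]\bigr)$, because the factor $\frac{t}{(1-t)^2}$ belongs to the log-gradient difference; also, membership in $H_{\mu_0}$ only makes the pairing $\langle h, C^{-1}h\rangle=\|h\|_{H_{\mu_0}}^2$ meaningful, and does not by itself make $C^{-1}h$ an element of $H$.
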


   \begin{proof}



\underline{Step 1: Expression of logarithmic gradients.}$\quad$

By \cref{eq:dmut_drhot}, the logarithmic gradients can be written as
\begin{equation}
    \nabla_x \log \frac{d  \mu_t^{A,B} }{ d  \rho_t}(x)=\frac{\int_H \nabla_x\frac{d(\tau_{th})_{\#}\rho_t}{d\rho_t}(x)d\mu^{A,B}_1(h)}{\int_H 
    \frac{d(\tau_{th})_{\#}\rho_t}{d\rho_t}(x)
    d\mu^{A,B}_1(h)}.
    \label{eq:log_grad}
\end{equation}

Since $\left(\tau_{t h}\right)_{\#} \rho_t$ is 
the translate of $\rho_t 
=
\mathcal{N}(0, (1-t)^2 C)$   by $t h$, the Cameron-Martin theorem yields

\begin{equation}
\nabla_x\frac{d\left(\left( \tau_{th} \right) _{\#}\rho_t \right)}{d\rho_t}(x)=\frac{d\left(\left( \tau_{th} \right) _{\#}\rho_t \right)}{d\rho_t}(x)\frac{t}{(1-t)^2}C^{-1}h .
\label{eq:cm}
\end{equation}

Substituting \eqref{eq:cm} into \eqref{eq:log_grad} gives
\begin{equation}
    \nabla_x \log \frac{d  \mu_t^{A,B} }{ d  \rho_t}(x)
=
\frac{t}{(1-t)^2} C^{-1}
\int_H h 
\underbrace{
\frac{
 \frac{d(\tau_{th})_{\#}\rho_t}{d\rho_t}(x)
  \mu_1^{A, B}(d  h)
 }{\int_H 
  \frac{d(\tau_{th})_{\#}\rho_t}{d\rho_t}(x)
   \mu_1^{A, B}(d h)}
   }_{=: (I)}.
   \label{eq:cm_gaussian}
\end{equation}

To identify the ratio $ (I) $ in \eqref{eq:cm_gaussian}, 
note that 
the conditional law of $X_t$ given $X_1=h$ is: 
$\forall
\Gamma \in B(H)$,
$$
P^{A, B}\left(X_t \in \Gamma \mid X_1=h\right)
=\left(\tau_{t h}\right)_{\#} \rho_t(\Gamma)=
\int_\Gamma
\frac{d\left(\tau_{t h}\right)_{\#} \rho_t}{d \rho_t}(x) \rho_t(d x),
$$

hence the joint law of $\left(X_t, X_1\right)$ admits the factorization:
$\forall
\Gamma , \Lambda \in B(H)
$,

 
$$
P^{A, B}\left(X_t \in \Gamma, X_1 \in \Lambda
\right)
 =\int_\Lambda P^{A, B}\left(X_t \in \Gamma \mid X_1=h\right) \mu_1^{A, B}(d h) 
 =
\int_\Lambda
\int_\Gamma \frac{d\left(\tau_{t h}\right)_{\#} \rho_t}{d \rho_t}(x) \rho_t(d x) 
\mu_1^{A, B}(d h),
 $$

i.e.,

$$
P^{A, B}\left(X_t  = x, X_1 \in d h\right)=
\frac{d\left(\tau_{t h}\right)_{\#} \rho_t}{d \rho_t}(x) 
\mu_1^{A, B}(d h) .
$$

Therefore, by Bayes' rule, $(I)$ 
in \eqref{eq:cm_gaussian}
becomes 

$$
\frac{
 \frac{d(\tau_{th})_{\#}\rho_t}{d\rho_t}(x)
  \mu_1^{A, B}(d  h)
 }{\int_H 
  \frac{d(\tau_{th})_{\#}\rho_t}{d\rho_t}(x)
   \mu_1^{A, B}(d h)}
   =
   \frac{
P^{A,B}\left(X_t =x, X_1 \in d h\right)
 }{
 P^{A,B}\left(X_t =x 
    \right)
    } 
=
P^{A, B}\left(X_1 \in d h \mid X_t=x\right)
$$

Inserting the last display back into 
\eqref{eq:cm_gaussian} gives


\begin{equation}
\begin{aligned}
\nabla_x \log \frac{d  \mu_t^{A,B} }{ d  \rho_t}(x)
&
=
\frac{t}{(1-t)^2} C^{-1}
\int_H h 
P^{A,B}\left(X_1 \in d h \mid X_t=x\right)
\\
&
=\frac{t}{(1-t)^2}C^{-1}E_{P^{A,B}}[X_1 |X_t=x].
\end{aligned}
\label{eq:score}
\end{equation}

\underline{Step 2: Expression of velocity fields.}$\quad$

For the linear
interpolation 
used in \gls{ffm} \cite{kerriganFunctionalFlowMatching2023},
 the velocity fields are given by 
\begin{equation}
\begin{aligned}
v^{A,B}_t(x)
&=\mathbb{E}_{P^{A,B}}\!\left[
\frac{ X_1 - x}{1 - t}
\mid X_t=x\right]\\
&=\frac{1}{1-t}\,\mathbb{E}_{P^{A,B}}\!\left[X_1\mid X_t=x\right]-\frac{x}{1-t}\\
\end{aligned}
\label{eq:v}
\end{equation}

\underline{Step 3: Linking logarithmic gradients and velocities.}$\quad$

Combining the two identities 
\eqref{eq:score}
and 
\eqref{eq:v}
allows us to express
logarithmic gradients mismatch 
with 
velocity fields mismatch:
$$ 
\nabla_x  
\log \frac{d  \mu_t^A }{ d  \rho_t}(x) -
\nabla_x  
\log \frac{d  \mu_t^B }{ d  \rho_t }
=
\frac{t}{1-t} 
C^{-1}
\left(
v^{A}_t(x) - 
v^{B}_t(x)
\right)
$$ 

   \end{proof}

Finally, with $ \nabla_x  
\log r_t$ in  Lemma~\ref{lem:kl_1}
replaced by velocity fields mismatch in Lemma~\ref{lem:link}, 
we obtain 
a velocity-only expression for   what we label the \gls{fkl}:
\begin{theorem}
For the linear interpolation used in \gls{ffm} \cite{kerriganFunctionalFlowMatching2023}, under Assumptions~\ref{ass:1}--\ref{ass:2} and mild regularity conditions, we have

\begin{equation}
\KL{\nu^A}{\nu^B}
=
\int_0^1\int_{H}
\frac{t}{1-t}\,
\left\|v_t^A(x)-v_t^B(x)\right\|_{
H_{\mu_0}
}^2\,
d\mu_t^A(x)\,dt.
  \label{eq:finalKL_appendix}
\end{equation}
 
\end{theorem}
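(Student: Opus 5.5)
The plan is to combine the three lemmas directly. By Lemma~\ref{lem:abs_ctn}(b), $r_t=\frac{d\mu_t^A}{d\mu_t^B}$ is well defined for every $t\in[0,1]$. Lemma~\ref{lem:kl_1} then gives the representation
\[
\KL{\nu^A}{\nu^B}=\int_0^1\int_H\bigl\langle v_t^A(x)-v_t^B(x),\nabla_x\log r_t(x)\bigr\rangle\,d\mu_t^A(x)\,dt .
\]
For $t\in[0,1)$, Lemma~\ref{lem:abs_ctn}(a) gives $\mu_t^{A,B}\ll\rho_t$. We may therefore use the chain rule for Radon--Nikodym derivatives, $r_t=\frac{d\mu_t^A}{d\rho_t}\big/\frac{d\mu_t^B}{d\rho_t}$ $\mu_t^A$-a.s., which yields the decomposition \eqref{eq:diff_score} of $\nabla_x\log r_t$. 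Substituting Lemma~\ref{lem:link} for this difference of logarithmic gradients turns the integrand into
\[
\frac{t}{1-t}\bigl\langle v_t^A(x)-v_t^B(x),\,C^{-1}\bigl(v_t^A(x)-v_t^B(x)\bigr)\bigr\rangle_H .
\]
The endpoint $t=1$ has Lebesgue measure zero and can be ignored.

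The remaining step is to identify $\langle u,C^{-1}u\rangle_H$ with $\|u\|_{H_{\mu_0}}^2$ for $u=v_t^A(x)-v_t^B(x)$. Recall that $H_{\mu_0}=C^{1/2}H$ with norm $\|u\|_{H_{\mu_0}}=\|C^{-1/2}u\|_H$, so formally $\langle u,C^{-1}u\rangle_H=\|C^{-1/2}u\|_H^2=\|u\|^2_{H_{\mu_0}}$. To justify this I would show that $u\in H_{\mu_0}$ $\mu_t^A$-a.s., using Step~2 in the proof of Lemma~\ref{lem:link}:
\[
v_t^A(x)-v_t^B(x)=\frac{1}{1-t}\Bigl(\mathbb{E}_{P^A}[X_1\mid X_t=x]-\mathbb{E}_{P^B}[X_1\mid X_t=x]\Bigr).
\]
By Assumption~\ref{ass:2}, both conditional expectations average $H_{\mu_0}$-valued variables. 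I would add, as part of the "mild regularity conditions", the integrability $\mathbb{E}\|X_1\|_{H_{\mu_0}}<\infty$, so that these conditional expectations are Bochner integrals in $H_{\mu_0}$. Then $C^{-1}u$ is understood as in Lemma~\ref{lem:link}, namely as the Cameron--Martin shift direction, and the pairing $\langle u,C^{-1}u\rangle_H$ equals the Cameron--Martin inner product $\langle u,u\rangle_{H_{\mu_0}}$.

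I expect this last point to be the main obstacle. $C^{-1}$ is unbounded, and in Lemma~\ref{lem:link} the gradient $C^{-1}h$ only makes sense as the measurable linear functional associated with $h\in H_{\mu_0}$, which is a $\rho_t$-a.s. defined object rather than an element of $H$. My first attempt would be to work on the finite-dimensional projections $\hat{\pi}^K$ already used in Lemma~\ref{lem:kl_1}. There $C$ restricted to $H_K$ is invertible (taking $\{e_k\}$ as the eigenbasis of $C$), so the identity $\langle u_K,C^{-1}u_K\rangle=\|C^{-1/2}u_K\|^2$ is elementary. Passing to the limit by monotone convergence in $K$ gives $\sum_k\lambda_k^{-1}\langle u,e_k\rangle^2=\|u\|_{H_{\mu_0}}^2$, and the same martingale and $L^1$ convergence arguments as in Lemma~\ref{lem:kl_1} allow exchanging the limit with the integral in $(x,t)$. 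Finiteness of the right-hand side is then consistent with finiteness of the KL divergence, which is guaranteed by Assumption~\ref{ass:1} and the regularity hypotheses.
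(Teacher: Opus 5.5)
Your proposal is correct and follows the paper's own argument. The paper obtains the theorem simply by inserting the decomposition \eqref{eq:diff_score} and Lemma~\ref{lem:link} into Lemma~\ref{lem:kl_1}, and leaves the identification $\langle u, C^{-1}u\rangle_H=\|u\|_{H_{\mu_0}}^2$ implicit; your added integrability condition $\mathbb{E}\|X_1\|_{H_{\mu_0}}<\infty$ and eigenbasis truncation make that last step rigorous (monotone convergence of the nonnegative truncated sums already justifies the limit exchange, so the martingale argument is not needed), and the one slip is your closing remark, since $\nu^A\ll\nu^B$ alone does not make the KL finite, so finiteness must come from the regularity hypotheses rather than Assumption~\ref{ass:1}.
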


\section{Implementation of Functional KL Estimation}
\label{sec:implementation_fkl}
Given the velocity-field-based expression of the \gls{fkl} under the \gls{ffm} framework (Equation~\ref{eq:finalKL}), we now describe our implementation for estimating \gls{kl} in function space. 

\vspace{-0.5em}
\paragraph{Parameterization of two velocity fields.}
Velocity field estimation in \gls{ffm} is well established \cite{kerriganFunctionalFlowMatching2023}. The key difference between \cite{kerriganFunctionalFlowMatching2023} and our setting is that, generative modeling typically approximates a single velocity field (e.g., $v^A$ by $v_\theta^A$ to generate samples from $\nu^A$), whereas our \gls{kl} estimator requires approximating two velocity fields, $v^A$ and $v^B$.

To reduce parameters and simplify training, we use a single network with a binary conditioning flag $c \in \{0,1\}$:
$
v_\theta(c=0, \cdot) \approx v^A $ and $v_\theta(c=1, \cdot) \approx v^B 
$.
The network is trained on shuffled function samples $x_1$ from $\nu^A$ and $\nu^B$, each paired with a label $c \in \{0,1\}$ indicating the target velocity field.

\vspace{-0.5em}
\paragraph{Architecture and training.}
Our network is based on 
the state-of-the-art functional neural operator 
\gls{mino}~\cite{shiMeshInformedNeuralOperator}, an encoder--decoder neural operator that uses \gls{gno}s and Transformers to map between arbitrary irregular discretizations and a latent representation on a fixed regular grid.
Compared with \gls{fno}, which is used in many functional generative models including \cite{kerriganFunctionalFlowMatching2023}, \gls{mino} naturally handles irregular grids and, in our preliminary experiments, performs substantially better on functions with non-periodic boundaries.

For stable \gls{fkl} estimation, it is important to enforce $v(x,1)=x$. Indeed, under $X_0 \sim \mathcal{N}(0,C)$ and $X_0 \perp X_1$, the velocity field satisfies
$
v(x,1)=\mathbb{E}[X_1-X_0 \mid X_1=x]=x.
$
Hence, if both $v^A_\theta$ and $v^B_\theta$ satisfy this condition, their difference vanishes at $t=1$, thereby canceling the singularity from $\frac{t}{1-t}$ in \Cref{eq:finalKL}. To impose this condition, we use the subtraction-based parameterization of \cite{hu2025improvingrectifiedflowboundary}:
\begin{equation}
v_\theta(x,t)=x+m_\theta(x,t)-m_\theta(x,1),
\label{eq:boundary_parameterization}
\end{equation}
which guarantees $v_\theta(x,1)=x$.

We optimize the neural network with a conditional flow-matching objective~\cite{kerrigan2022} that linearly combines the squared Hilbert and Cameron-Martin norms of the velocity error, with increasing emphasis on the Cameron-Martin term during training
to progressively refine high-frequency modes at later stages.
In practice, all fields are represented in a truncated spectral basis, so both norms are well defined.

\vspace{-0.5em}
\paragraph{Estimation of \gls{fkl}.} 
Given the two learned velocity fields, we estimate \gls{fkl} via Monte Carlo approximation of \Cref{eq:finalKL}. 
Specifically, we sample $x_1^A\sim \nu^A$, $t\sim \mathcal{U}[0,1]$, and $x_0\sim \mu_0=\mathcal{N}(0,C)$, and form
$
x_t^A=t x_1^A+(1-t)x_0,
$
so that $x_t^A\sim \mu_t^A$. 
Evaluating both fields at $(x_t^A,t)$ yields  $v_\theta^A(x_t^A, t)$ and $v_\theta^B(x_t^A, t)$. 
We then compute the integrand
$
\frac{t}{1-t}\,\bigl\|v_\theta^A(x_t^A,t)-v_\theta^B(x_t^A,t)\bigr\|_{H_{\mu_0}}^2
$, with the norm computed after projecting the velocity fields onto a suitable orthonormal basis, as in prior function-space literature~\citep{kerriganFunctionalFlowMatching2023,franzese2023}.
Averaging over repeated samples yields the \gls{fkl} estimate.
Importantly, this Monte Carlo approximation procedure does not require simulating the full generative dynamics via ODE integration.

\section{Special Cases}\label{app:special_cases}

In this section, we consider two analytically tractable special cases to validate our functional-space \gls{kl} formulation. 
These settings admit closed-form \gls{kl} values, 
which serve as ground truth and enable direct verification of both our theoretical derivation and the resulting \gls{kl} estimation pipeline.
Empirically, our estimates closely match the closed-form values, providing strong evidence that the derivation and the resulting estimator are accurate in practice. \cref{sec:toycase_closedform} derives the closed-form \gls{kl} expressions for the Gaussian-measure and linear-\gls{sde} cases, while \cref{sec:toycase_results} reports experimental details and results.




\subsection{Closed-form Expression}
\label{sec:toycase_closedform}

\subsubsection{Special Case 1: Gaussian Measures}

We consider the target measures $\nu^{A,B}$ on some separable Banach space $H$ to be Gaussian measures $\mathcal{N}(m^{A,B},\mathcal{R})$. Provided $m^{A}-m^B\in H_{\mathcal{R}}$, the Cameron Martin space, the \gls{kl} divergence has known analytical expression $\frac{1}{2}|| m^A-m^B||^2_{H_C}$. 
 We consider $H=L^2(\mathbb{T};\mathbb{R})$ and Matérn covariance $\mathcal{R}=\sigma^2(-\Delta_P+\tau^2 I)^{-\alpha}$,  $\sigma,\tau,\alpha>0$. For simplicity, we select $m^A=\cos(2\pi x)\in H_C$ and $m^B=0$. For the analytical computation, we select as ONB $\{e_k(x)=e^{2\pi i k\cdot x}:k\in\mathbb{Z}\}$,where $
\mathcal{K}e_k=\lambda_k e_k$,  $
\lambda_k=\sigma^2(4\pi^2k^2+\tau^2)^{-\alpha}
$. 

Furthermore, the velocity field mismatch also admits an analytic expression. 
Recall that the velocity field is defined as $v_t(r)=\mathbb{E}[\dot X_t\mid X_t=r]$. Applying the Gaussian conditioning formula in Hilbert space \cite{mandelbaumLinearEstimatorsMeasurable1984} to the linear interpolation yields
$
v_t(r)= m + \bigl(tC-(1-t)K\bigr)\bigl((1-t)^2K+t^2C\bigr)^{-1}(r-tm).
$  
Equivalently, in Fourier coordinates,
\begin{equation}
    v_{t,k}(r_k)=m_k+
    \underbrace{\frac{t c_k-(1-t)\kappa_k}{(1-t)^2\kappa_k+t^2c_k}
    }_{=:a_k(t)}
    \,(r_k-tm_k).
\end{equation}
Let $v_t^A$ and $v_t^B$ denote the velocity field corresponding to $\nu_A=\mathcal{N}(m,C)$  and  $\nu_B=\mathcal{N}(0,C)$ respectively, 
then
$ 
v_{t,k}^A(r_k)=m_k+a_k(t)(r_k-tm_k) 
$ and
$
v_{t,k}^B(r_k)=a_k(t)r_k
$,
so their difference simplifies to a  deterministic (input-independent) quantity:
\begin{equation}
v_{t,k}^{\mathrm{diff}}:=v_{t,k}^A-v_{t,k}^B
=\frac{(1-t)\kappa_k}{(1-t)^2\kappa_k+t^2 c_k}\,m_k.
\label{eq:vdiff}
\end{equation}

\subsubsection{Special Case 2: SDEs}

Let $(Y_t)_{t\in[0,1]}$ be an $\mathbb{R}^D$-valued process and consider the two SDEs
\begin{equation}
\begin{cases}
\text{A:}\quad dY_t = c_A Y_t\,dt + g\,dW_t,\\
\text{B:}\quad dY_t = c_B Y_t\,dt + g\,dW_t,
\end{cases}
\qquad
Y_0\sim \mathcal{N}(m_0,\Sigma_0),
\end{equation}
where $c_A,c_B\in\mathbb{R}$ are scalar drift coefficients  
and we assume $c_A\neq 0$, $g>0$ is a constant diffusion, and $W_t$ is a standard $D$-dimensional Wiener process. Denote the induced path measures on $[0,1]$ by $\nu^A$ and $\nu^B$, and define
$
S_0 := \operatorname{Tr} (\Sigma_0), ~
M_0 := \|m_0\|^2.
$

Since the two SDEs have the same diffusion and initial law, Girsanov's theorem gives
\begin{equation}
\mathrm{KL}(\nu_A\|\nu_B)
= \frac12\int_0^1 \mathbb{E}_A\!\left[\left\|\frac{(c_A-c_B)Y_t}{g}\right\|^2\right]dt
= \frac{(c_A-c_B)^2}{2g^2}\int_0^1 \mathbb{E}_A\!\left[\|Y_t\|^2\right]dt.
\label{eq:girsanov_linear}
\end{equation}


\paragraph{Step 1: compute $\mathbb{E}_A[\|Y_t\|^2]$.}
Under the SDE A, the explicit solution is
\[
Y_t = e^{c_A t}Y_0 + g\int_0^t e^{c_A(t-s)}\,dW_s.
\]
Using independence of $Y_0$ and $(W_s)_{s\ge 0}$, the cross-term vanishes and It\^{o} isometry yields
\begin{align}
\mathbb{E}_A[\|Y_t\|^2]
&= e^{2c_A t}\,\mathbb{E}[\|Y_0\|^2]
\;+\; g^2\,\mathbb{E}\!\left[\left\|\int_0^t e^{c_A(t-s)}\,dW_s\right\|^2\right]\nonumber\\
&= e^{2c_A t}(M_0+S_0) \;+\; g^2\int_0^t e^{2c_A(t-s)}\,\mathbb{E}[\|dW_s\|^2]\nonumber\\
&= e^{2c_A t}(M_0+S_0) \;+\; g^2 D\int_0^t e^{2c_A(t-s)}\,ds.
\label{eq:second_moment_preint}
\end{align}
For $c_A\neq 0$,
\[
\int_0^t e^{2c_A(t-s)}ds = \frac{e^{2c_A t}-1}{2c_A},
\]
hence
\begin{equation}
\mathbb{E}_A[\|Y_t\|^2]
= e^{2c_A t}(M_0+S_0) + \frac{g^2D}{2c_A}\bigl(e^{2c_A t}-1\bigr).
\label{eq:second_moment}
\end{equation}

\paragraph{Step 2: integrate over $t\in[0,1]$.}
Plugging \eqref{eq:second_moment} into \eqref{eq:girsanov_linear} gives, for $c_A\neq 0$,
\begin{align}
\mathrm{KL}(\nu_A\|\nu_B)
&= \frac{(c_A-c_B) c^2}{2g^2}\int_0^1
\left[e^{2c_A t}(M_0+S_0) + \frac{g^2D}{2c_A}(e^{2c_A t}-1)\right]dt\nonumber\\
&= \frac{(c_A-c_B)^2 }{2g^2}\left[
(M_0+S_0)
\int_0^1 e^{2c_A t}dt
+ \frac{g^2D}{2c_A}\left(
\int_0^1 e^{2c_A t}dt
- 1\right)\right]\nonumber\\
&= \frac{(c_A-c_B)^2}{2g^2}\left[
(M_0+S_0)\frac{e^{2c_A}-1}{2c_A}
+ \frac{g^2D}{2c_A}\left(\frac{e^{2c_A}-1}{2c_A}-1\right)\right]
\label{eq:kl_closed_form_linear}
\end{align}

where the last equality follows from the identity
$\int_0^1 e^{2c_A t}dt 
=
\frac{e^{2c_A}-1}{2c_A}$.

\subsection{Experimental Details}
\label{sec:toycase_results}


The hyperparameters are reported in \Cref{tab:hyperparameters_gm} for the Gaussian case and in \Cref{tab:hyperparameters_sde} for the SDE case.

\begin{table}[htbp]
\centering
\fontsize{9}{9}\selectfont
\resizebox{\textwidth}{!}{%
\begin{tabular}{
    @{$\:\:\:\:\:$}
    l
    @{$\:\:\:\:\:$}
    r
}
\toprule
{\bf Name}  & {\bf Value}  \\
\midrule
Training function $X_1^A$   &
$\begin{cases}
f \sim \mathcal{N}(\mu, C) \\
\mu(x) = s\,\sin(2\pi f_0 x),\; s \in \{0.5,\,1.5,\,3.0\},\; f_0 \in \{1,\,3,\,5\} \\
C:\text{ Mat\'ern w. }\nu_C{=}3.5,\,\ell_C{=}0.05,\,\sigma^2_C{=}0.15
\end{cases}$ \\
Training function $X_1^B$   & $f \sim \mathcal{N}(0, C)$ \\
\midrule
Training functions' input time points $M$ & $128$ \\
Training functions' output dimension $D$ & $\in \{1,\,2,\,3,\,5,\,10\}$ \\
\midrule
Noise function $X_0$    & $f \sim \mathcal{N}(0, K)$, $K:$ Mat\'ern w. $\nu_K{=}0.5,\,\ell_K{=}0.1,\,\sigma^2_K{=}1.0$ \\
Num. modes $N$ summed at \gls{kl} estimation  & $64$ \\
\midrule
$t$ sampling scheme at training  & Logit-normal \\
$t$ sampling scheme at \gls{kl} estimation  & Importance sampling $t/(1-t)$ \\
Num. $t$ sampled at \gls{kl} estimation  &    $100$       \\
\midrule
Num. functions at training  & $50{,}000$ \\
Training batch size  &  $1024$ \\
Training iterations  &  $30{,}000$ \\
Num. functions at \gls{kl} estimation  & $500$ \\
\midrule
Optimizer  &      Adam  \\
EMA rate  &    $0.999$ \\
\gls{lr}  &  $1\mathrm{e}{-3}$      \\
\gls{lr} scheduler   &   Cosine annealing  \\
FFM training loss & $L_{\text{FFM}}$\\
\midrule
Model & MINO-T \\
Encoder (dim / depth / heads) & $64$ / $2$ / $4$ \\
Decoder (dim / depth / heads) & $64$ / $2$ / $4$ \\
Supernode radius & $5\mathrm{e}{-3}$ \\
\midrule
GPUs for Training & 1 $\times$ NVIDIA A100       \\
\bottomrule
\end{tabular}
}
\caption{FKL hyperparameters for Gaussian Measures.}
\label{tab:hyperparameters_gm}
\end{table}

\begin{table}[htbp]
\centering
\fontsize{9}{9}\selectfont
\resizebox{\textwidth}{!}{%
\begin{tabular}{
    @{$\:\:\:\:\:$}
    l
    @{$\:\:\:\:\:$}
    r
}
\toprule
{\bf Name}  & {\bf Value}  \\
\midrule
Training function $X_1^A$   &
$\begin{cases}
dY_t = c_A\, Y_t\, dt + g\, dW_t \\
Y_0 \sim \mathcal{N}(m_0, \Sigma_0),\; m_0{=}2.0,\; \Sigma_0{=}0.2
\end{cases}$ \\
Training function $X_1^B$   &
$\begin{cases}
dY_t = c_B\, Y_t\, dt + g\, dW_t \\
Y_0 \sim \mathcal{N}(m_0, \Sigma_0),\; m_0{=}2.0,\; \Sigma_0{=}0.2
\end{cases}$ \\
\midrule
Training functions' input time points $M$ & $128$ \\
Training functions' output dimension $D$ & $\in \{1,\,2,\,3,\,5\}$ \\
\midrule
Noise function $X_0$    & Rougher empirical GT Fourier-spectrum  \\
Num. modes $N$ summed at \gls{kl} estimation  & $64$ \\
\midrule
$t$ sampling scheme at training  & Importance sampling $t/(1-t)$ \\
$t$ sampling scheme at \gls{kl} estimation  & Importance sampling $t/(1-t)$ \\
Num. $t$ sampled at \gls{kl} estimation  &    $100$       \\
\midrule
Num. functions at training  & $50{,}000$ \\
Training batch size  &  $1024$ \\
Training iterations  &  $30{,}000$ \\
Num. functions at \gls{kl} estimation  & $500$ \\
\midrule
Optimizer  &      Adam  \\
EMA rate  &    $0.999$ \\
\gls{lr}  &  $4.06\mathrm{e}{-3}$      \\
\gls{lr} scheduler   &   Cosine annealing \\
FFM training loss & $w\,L_{\text{FFM}} + (1-w)\,L_{\text{FKL}}$, $w$ linearly decayed from $1$ to $0.2$ \\
\midrule
Model & MINO-T \\
Encoder (dim / depth / heads) & $32$ / $2$ / $8$ \\
Decoder (dim / depth / heads) & $32$ / $2$ / $8$ \\
Supernode radius & $5\mathrm{e}{-4}$ \\
\midrule
GPUs for Training & 1 $\times$ NVIDIA A100       \\
\bottomrule
\end{tabular}
}
\caption{FKL hyperparameters for \gls{sde}s.}
\label{tab:hyperparameters_sde}
\end{table}


\section{Ablation Studies}
\label{sec:ablation}

We further ablate some important factors, discussed next.

\paragraph{Sensitivity to estimation factors.}
To evaluate the robustness of FKL estimation via Equation~\eqref{eq:finalKL} to four key estimation factors,
we conduct a systematic study
in a Gaussian special case (Case 3 in Figure~\ref{fig:special_cases_campaign}, top-left), where
$
\mathrm{KL}_{\mathrm{Fwd}}=\mathrm{KL}_{\mathrm{Rev}}=50.00,
$
and an analytic expression for the marginal velocity mismatch is also available (see Appendix~\ref{app:special_cases}).


\textit{(i) Basis truncation.} We discretize the Gaussian processes on $M=128$ uniformly spaced time points and implement the method using Fourier representations with $8,16,32$, and 64 modes. Figure \ref{fig:basis_truncation} shows that the estimation error decreases as more modes are retained.
\textit{(ii) Velocity-field estimation.} 
%
Although the analytic marginal velocity field is unavailable in general, the \gls{ffm} conditional flow-matching loss provides a surrogate measure of velocity estimation quality and quickly converges to a very small value during training. In the Gaussian special case, with the analytic marginal velocity field being available, Figure~\ref{fig:velocity_estimation} shows that the marginal velocity-mismatch error is uniformly small over $t\in[0,1]$, decreases as $t$ grows, and vanishes at $t=1$ due to the boundary parameterization in Equation~\eqref{eq:boundary_parameterization}.
This supports accurate FKL estimation, since Equation~\eqref{eq:finalKL} emphasizes the regime near $t=1$ through $\frac{t}{1-t}$, where the velocity-estimation error is smallest.
\textit{(iii) Monte Carlo approximation.} We vary the number of trajectories in the KL estimator from 10 to 2000. Figure \ref{fig:monte_carlo} shows that the estimates remain highly precise and stable throughout.
\textit{(iv) Temporal discretization.} 
We vary the number $n$ of sampled time points used to approximate the integral in Equation~\eqref{eq:finalKL}. Over 5 random seeds, Figure~\ref{fig:time_discrete} shows that the mean estimate remains close to the ground truth for all $n$, and the standard deviation decreases substantially with finer discretization and becomes negligible at $n=80$, confirming the strong stability of our estimator.

Together, these results show that FKL estimation is accurate and numerically stable, supporting our discretization choices for FKL estimation.


\begin{figure}[htbp]
    \centering
    \begin{subfigure}{0.34\textwidth}
        \centering
        \includegraphics[width=\textwidth]{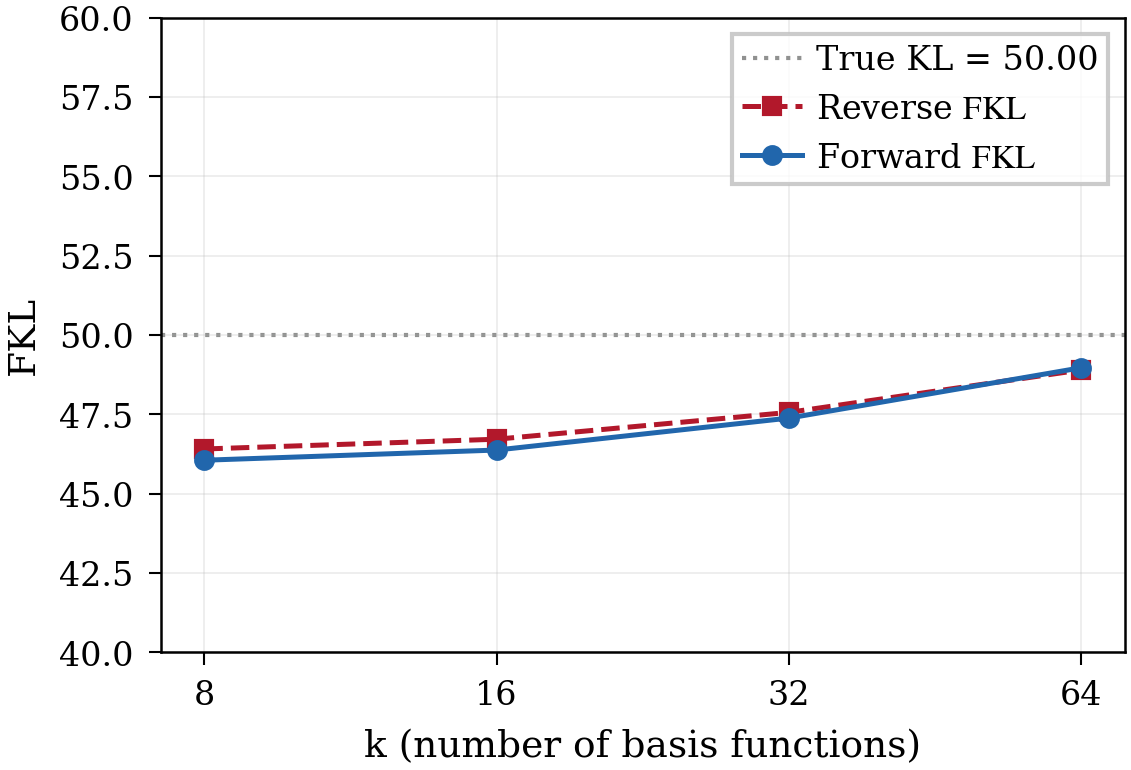}
        \caption{Basis truncation.}
        \label{fig:basis_truncation}
    \end{subfigure}
        \begin{subfigure}{0.34\textwidth}
        \centering
        \includegraphics[width=\textwidth]{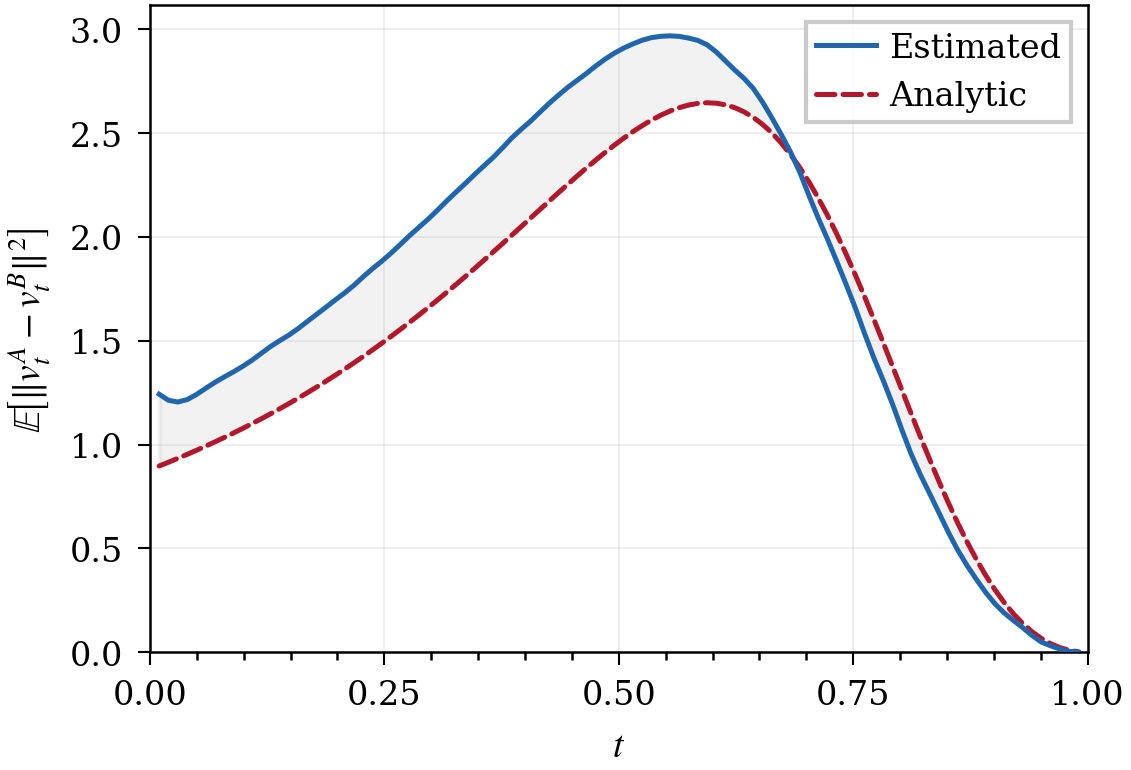}
        \caption{Velocity-field estimation.}
        \label{fig:velocity_estimation}
    \end{subfigure}
    \begin{subfigure}{0.34\textwidth}
        \centering
        \includegraphics[width=\textwidth]{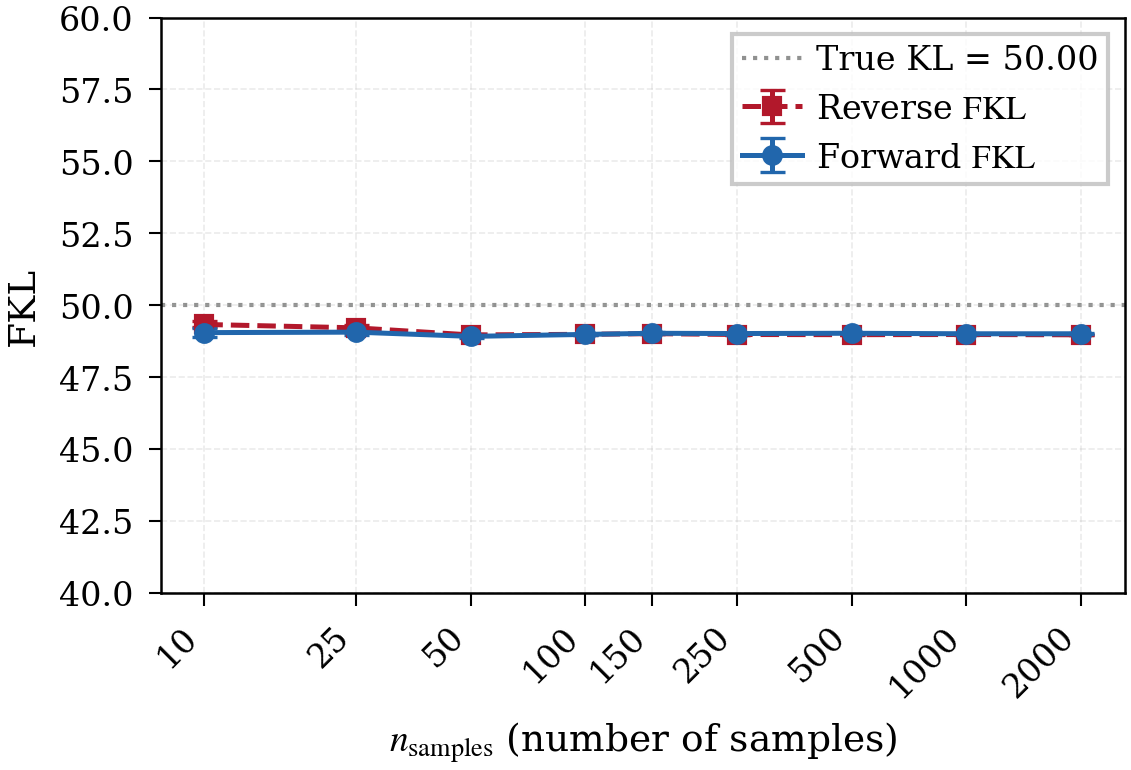}
        \caption{Monte Carlo approximation.}
        \label{fig:monte_carlo}
    \end{subfigure}
        \begin{subfigure}{0.34\textwidth}
        \centering
        \includegraphics[width=\textwidth]{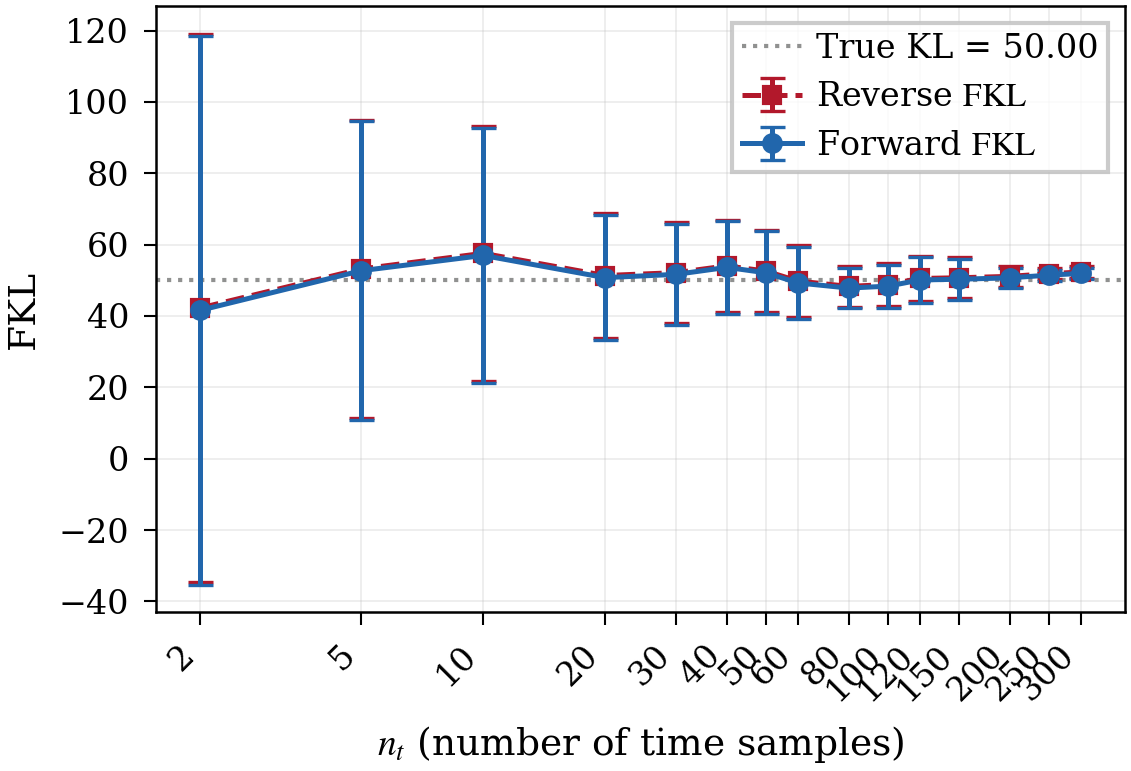}
        \caption{Temporal discretization.}
        \label{fig:time_discrete}
    \end{subfigure}
    \caption{Sensitivity analysis on a Gaussian special case.}
    \vspace{-1.5em}
\end{figure}

    

\paragraph{Choice of noise covariance operator $C$. } 

In our function-space formulation, the FFM-based FKL estimator is accurate and resolution-invariant only when the noise covariance satisfies two conditions: \textit{(i)} $C$ is trace-class on the Hilbert space $H$; and \textit{(ii)} $C$ satisfies the Cameron-Martin support assumption (Assumption~\ref{ass:2_main}), i.e., the Gaussian noise $\mathcal{N}(0, C)$ must be rougher than the data measure.
To verify this empirically, we conduct an ablation study on a Gaussian special case (Case 2 in the top-left table of Figure~\ref{fig:special_cases_campaign}).
We test three choices of $C$ and evaluate the FKL estimate on trajectories sampled at resolutions $M \in\{128,256,512,1024\}$, while keeping the FFM trained at resolution $M=256$. Results are shown in Figure \ref{fig:noises}.

When $C=\mathrm{Id}$ (white noise), condition \textit{(i)} fails because $C$ is not trace-class; consequently, the estimator is not function-space resolution-invariant, with the estimated FKL diverging as the inference resolution increases. 
When $C$ is Matérn with smoothness $\alpha_0=6.0$, condition \textit{(i)} holds, but condition \textit{(ii)} fails because the noise is smoother than the data measure ($\alpha_0>\alpha_1$), leading to unstable and divergent FKL estimates across all resolutions. 
In contrast, Matérn $C$ with $\alpha_0=0.5$ satisfies both conditions, yielding accurate and resolution-invariant forward and reverse FKL estimates, which highlights the importance of choosing a noise covariance that meets both conditions for FKL evaluation.

\begin{figure}[htbp]
    \centering

    \begin{subfigure}[b]{0.325\textwidth}
        \centering
        \includegraphics[width=\textwidth]{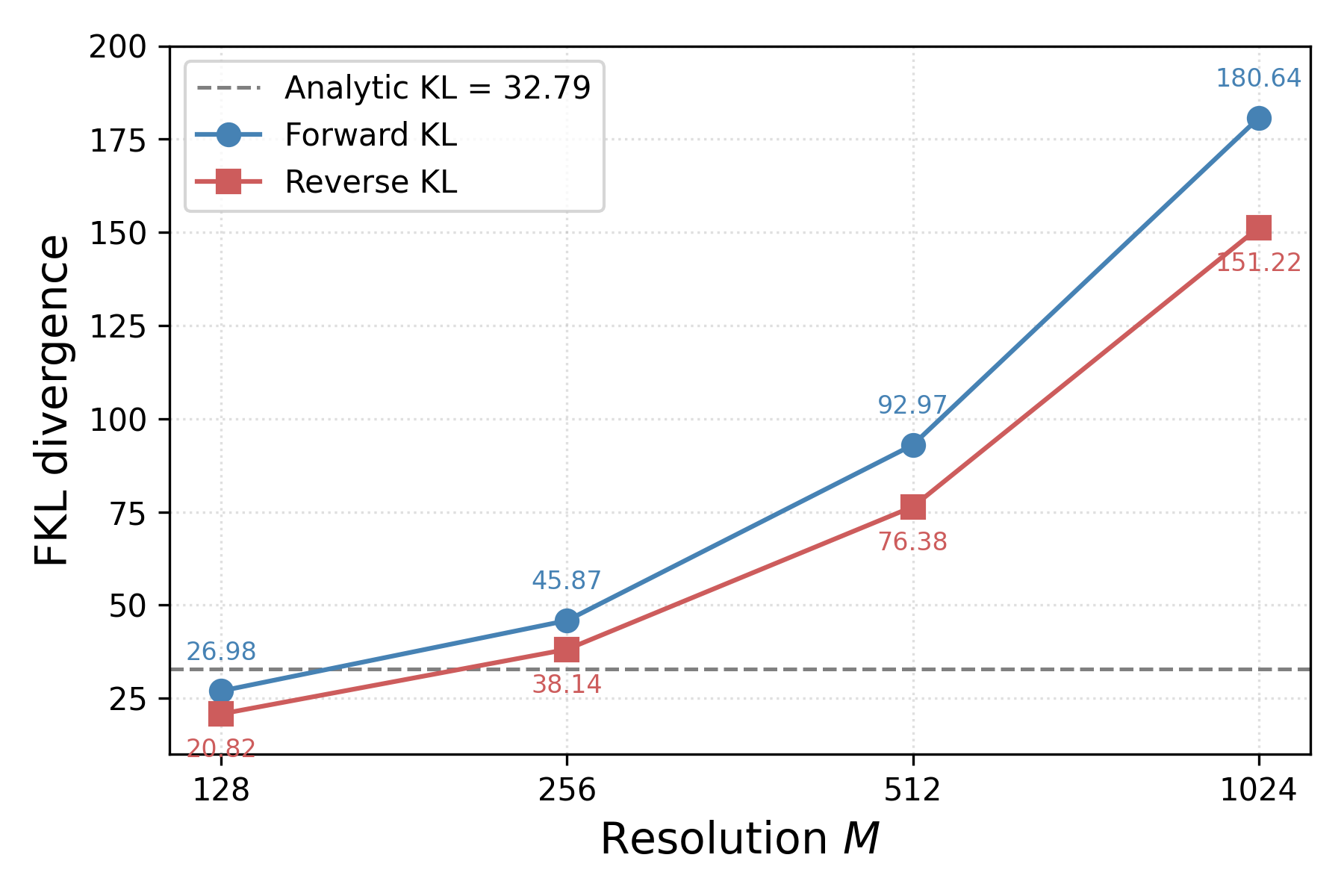}
        \caption{White noise.}
        \label{fig:white}
    \end{subfigure}
    \hfill
    \begin{subfigure}[b]{0.325\textwidth}
        \centering
        \includegraphics[width=\textwidth]{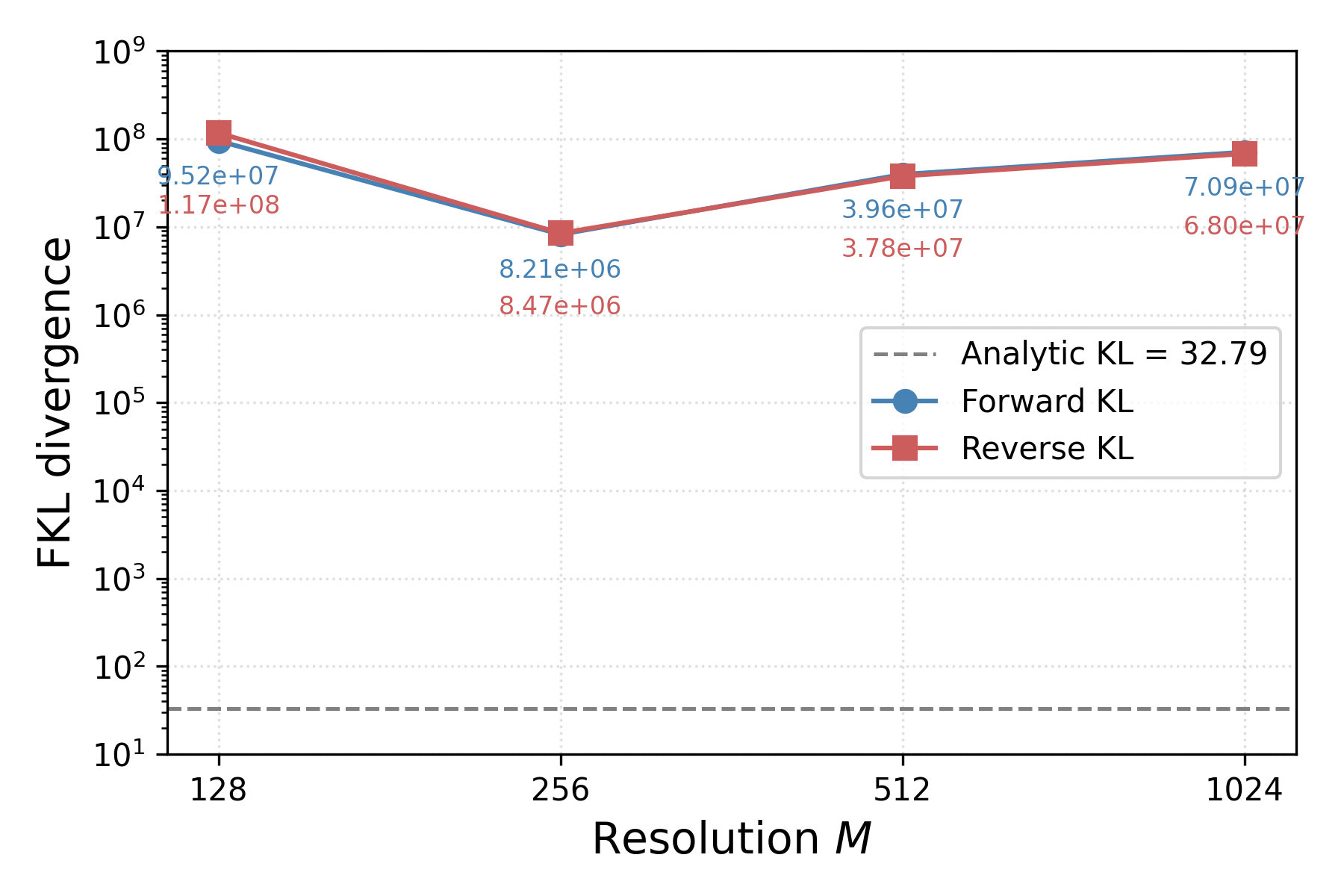}
        \caption{Matérn noise, $\alpha_0=6.0$.}
        \label{fig:mater_wrong}
    \end{subfigure}
    \hfill
    \begin{subfigure}[b]{0.325\textwidth}
        \centering
        \includegraphics[width=\textwidth]{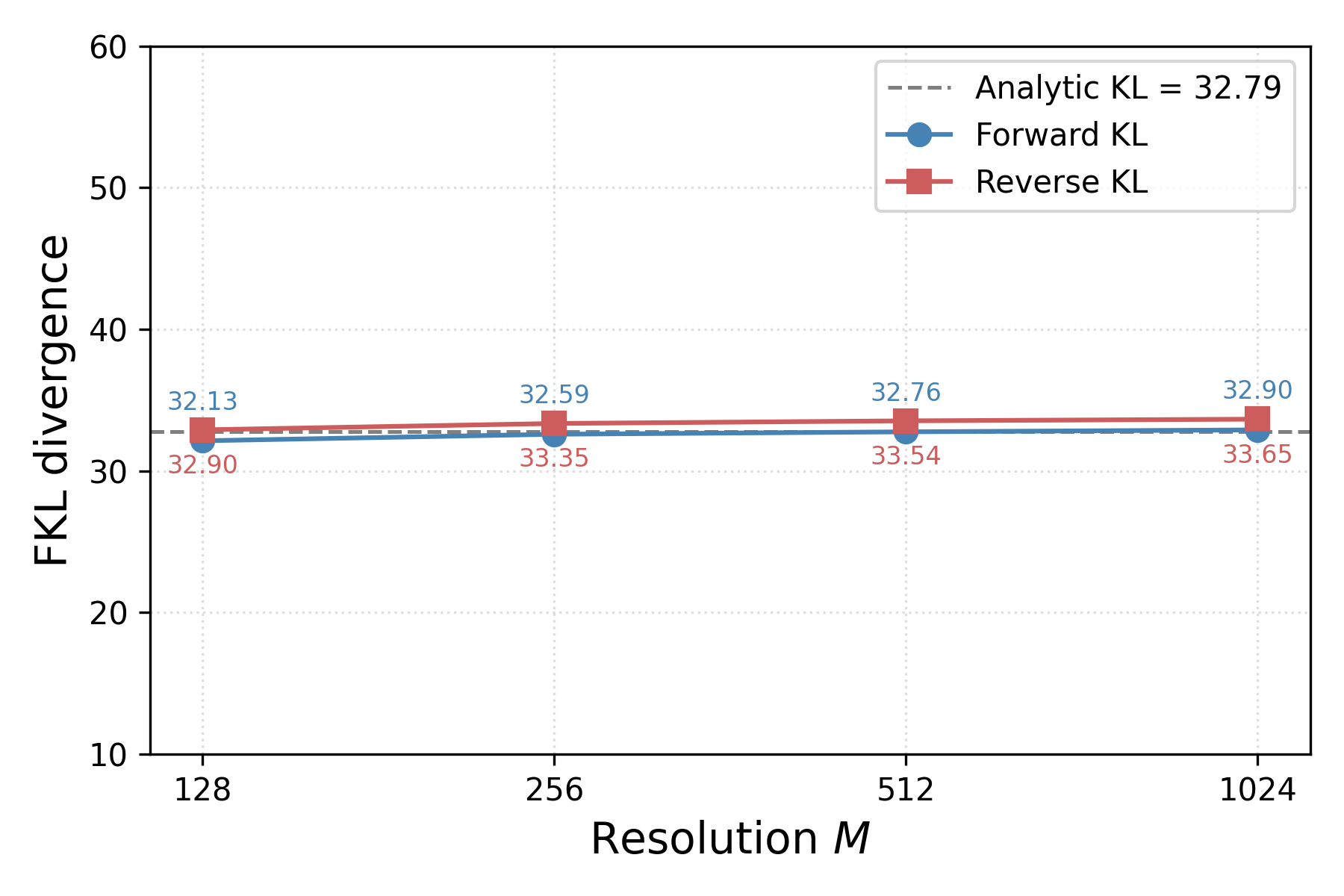}
        \caption{Matérn noise, $\alpha_0=0.5$.}
        \label{fig:matern}
    \end{subfigure}
    \caption{\textbf{\textbf{Evaluation of resolution invariance with different noise covariances $C$.}} All models were trained at resolution $M=256$ and evaluated at varying test resolutions.}
    \label{fig:noises}
\end{figure}


For the empirical comparison between white noise and trace-class Matérn noise with smoothness $\alpha_0 = 0.5$, in addition to demonstrating the improved robustness of KL estimation under super-resolution afforded by trace-class noise 
(see \Cref{fig:matern}), 
we further show that trace-class noise also provides more robust generation quality under super-resolution (see \Cref{fig:upsample_GM_visual}). 
A theoretical discussion of this choice is given in \cite{Franzese_diffusion_infinite_survey}.

 \begin{figure}
     \centering
     \includegraphics[width=\linewidth]{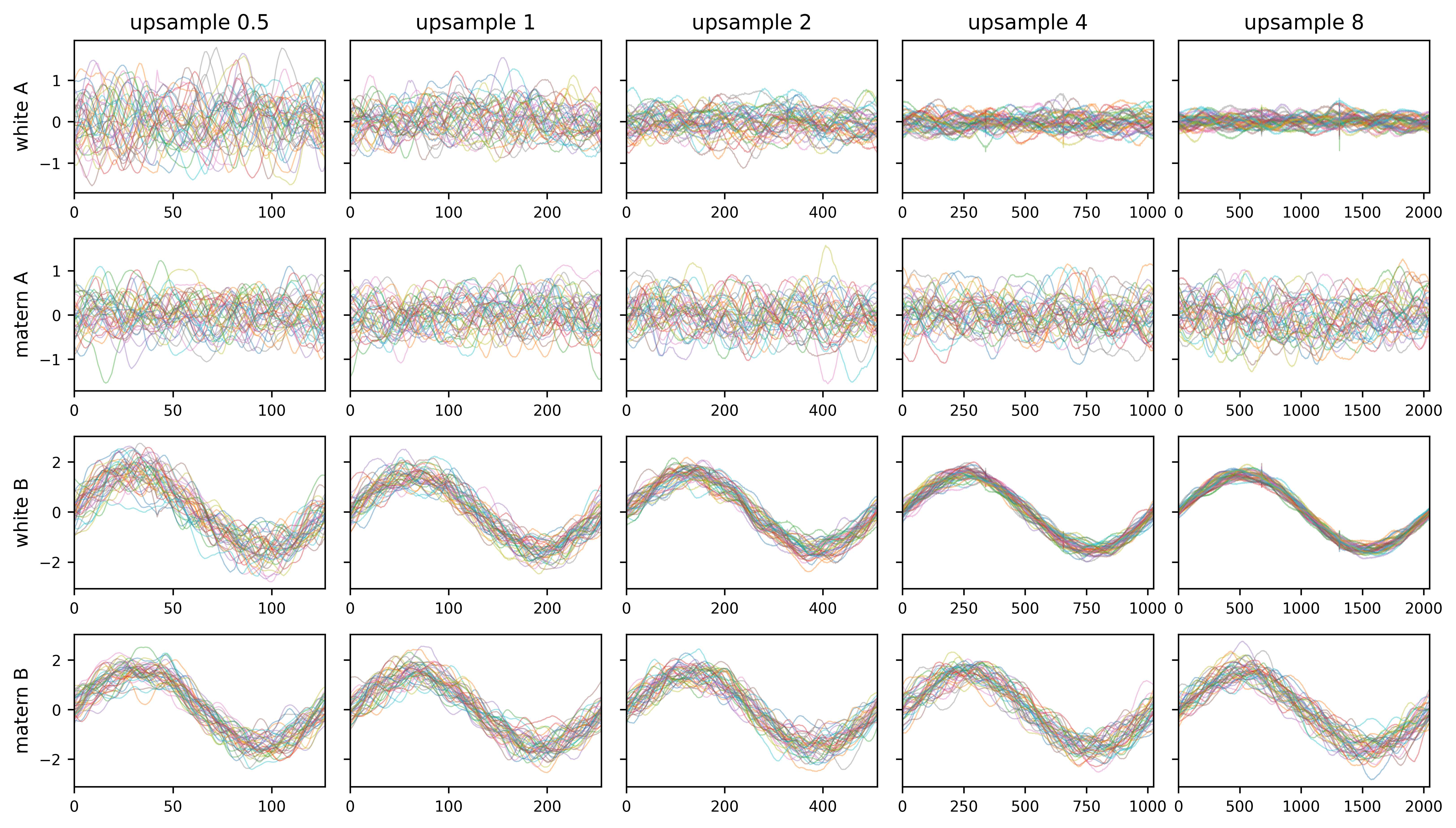}
     \caption{Real vs. generated samples across upsample ratios, for the Gaussian measures special case.}
     \label{fig:upsample_GM_visual}
 \end{figure}

\section{Experimental Details for TI Evaluation}\label{app:details}

\subsection{Marginal Metrics}
\label{app:metrics}

To quantitatively assess the quality of the generated trajectories and to compare our new evaluation metric, we considered a set of established metrics. These metrics capture the difference in marginal reconstruction on validation data.

\paragraph{Optimal Transport metrics.}
We quantify geometric distances between generated and target distributions using Wasserstein distances \cite{peyré2020computationaloptimaltransport}. For $p\in\{1,2\}$, the $p$-Wasserstein distance between probability measures $\mu$ and $\nu$ is
\begin{equation}
    W_p(\mu, \nu) = \left( \inf_{\gamma \in \Pi(\mu, \nu)} \int_{\mathbb{R}^d \times \mathbb{R}^d} \|x - y\|^p \, \mathrm{d}\gamma(x, y) \right)^{1/p},
\end{equation}
where $\Pi(\mu, \nu)$ is the set of couplings with marginals $\mu$ and $\nu$. We consider Earth Mover's Distance (EMD, i.e., $W_1$) and Wasserstein-2 ($W_2$). To remain comparable to prior work, we also include the Sliced Wasserstein Distance (SWD):
\begin{equation}
    SW_p(\mu, \nu) = \left( \int_{\mathbb{S}^{d-1}} W_p^p(\theta_\# \mu, \theta_\# \nu) \, \mathrm{d}\lambda(\theta) \right)^{1/p},
\end{equation}
which averages 1D Wasserstein distances over random projections over the unit sphere $\theta\in\mathbb{S}^{d-1}$.
Similarly, we report the Max-Sliced Wasserstein Distance (MWD).

\paragraph{Kernel-based metrics.}
In addition to OT metrics, we also report the Maximum Mean Discrepancy (MMD), a kernel based two-sample test \cite{JMLR:v13:gretton12a} that detects differences between distributions by comparing their mean embeddings in a Reproducing Kernel Hilbert Space (RKHS). Formally,
\begin{equation}
    \mathrm{MMD}^2(\mu, \nu) =
    \mathbb{E}_{x, x' \sim \mu}[k(x, x')]
    - 2\mathbb{E}_{x \sim \mu, y \sim \nu}[k(x, y)]
    + \mathbb{E}_{y, y' \sim \nu}[k(y, y')],
\end{equation}
where $k$ is a positive definite kernel. For all the experiments, we consider the Radial Basis Function (RBF) kernel with kernel bandwidth $\sigma =1$.

It is important to note that,
such marginal metrics are intrinsically limited because marginals do not determine how states at different times are coupled. 
Figure~\ref{fig:function_space_concept} highlights this distinction: 
while marginal evaluation only compares snapshot distributions in finite-dimensional space, 
the underlying inference target is a probability measure over full trajectories in function space.

\begin{figure} [t] 
\begin{subfigure}[t]{0.48\linewidth}
            \vspace{0pt}
            \centering
            \includegraphics[width=\linewidth]{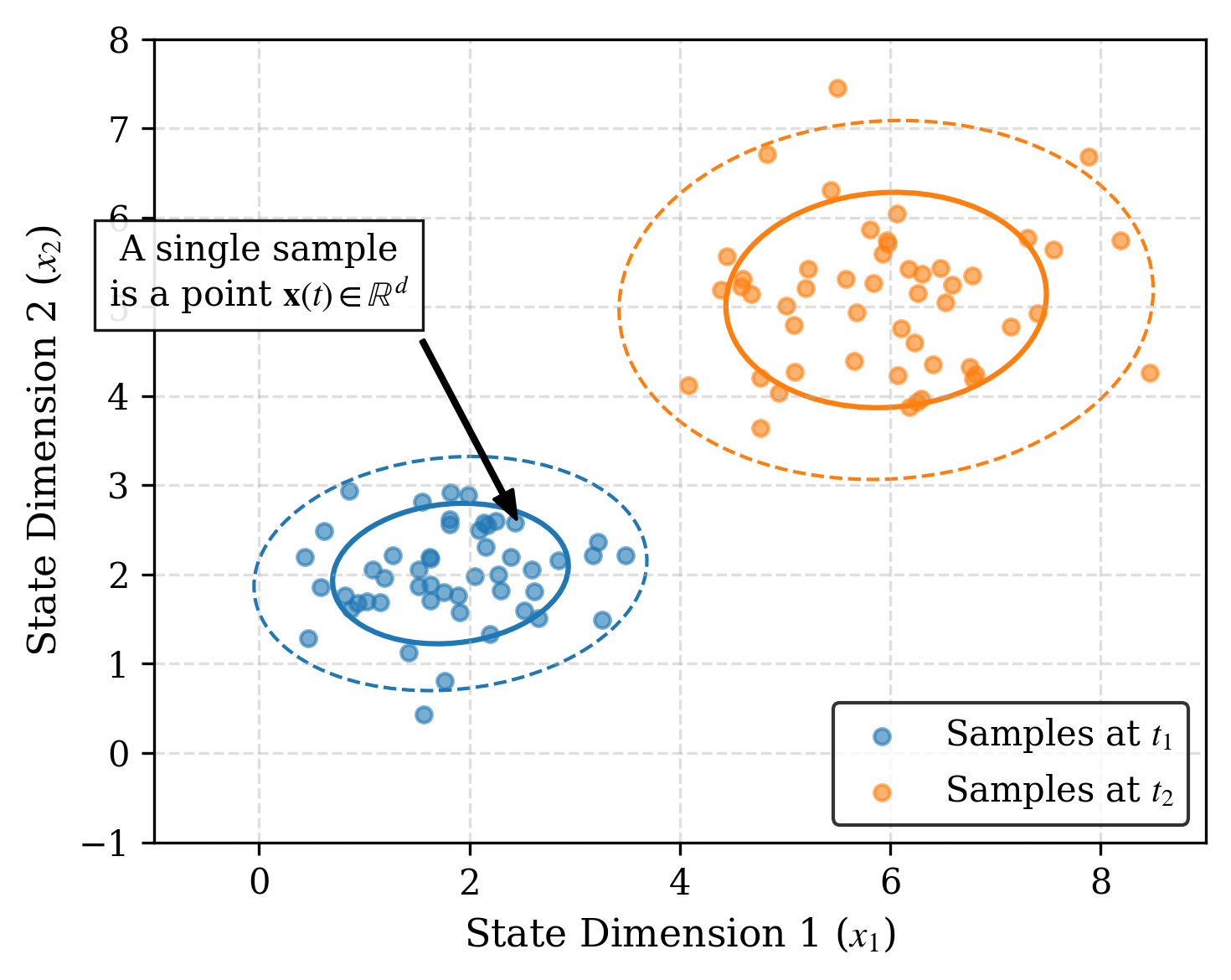}
            \caption{Finite-dimensional setting: distributions over states $\mathbf{x}(t)\in\mathbb{R}^d$ at individual time points.}
        \end{subfigure}
        \hfill
        \begin{subfigure}[t]{0.48\linewidth}
            \vspace{0pt}
            \centering
            \includegraphics[width=\linewidth]{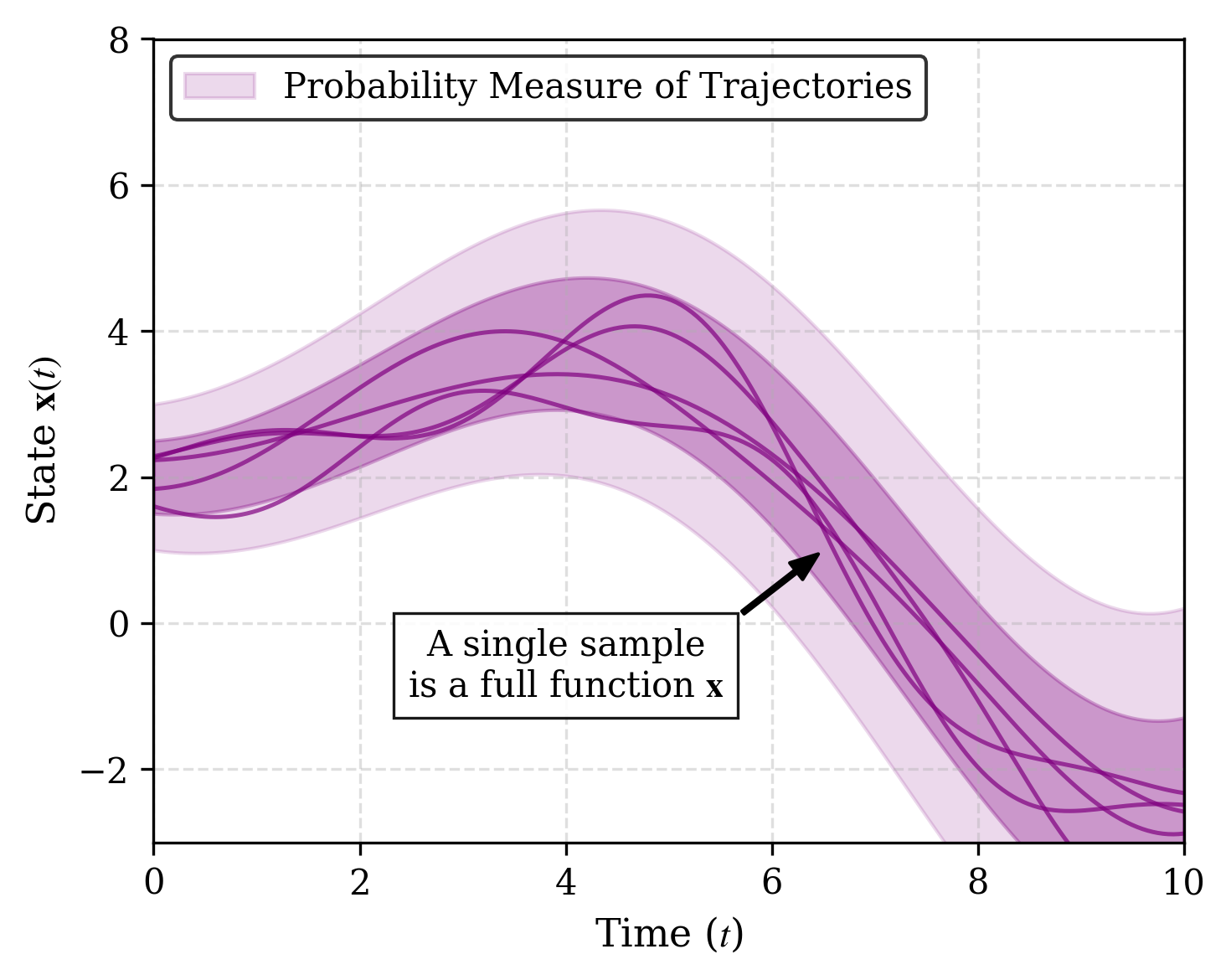}
            \caption{Function-space setting: distributions over trajectories $\mathbf{x}:[0,1]\to\mathbb{R}^d$.}
        \end{subfigure}
        \caption{Conceptual comparison between finite-dimensional and function space modeling. 
        }    
        \label{fig:function_space_concept}
\end{figure}

\subsection{TI Evaluation on Synthetic Datasets}
\label{synthetic}

\subsubsection{Lotka-Volterra}

\paragraph{Dataset. }
We evaluate the proposed method on the dynamics of a stochastic Lotka-Volterra predator-prey model \cite{shen2025multi}. The system describes the evolution of prey ($X_t$) and predator ($Y_t$) populations governed by the following system of stochastic differential equations (SDEs):

\begin{equation}
    \begin{aligned}
        \mathrm{d}X_t &= (\alpha X_t - \beta X_t Y_t)\mathrm{d}t + \sigma \mathrm{d}W_{x,t}, \\
        \mathrm{d}Y_t &= (\gamma X_t Y_t - \delta Y_t)\mathrm{d}t + \sigma \mathrm{d}W_{y,t},
    \end{aligned}
    \label{eq:lotka_volterra}
\end{equation}

where $W_t = [W_{x,t}, W_{y,t}]^\top$ denotes a standard 2-dimensional Brownian motion. We define the diffusion coefficient as $\sigma = 0.1$ and fix the model parameters to $\alpha = 1$, $\beta = 0.4$, $\gamma = 0.1$, and $\delta = 0.4$.

To generate the synthetic dataset, we simulate the system over $K=8$ unit time intervals. The initial states are sampled uniformly such that $X_0 \sim \mathcal{U}(5, 5.1)$ and $Y_0 \sim \mathcal{U}(4, 4.1)$. Numerical integration is performed using the Euler-Maruyama scheme with a discretization step of $\Delta t = 0.02$. Generated trajectories are considered as GT.

\paragraph{TI methods configuration. } 
To train the trajectory inference methods, we considered 9 equally spaced snapshots in the trajectory time $[0,1]$. Odd snapshots are used as training, whereas even snapshots as validation. For each snapshot, we consider 100 points for training TI methods.

Specifications of hyperparameters for TI methods:
\begin{itemize}
    \item SBIRR-vSB: we run SBIRR and vSB methods using default parameters, but considering the new data. For a fair comparison, the number of iterations for vSB has been set to the same number of SBIRR (i.e. 10).
    \item MSBM: $\text{num\_stage} = 20$; $\text{num\_epoch} = 1$; $\text{num\_itr} = 1000$; $\text{num\_ResNet} = 1$; $\text{learning\_rate} = 1 \times 10^{-3}$; $\text{var} = 0.1$; $\text{interval} = 101$, BS$=34$, time\_scale = 8
    \item MFL: lambda\_reg = 0.0075; initial position of the particles 
    (cx, cy) = (3.0, 2.5);
    n\_sinkhorn = 500; sigma=2.0,
    sigma\_final = 0.8; t\_final = 8.0; eta\_final =0.1; n\_iter = 2500; M (number of particles) = 500; tau\_final = 1.0. All the other hyperparameters are set to the default values. Notice that we had to change the initial position of the particles (cx, cy $\neq 0,0$) is order to make them closer to the correct positions of the marginals. 
    \item AM: T\_final = 8.0, BS = 100; SIGMA = 0.1; lr = 5e-5; num\_iterations = 2\_000. Moreover, due to the non-overlapping of the training snapshots, we needed to use the "interpolation trick" to make the dynamics continuous.
    \item\gls{tigon}: learning rate $= 5e-4$; training time points $t \in [0,2,4,6,8]$; initial gaussian kernel bandwidth $\sigma_{\text{now}} = 1$; decay $ = 0.9 \sigma$; and a regularization parameter $\lambda_d = 10^4$. All the other parameters have been set to the default values.
\end{itemize}

In \Cref{fig:lv_traj}, we show the generated trajectories by TI methods with respect to the training and validation marginals.
\begin{figure*}[t]
    \centering
    \includegraphics[width=\textwidth]{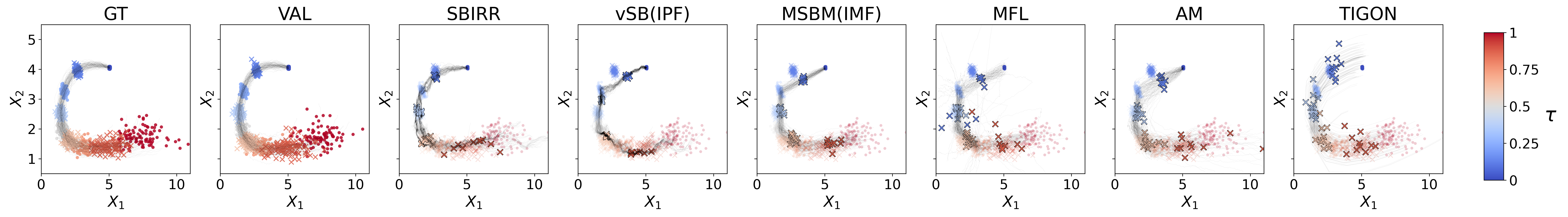}
    \caption{Lotka-Volterra trajectories generated by TI methods. Training and validation samples are denoted by points and crosses respectively. For TI methods, the generated trajectories are plotted in the foreground, training data in the background.}
    \label{fig:lv_traj}
\end{figure*}

\paragraph{FKL configuration. } 
We report FKL hyperparameters in Table~\ref{tab:hyperparameters_lv}.
\begin{table}[H]
\centering
\fontsize{9}{9}\selectfont
\resizebox{\textwidth}{!}{%
\begin{tabular}{
    @{$\:\:\:\:\:$}
    l
    @{$\:\:\:\:\:$}
    r
}
\toprule
{\bf Name}  & {\bf Value}  \\
\midrule
Training function $X_1^A$   & GT \\
Training function $X_1^B$   & TI methods: SBIRR, vSB, MSBM, MFL, AM, TIGON \\
\midrule
Training functions' input time points $M$ & $401$ \\
Training functions' output dimension $D$ & $2$ \\
\midrule
Covariance operator of noise function $X_0$    & Rougher empirical GT Fourier-spectrum\\
Num. modes $N$ summed at \gls{kl} estimation  & $16$ \\
\midrule
$t$ sampling scheme at training  & Curriculum: logit-normal (${\text{mean}}{=}0.8,\,{\text{std}}{=}1$) for first $40\%$, then uniform \\
$t$ sampling scheme at \gls{kl} estimation  & Importance sampling $t/(1-t)$ \\
Num. $t$ sampled at \gls{kl} estimation  &    $100$       \\
\midrule
Num. functions at training  & $500$ \\
Training batch size  &  $32$ \\
Training iterations  &  $20{,}000$ \\
Num. functions at \gls{kl} estimation  & $500$ \\
\midrule
Optimizer  &      Muon  \\
EMA rate  &    $0.999$ \\
\gls{lr}  &  $6.21\mathrm{e}{-4}$      \\
\gls{lr} scheduler   &   Cosine annealing  \\
FFM training loss & $w\,L_{\text{FFM}} + (1-w)\,L_{\text{FKL}}$, $w$ linearly decayed from $1$ to $0.2$ \\
\midrule
Model & MINO-T \\
Encoder (dim / depth / heads) & $64$ / $4$ / $4$ \\
Decoder (dim / depth / heads) & $64$ / $3$ / $4$ \\
Supernode radius & $0.002$ \\
\midrule
GPUs for Training & 1 $\times$ NVIDIA A100       \\
\bottomrule
\end{tabular}
}
\caption{FKL hyperparameters for Lotka-Volterra.}
\label{tab:hyperparameters_lv}
\end{table}

\subsubsection{Repressilator}

\paragraph{Dataset. } 
Repressilator \cite{shen2025multi} is a synthetic genetic regulatory network designed to exhibit stable oscillatory behavior. The system consists of three genes connected in a feedback loop, where each gene expresses a protein that represses the next gene in the cycle. The protein concentrations $X_t = [X_{1,t}, X_{2,t}, X_{3,t}]^\top$ can be modeled using the following system of SDEs:

\begin{equation}
\begin{aligned}
    \mathrm{d}X_{1,t} &= \left( \frac{\beta}{1 + (X_{3,t}/k)^n} - \gamma X_{1,t} \right) \mathrm{d}t + \sigma \mathrm{d}W_{1,t}, \\
    \mathrm{d}X_{2,t} &= \left( \frac{\beta}{1 + (X_{1,t}/k)^n} - \gamma X_{2,t} \right) \mathrm{d}t + \sigma \mathrm{d}W_{2,t}, \\
    \mathrm{d}X_{3,t} &= \left( \frac{\beta}{1 + (X_{2,t}/k)^n} - \gamma X_{3,t} \right) \mathrm{d}t + \sigma \mathrm{d}W_{3,t},
\end{aligned}
\label{eq:repressilator}
\end{equation}

where $\mathbf{W}_t = [W_{1,t}, W_{2,t}, W_{3,t}]^\top$ denotes a standard 3-dimensional Brownian motion.We set the parameters to $\beta = 10$, $n = 3$, $k = 1$, and degradation rate $\gamma = 1$. We set the diffusion coefficient $\sigma = 0.1$.

We simulate trajectories over 7.5 unit time intervals using the Euler-Maruyama scheme with a step size of $\Delta t = 0.01$. The system is initialized with $X_{1,0}, X_{2,0} \sim \mathcal{U}(1, 1.1)$ and $X_{3,0} \sim \mathcal{U}(2, 2.1)$. Generated trajectories are considered as GT trajectories. 

\paragraph{TI methods configuration. } 
To train the trajectory inference methods, we considered 11 equally spaced snapshots in the trajectory time $[0,1]$. Odd snapshots are used as training, whereas even snapshots as validation. For each snapshot, we consider 100 points for training.

Specifications of hyperparameters for TI methods:
\begin{itemize}
    \item SBIRR-vSB: As in the Lotka Volterra case, we run SBIRR and vSB methods using default parameters, but considering the new data. Again, for a fair comparison, the number of iterations has been set to the same number (i.e. 10).
    \item MSBM hyperparameters: $\text{num\_stage} = 20$; $\text{num\_epoch} = 5$; $\text{num\_itr} = 1000$; $\text{num\_ResNet} = 3$; $\text{learning\_rate} = 1 \times 10^{-3}$; $\text{var} = 0.1$; $\text{interval} = 151$, time\_scale = 7.5, BS=32.
    \item MFL: lambda\_reg = 0.0075;  initial position of the particles 
    (cx, cy, cz) = (2.5, 2.5, 2.5);
    n\_sinkhorn = 500; sigma=1.0,
    sigma\_final = 0.5; t\_final = 7.5; eta\_final =0.1; n\_iter = 2500; M (number of particles) = 500, tau\_final = 1.0. All the other hyperparameters are set to the default values. Notice that, also in this case, we had to change the initial position of the particles (cx, cy, cz $\neq 0,0,0$).
    \item AM: T\_final = 7.5, BS = 100; SIGMA = 0.1; lr = 5e-6; num\_iterations = 10\_000. Even in this case we employ the linear interpolation trick, given that training marginals are located far away in space.
    \item\gls{tigon}: learning rate $= 5e-4$; training time points $t \in [0,2,4,6,8,10]$; initial gaussian kernel bandwidth $\sigma_{\text{now}} = 1$; decay $ = 0.9 \sigma$ with a stopping condition of $\sigma > 0.02$; and a regularization parameter $\lambda_d = 10^7$. We changed the neural network architecture for the drift, considering 8 hidden layers, each with dimension 32. All the other parameters have been set to the default values.
\end{itemize}

In \Cref{fig:repr_traj}, we show the generated trajectories by TI methods with respect to the training and validation marginals.
\begin{figure*}[t]
    \centering
    \includegraphics[width=1\linewidth]{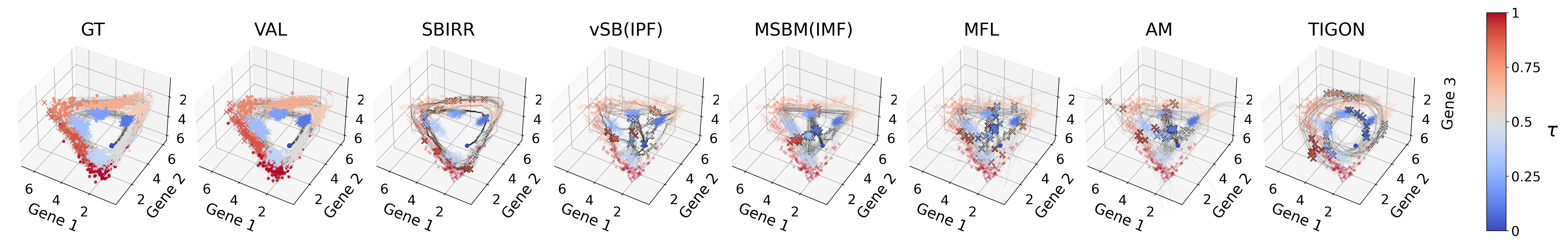}
        \caption{Repressilator trajectories generated by TI methods. Training and validation samples are denoted by points and crosses respectively. For TI methods, the generated trajectories are plotted in the foreground, training data in the background.}
    \label{fig:repr_traj}
\end{figure*}

\paragraph{FKL configuration. } 
We report FKL hyperparameters in Table~\ref{tab:hyperparameters_repr}.

\begin{table}[H]
\centering
\fontsize{9}{9}\selectfont
\resizebox{\textwidth}{!}{%
\begin{tabular}{
    @{$\:\:\:\:\:$}
    l
    @{$\:\:\:\:\:$}
    r
}
\toprule
{\bf Name}  & {\bf Value}  \\
\midrule
Training function $X_1^A$   & GT \\
Training function $X_1^B$   & TI methods: SBIRR, vSB, MSBM, MFL, AM, TIGON \\
\midrule
Training functions' input time points $M$ & $751$ \\
Training functions' output dimension $D$ & $3$ \\
\midrule
Covariance operator of noise function $X_0$    & Rougher empirical AM Fourier-spectrum\\
Num. modes $N$ summed at \gls{kl} estimation  & $16$ \\
\midrule
$t$ sampling scheme at training  & Curriculum: logit-normal (${\text{mean}}{=}0.5,\,{\text{std}}{=}1$) for first $20\%$, then uniform \\
$t$ sampling scheme at \gls{kl} estimation  & Importance sampling $t/(1-t)$ \\
Num. $t$ sampled at \gls{kl} estimation  &    $100$       \\
\midrule
Num. functions at training  & $500$ \\
Training batch size  &  $32$ \\
Training iterations  &  $40{,}000$ \\
Num. functions at \gls{kl} estimation  & $500$ \\
\midrule
Optimizer  &      Muon  \\
EMA rate  &    $0.999$ \\
\gls{lr}  &  $3.57\mathrm{e}{-3}$      \\
\gls{lr} scheduler   &   Cosine annealing  \\
FFM training loss & $w\,L_{\text{FFM}} + (1-w)\,L_{\text{FKL}}$, $w$ linearly decayed from $0.2$ to $0.04$ \\
\midrule
Model & MINO-T \\
Encoder (dim / depth / heads) & $32$ / $2$ / $8$ \\
Decoder (dim / depth / heads) & $32$ / $2$ / $8$ \\
Supernode radius & $5\mathrm{e}{-4}$ \\
\midrule
GPUs for Training & 1 $\times$ NVIDIA A100       \\
\bottomrule
\end{tabular}
}
\caption{FKL hyperparameters for Repressilator.}
\label{tab:hyperparameters_repr}
\end{table}

\subsubsection{Petal}

\paragraph{Dataset. } 
Petal \cite{huguet2022manifold,neklyudov2023action} is a 2D dataset designed to evaluate the models ability to handle complex branching dynamics. The data mimics a biological differentiation process where trajectories originate from a single source and evolve into distinct lineages. The geometry consists of 8 sinusoidal branches radiating from a central origin, creating a flower-like structure.

The particle dynamics are defined in an intrinsic coordinate system $(u, z)$ relative to a specific branch $k \in \{1, \dots, 8\}$. The longitudinal position $u_t$ represents the progress along the branch, while the transverse component $z_t$ represents the deviation from the branch spine (thickness). The evolution is governed by the following system:
\begin{equation}
    \begin{aligned}
        \mathrm{d}u_t &= v \mathrm{d}t \\
        \mathrm{d}z_t &= -\kappa z_t \mathrm{d}t + \sigma_z \mathrm{d}W_t
    \end{aligned}
\end{equation}
Here, the longitudinal progress is deterministic with a constant drift velocity $v$ shared by all particles. The transverse dynamics follow an Ornstein-Uhlenbeck process with mean reversion rate $\kappa$, confining particles within a "tube" around the branch line driven by diffusion $\sigma_z$. A deterministic mapping function $\Psi_k(u_t, z_t)$ then projects these coordinates into the 2D Cartesian space based on the sinusoidal geometry of branch $k$.

We define the branches using a reference length $L=1.0$ and a curvature amplitude $\alpha=0.25$. The dynamics are configured with a restoring force $\kappa=0.5$ and transverse diffusion $\sigma_z = 0.04$ (resulting in a stationary tube width of $0.04$). The drift velocity is set to $v=0.2$.

We simulate GT trajectories over the time interval $t \in [0, 4.0]$. The simulation uses a time step of $\Delta t = 0.04$ (100 steps), from which we extract 5 equidistant snapshots for evaluation. Trajectories are initialized as a Gaussian blob centered at the origin ($\sigma_{\text{init}}=0.1$).

\paragraph{TI methods configuration. } 
For trajectory inference methods, we considered the experimental setup of Action Matching \cite{neklyudov2023action} where, instead of considering held-out marginals, we train the system on all 5 snapshots and evaluate the generated trajectories on the validation points.  Given the more complex dynamics due to branching, we consider 2000 points for each training snapshot and 2000 for validation.

Specifications of hyperparameters for the TI methods: 
\begin{itemize}
    \item SBIRR-vSB: for the Petal dataset, we use a custom reference drift \texttt{PetalReference} that softly combines the 8 branches. For a state $\mathbf{x}$, we compute for each branch $k$ a spine point $\mathbf{c}_k(\mathbf{x})$ and a unit tangent $\mathbf{t}_k(\mathbf{x})$, and assign weights
\begin{equation}
    w_k(\mathbf{x})=\frac{\exp\!\left(-\|\mathbf{x}-\mathbf{c}_k(\mathbf{x})\|^2/\tau\right)}
    {\sum_{j=1}^8 \exp\!\left(-\|\mathbf{x}-\mathbf{c}_j(\mathbf{x})\|^2/\tau\right)},
\end{equation}
with temperature $\tau$ (initialized to $0.01$). Let $\bar{\mathbf{c}}(\mathbf{x})=\sum_k w_k(\mathbf{x})\,\mathbf{c}_k(\mathbf{x})$ and $\bar{\mathbf{t}}(\mathbf{x})=\sum_k w_k(\mathbf{x})\,\mathbf{t}_k(\mathbf{x})$, and define $\tilde{\mathbf{t}}(\mathbf{x})=\bar{\mathbf{t}}(\mathbf{x})/(\|\bar{\mathbf{t}}(\mathbf{x})\|+\varepsilon)$. The reference drift is
\begin{equation}
    f_{\mathrm{ref}}(\mathbf{x}) = s\,\tilde{\mathbf{t}}(\mathbf{x}) + \lambda\big(\bar{\mathbf{c}}(\mathbf{x})-\mathbf{x}\big).
\end{equation}

The diffusion coefficient is set to match the manifold width $\sigma=0.04$, and the solver discretization $\Delta t = 0.04$ ($N=25$ steps per snapshot interval).
For SBIRR, we use an informative prior that encodes the geometric structure of the data. We initialize the parameters with a tangential speed $s=0.2$ and a restoring force $\lambda=0.5$, providing the bridge optimization with a starting process that already respects the flow and the petal structure.
For the vSB, we simulate a standard, uninformative Schrödinger Bridge by considering a Brownian motion reference process.

\item MSBM: default hyperparameters.

\item MFL: t\_final = 4.0, lambda\_reg = 0.0075;
n\_sinkhorn = 250; sigma=1.0,
sigma\_final = 0.35; eta\_final =0.1; n\_iter = 2500; M (number of particles) = 2000; tau\_final = 1.0. All the other hyperparameters are set to the default values.

\item AM: T = 4.0,  BS = 512; SIGMA = 0.04; lr = 1e-5; num\_iterations = 20\_000. Given that in this case the training snapshots are overlapping, we followed the same procedure as in the original paper, considering mixture of points to have data which is more dense in time. We do not use any interpolation trick in this case.

\item\gls{tigon}: learning rate $= 5e-4$; training time points $t \in [0,0.25,0.5,0.75,1]$; initial gaussian kernel bandwidth $\sigma_{\text{now}} = 1$; decay $ = 0.9 \sigma$ with a stopping condition of $\sigma > 0.02$; and a regularization parameter $\lambda_d = 10^7$. We changed the neural network architecture for the drift, considering 8 hidden layers, each with dimension 32. All the other parameters have been set to the default values.

\end{itemize}

In \Cref{fig:petal_traj}, we show the generated trajectories by TI methods with respect to the training and validation marginals.
\begin{figure*}[t]
    \centering
    \includegraphics[width=1\linewidth]{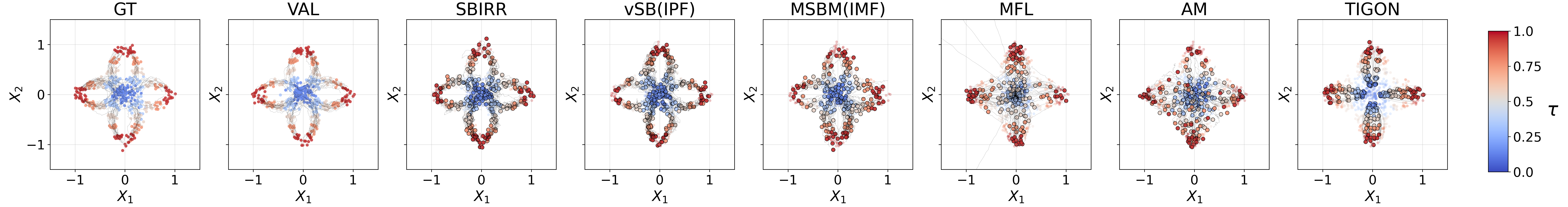}
    \caption{Petal trajectories}
    \label{fig:petal_traj}
\end{figure*}

\paragraph{KL configuration. } 
We report FKL hyperparameters in Table~\ref{tab:hyperparameters_petal}.

\begin{table}[H]
\centering
\fontsize{9}{9}\selectfont
\resizebox{\textwidth}{!}{%
\begin{tabular}{
    @{$\:\:\:\:\:$}
    l
    @{$\:\:\:\:\:$}
    r
}
\toprule
{\bf Name}  & {\bf Value}  \\
\midrule
Training function $X_1^A$   & GT \\
Training function $X_1^B$   & TI methods: SBIRR, vSB, MSBM, MFL, AM, TIGON \\
\midrule
Training functions' input time points $M$ & $101$ \\
Training functions' output dimension $D$ & $2$ \\
\midrule
Covariance operator of noise function $X_0$    & Rougher empirical GT Fourier-spectrum\\
Num. modes $N$ summed at \gls{kl} estimation  & $16$ \\
\midrule
$t$ sampling scheme at training  & Curriculum: logit-normal (${\text{mean}}{=}0.5,\,{\text{std}}{=}1.5$) for first $60\%$, then uniform \\
$t$ sampling scheme at \gls{kl} estimation  & Importance sampling $t/(1-t)$ \\
Num. $t$ sampled at \gls{kl} estimation  &    $100$       \\
\midrule
Num. functions at training  & $2000$ \\
Training batch size  &  $64$ \\
Training iterations  &  $50{,}000$ \\
Num. functions at \gls{kl} estimation  & $500$ \\
\midrule
Optimizer  &      Muon  \\
EMA rate  &    $0.999$ \\
\gls{lr}  &  $1.78\mathrm{e}{-3}$      \\
\gls{lr} scheduler   &   Cosine annealing  \\
FFM training loss & $w\,L_{\text{FFM}} + (1-w)\,L_{\text{FKL}}$, $w$ linearly decayed from $1$ to $0.2$ \\
\midrule
Model & MINO-T \\
Encoder (dim / depth / heads) & $32$ / $2$ / $8$ \\
Decoder (dim / depth / heads) & $32$ / $2$ / $8$ \\
Supernode radius & $5\mathrm{e}{-4}$ \\
\midrule
GPUs for Training & 1 $\times$ NVIDIA A100       \\
\bottomrule
\end{tabular}
}
\caption{FKL hyperparameters for Petal.}
\label{tab:hyperparameters_petal}
\end{table}

\subsubsection{Critical differences (CD) diagram.}
\label{CD}
We summarize methods performance in terms of marginal metrics using \gls{cd} \cite{IsmailFawaz2018deep} diagrams based on average ranks. For each task, methods are ranked according to the evaluation score (with ties handled by average ranks), and ranks are averaged across tasks. Statistical differences are assessed via a Friedman test followed by a Wilcoxon-Holm post-hoc comparison: two methods are considered significantly different if their average-rank gap exceeds the \gls{cd}. In the diagram, methods connected by a horizontal bar are not significantly different at the chosen significance level, whereas unconnected groups indicate statistically distinguishable performance.

We report the diagrams in \Cref{fig:cd_syn}.

\subsection{TI Evaluation on Real-World Datasets}
As real-world data we consider two different single-cell RNA sequencing (scRNA-seq) datasets that capture cellular differentiation processes over time: the Embryoid Body (EB) dataset and the Human Embryonic Stem Cell (hESC) dataset, both preprocessed as in \cite{shen2025multi}. Both datasets provide snapshots of gene expression profiles at multiple time points during differentiation, making them suitable for evaluating trajectory inference methods.

We consider \gls{sbirr} trajectories as reference \gls{gt}, training the model over all the available snapshots. The other \gls{ti} methods are trained on odd-index snapshots, and tested on SBIRR validation marginals.

\subsubsection{Embryoid Body}

\paragraph{TI methods configuration. } 
Specifications of hyperparameters for the TI methods: 
\begin{itemize}
    \item \gls{sbirr}: default parameters, trained on all snapshots. Time horizon $\tau \in [0,1]$.
    \item \gls{vsb}: we run \gls{vsb} methods with default parameters, but we changed the discretization to $dt = 0.01$ and $dts = [0,0.5,1]$ in order to be consistent with the other trajectory inference methods.
    
    \item \gls{msbm}: $\text{num\_stage} = 11$; $\text{num\_epoch} = 10$; $\text{num\_itr} = 1000$; $\text{num\_ResNet} = 1$; $\text{learning\_rate} = 2 \times 10^{-4}$; $\text{batch\_size} = 256$; $\text{var} = 0.1$; $\text{interval} = 51$.

    \item \gls{mfl}: lambda\_reg = 0.05;
n\_sinkhorn = 250; sigma=2.0,
sigma\_final = 1.0; t\_final = 1; eta\_final =0.1; n\_iter = 1500; M (number of particles) = 1000; tau\_final = 1.0. All the other hyperparameters are set to the default values.

    \item \gls{eam} hyperparameters: omega = 0.1; BS = 100; SIGMA = 0.1; lr = 1e-6; num\_iterations = 10\_000. We followed the same procedure as in the original paper, for which we considered mixture of points to have data which is more dense in time. We do not employ any interpolation trick in this case.

    \item\gls{tigon}: training time points $t \in \{0, 0.5, 1\}$; initial kernel bandwidth $\sigma_{\text{now}} = 1$; decay $ = 0.5 \sigma$ with a stopping condition of $\sigma > 0.02$; and a regularization parameter $\lambda_d = 10^7$. All the other parameters have been set to the default values.

\end{itemize}

\paragraph{KL configuration. } 
We report \gls{fkl} hyperparameters in Table~\ref{tab:hyperparameters_eb}.

\begin{table}[h]
\centering
\fontsize{9}{9}\selectfont
\resizebox{\textwidth}{!}{%
\begin{tabular}{
    @{$\:\:\:\:\:$}
    l
    @{$\:\:\:\:\:$}
    r
}
\toprule
{\bf Name}  & {\bf Value}  \\
\midrule
Training function $X_1^A$   & SBIRR \\
Training function $X_1^B$   & TI methods: vSB, MSBM, MFL, AM, TIGON \\
\midrule
Training functions' input time points $M$ & $101$ \\
Training functions' output dimension $D$ & $5$ \\
\midrule
Covariance operator of noise function $X_0$    & Rougher empirical SBIRR Fourier-spectrum\\
Num. modes $N$ summed at \gls{kl} estimation  & $16$ \\
\midrule
$t$ sampling scheme at training  & Curriculum: logit-normal (${\text{mean}}{=}0.5,\,{\text{std}}{=}1.5$) for first $40\%$, then uniform \\
$t$ sampling scheme at \gls{kl} estimation  & Importance sampling $t/(1-t)$ \\
Num. $t$ sampled at \gls{kl} estimation  &    $100$       \\
\midrule
Num. functions at training  & $300$ \\
Training batch size  &  $32$ \\
Training iterations  &  $20{,}000$ \\
Num. functions at \gls{kl} estimation  & $500$ \\
\midrule
Optimizer  &      Muon  \\
EMA rate  &    $0.999$ \\
\gls{lr}  &  $1.28\mathrm{e}{-3}$      \\
\gls{lr} scheduler   &   Cosine annealing  \\
FFM training loss & $w\,L_{\text{FFM}} + (1-w)\,L_{\text{FKL}}$, $w$ linearly decayed from $1$ to $0.2$ \\
\midrule
Model & MINO-T \\
Encoder (dim / depth / heads) & $32$ / $2$ / $8$ \\
Decoder (dim / depth / heads) & $32$ / $2$ / $8$ \\
Supernode radius & $5\mathrm{e}{-3}$ \\
\midrule
GPUs for Training & 1 $\times$ NVIDIA A100       \\
\bottomrule
\end{tabular}
}
\caption{FKL hyperparameters for Embryoid Body.}
\label{tab:hyperparameters_eb}
\end{table}

\subsubsection{Human Embryonic Stem Cell}

\paragraph{TI methods configuration. } 
Specifications of hyperparameters for \gls{ti} methods: 
\begin{itemize}
    \item \gls{sbirr}: default parameters, trained on all snapshots. Time horizon $\tau \in [0,1]$.
    \item \gls{vsb}: we run \gls{vsb} with default parameters. As for the EB dataset, we changed the time horizon in order to be limited in the interval $[0,1]$, to be consistent with the other methods.
    \item \gls{msbm} hyperparameters: $\text{num\_stage} = 100$; $\text{num\_epoch} = 1$; $\text{num\_itr} = 1000$; $\text{num\_ResNet} = 1$; $\text{learning\_rate} = 1 \times 10^{-3}$; $\text{batch\_size} = 256$; $\text{var} = 0.1$; $\text{interval} = 30$. 
    \item \gls{mfl}: lambda\_reg = 0.025;
n\_sinkhorn = 500; sigma=2.0,
sigma\_final = 1.0; t\_final = 1; eta\_final =0.1; n\_iter = 2500; M (number of particles) = 500; tau\_final = 1.0. All the other hyperparameters are set to the default values.
\item \gls{eam}: BS = 50; SIGMA = 0.1; lr = 1e-6; num\_iterations = 20\_000; MLP with hidden dimension equal to 256. In this case, we used gradient accumulation. We followed the same procedure as in the original paper, for which we considered mixture of points to have data which is more dense in time. We do not employ any interpolation trick in this case, even if the data present jumps in space between marginals.
\item \gls{tigon}: training time points $t \in \{0, 0.5, 1\}$; initial kernel bandwidth $\sigma_{\text{now}} = 1$; decay $ = 0.5 \sigma$ with a stopping condition of $\sigma > 0.02$; and a regularization parameter $\lambda_d = 10^7$. All the other parameters have been set to the default values. In this case, given the different number of cells for each snapshot, we also included the growth term in the model.
\end{itemize}

\paragraph{KL configuration. } 
We report FKL hyperparameters in Table~\ref{tab:hyperparameters_hesc}.

\begin{table}[h]
\centering
\fontsize{9}{9}\selectfont
\resizebox{\textwidth}{!}{%
\begin{tabular}{
    @{$\:\:\:\:\:$}
    l
    @{$\:\:\:\:\:$}
    r
}
\toprule
{\bf Name}  & {\bf Value}  \\
\midrule
Training function $X_1^A$   & SBIRR \\
Training function $X_1^B$   & TI methods: vSB, MSBM, MFL, AM, TIGON \\
\midrule
Training functions' input time points $M$ & $121$ \\
Training functions' output dimension $D$ & $5$ \\
\midrule
Covariance operator of noise function $X_0$    & Rougher empirical SBIRR Fourier-spectrum\\
Num. modes $N$ summed at \gls{kl} estimation  & $16$ \\
\midrule
$t$ sampling scheme at training  & Curriculum: logit-normal (${\text{mean}}{=}0.5,\,{\text{std}}{=}1.5$) for first $40\%$, then uniform \\
$t$ sampling scheme at \gls{kl} estimation  & Importance sampling $t/(1-t)$ \\
Num. $t$ sampled at \gls{kl} estimation  &    $100$       \\
\midrule
Num. functions at training  & $296$ \\
Training batch size  &  $32$ \\
Training iterations  &  $20{,}000$ \\
Num. functions at \gls{kl} estimation  & $500$ \\
\midrule
Optimizer  &      Muon  \\
EMA rate  &    $0.999$ \\
\gls{lr}  &  $1.28\mathrm{e}{-3}$      \\
\gls{lr} scheduler   &  Cosine annealing  \\
FFM training loss & $w\,L_{\text{FFM}} + (1-w)\,L_{\text{FKL}}$, $w$ linearly decayed from $1$ to $0.2$ \\
\midrule
Model & MINO-T \\
Encoder (dim / depth / heads) & $32$ / $2$ / $8$ \\
Decoder (dim / depth / heads) & $32$ / $2$ / $8$ \\
Supernode radius & $5\mathrm{e}{-3}$ \\
\midrule
GPUs for Training & 1 $\times$ NVIDIA A100       \\
\bottomrule
\end{tabular}
}
\caption{FKL hyperparameters for HESC.}
\label{tab:hyperparameters_hesc}
\end{table}

\subsubsection{Mouse Erythroid}

\paragraph{TI methods configuration. } 
Specifications of hyperparmeters for the TI methods: 
\begin{itemize}
    \item \gls{sbirr}: 300 training points per snapshot; n\_epochs = 40; lr = 2e-2.
    \item \gls{vsb}: same as \gls{sbirr}. 
    
    \item \gls{msbm}: $\text{num\_stage} = 20$; $\text{num\_epoch} = 1$; $\text{num\_itr} = 1000$; $\text{num\_ResNet} = 1$; $\text{learning\_rate} = 1 \times 10^{-3}$; $\text{batch\_size} = 256$; $\text{var} = 0.1$; $\text{interval} = 512$, $\text{time\_scale}=1.0$.

    \item \gls{mfl}: lambda\_reg = 0.0075;
n\_sinkhorn = 250; sigma=2.0,
sigma\_final = 0.5; t\_final = 1.0; eta\_final =0.1; n\_iter = 10\_000; M (number of particles) = 1000; tau\_final = 1.0. 

    \item \gls{eam}: omega = 0.1; BS = 100; SIGMA = 0.1; lr = 1e-5; num\_iterations = 50\_000.

    \item \gls{tigon}: same as EB dataset, but we changed the neural network architecture for the drift, considering 8 hidden layers, each with dimension 32.
\end{itemize}

We report the generated trajectories in \cref{fig:traj_full_eb_hesc} and the hyperparameters for training FFM in \cref{tab:hyperparameters_mouse}.

\paragraph{KL configuration. } 
We report FKL hyperparameters in Table~\ref{tab:hyperparameters_mouse}.
\begin{table}[htbp]
\centering
\fontsize{9}{9}\selectfont
\resizebox{\textwidth}{!}{%
\begin{tabular}{
    @{$\:\:\:\:\:$}
    l
    @{$\:\:\:\:\:$}
    r
}
\toprule
{\bf Name}  & {\bf Value}  \\
\midrule
Training function $X_1^A$   & SBIRR \\
Training function $X_1^B$   & TI methods: vSB, MSBM, MFL, AM, TIGON \\
\midrule
Training functions' input time points $M$ & $1024$ \\
Training functions' output dimension $D$ & $5$ \\
\midrule
Covariance operator of noise function $X_0$    & Rougher empirical SBIRR Fourier-spectrum\\
Num. modes $N$ summed at \gls{kl} estimation  & $16$ \\
\midrule
$t$ sampling scheme at training  & Curriculum: logit-normal (${\text{mean}}{=}0.5,\,{\text{std}}{=}1.5$) for first $40\%$, then uniform \\
$t$ sampling scheme at \gls{kl} estimation  & Importance sampling $t/(1-t)$ \\
Num. $t$ sampled at \gls{kl} estimation  &    $100$       \\
\midrule
Num. functions at training  & $900$ \\
Training batch size  &  $64$ \\
Training iterations  &  $20{,}000$ \\
Num. functions at \gls{kl} estimation  & $500$ \\
\midrule
Optimizer  &      Muon  \\
EMA rate  &    $0.999$ \\
\gls{lr}  &  $1.28\mathrm{e}{-3}$      \\
\gls{lr} scheduler   &   Cosine annealing  \\
FFM training loss & $w\,L_{\text{FFM}} + (1-w)\,L_{\text{FKL}}$, $w$ linearly decayed from $1$ to $0.2$ \\
\midrule
Model & MINO-T \\
Encoder (dim / depth / heads) & $32$ / $2$ / $8$ \\
Decoder (dim / depth / heads) & $32$ / $2$ / $8$ \\
Supernode radius & $2\mathrm{e}{-4}$ \\
\midrule
GPUs for Training & 1 $\times$ NVIDIA A100       \\
\bottomrule
\end{tabular}
}
\caption{FKL hyperparameters for Mouse Erythroid.}
\label{tab:hyperparameters_mouse}
\end{table}

\subsubsection{Human Fibroblast}

\paragraph{TI methods configuration. } 
Specifications for the TI methods: 
\begin{itemize}
    \item \gls{sbirr}: same as Mouse Erythroid.
    \item \gls{vsb}:  same as \gls{sbirr}. 
    
    \item \gls{msbm} hyperparameters: same as Mouse Erythroid.

    \item \gls{mfl} hyperparameters: lambda\_reg = 0.05;
n\_sinkhorn = 250; sigma=2.0,
sigma\_final = 0.5; t\_final = 1.0; eta\_final =0.1; n\_iter = 10\_000; M (number of particles) = 1000; tau\_final = 1.0.

    \item \gls{eam} hyperparameters: omega = 0.1; BS = 100; SIGMA = 0.1; lr = 1e-5; num\_iterations = 50\_000.

    \item TIGON configuration: same as Mouse Erythroid.
\end{itemize}

We report the generated trajectories in \cref{fig:traj_full_eb_hesc} and the hyperparameters for training FFM in \cref{tab:hyperparameters_fibroblast}.

\paragraph{KL configuration. } 
We report FKL hyperparameters in Table~\ref{tab:hyperparameters_fibroblast}.

\begin{table}[htbp]
\centering
\fontsize{9}{9}\selectfont
\resizebox{\textwidth}{!}{%
\begin{tabular}{
    @{$\:\:\:\:\:$}
    l
    @{$\:\:\:\:\:$}
    r
}
\toprule
{\bf Name}  & {\bf Value}  \\
\midrule
Training function $X_1^A$   & SBIRR \\
Training function $X_1^B$   & TI methods: vSB, MSBM, MFL, AM, TIGON \\
\midrule
Training functions' input time points $M$ & $1024$ \\
Training functions' output dimension $D$ & $5$ \\
\midrule
Covariance operator of noise function $X_0$    & Rougher empirical SBIRR Fourier-spectrum\\
Num. modes $N$ summed at \gls{kl} estimation  & $16$ \\
\midrule
$t$ sampling scheme at training  & Curriculum: logit-normal (${\text{mean}}{=}0.5,\,{\text{std}}{=}1.5$) for first $40\%$, then uniform \\
$t$ sampling scheme at \gls{kl} estimation  & Importance sampling $t/(1-t)$ \\
Num. $t$ sampled at \gls{kl} estimation  &    $100$       \\
\midrule
Num. functions at training  & $1000$ \\
Training batch size  &  $64$ \\
Training iterations  &  $20{,}000$ \\
Num. functions at \gls{kl} estimation  & $500$ \\
\midrule
Optimizer  &      Muon  \\
EMA rate  &    $0.999$ \\
\gls{lr}  &  $1.28\mathrm{e}{-3}$      \\
\gls{lr} scheduler   &   Cosine annealing  \\
FFM training loss & $w\,L_{\text{FFM}} + (1-w)\,L_{\text{FKL}}$, $w$ linearly decayed from $1$ to $0.2$ \\
\midrule
Model & MINO-T \\
Encoder (dim / depth / heads) & $32$ / $2$ / $8$ \\
Decoder (dim / depth / heads) & $32$ / $2$ / $8$ \\
Supernode radius & $2\mathrm{e}{-4}$ \\
\midrule
GPUs for Training & 1 $\times$ NVIDIA A100       \\
\bottomrule
\end{tabular}
}
\caption{FKL hyperparameters for Human Fibroblast.}
\label{tab:hyperparameters_fibroblast}
\end{table}

\subsection{Uncertainty Estimation} 

\subsubsection{FKL}
To ensure the reliability of our performance rankings and account for the inherent stochasticity in Monte Carlo sampling and velocity fields training, we evaluate all models across multiple independent runs. By employing 3 different random seeds, we provide quantitative uncertainty estimates for the forward and backward FKL, across the three synthetic datasets 
(\cref{tab:seeds_lv_repr_petal})
and 
four real-world datasets 
(\cref{tab:seeds_real}). 

\begin{table}[htbp]
    \centering
    \resizebox{0.75\textwidth}{!}{%
    \begin{tabular}{lcccccc}
        \toprule
        \multirow{2}{*}{\textbf{Models}} & \multicolumn{2}{c}{\textbf{LV}} & \multicolumn{2}{c}{\textbf{Repr}} & \multicolumn{2}{c}{\textbf{Petal}}\\
        \cmidrule(lr){2-3} \cmidrule(lr){4-5} \cmidrule(lr){6-7}
        & $\mathrm{KL}(\nu^A \| \nu^B)$ & $\mathrm{KL}(\nu^B \| \nu^A)$ & $\mathrm{KL}(\nu^A \| \nu^B)$ & $\mathrm{KL}(\nu^B \| \nu^A)$ & $\mathrm{KL}(\nu^A \| \nu^B)$ & $\mathrm{KL}(\nu^B \| \nu^A)$ \\
        \midrule
        Val & 0.271 $\pm$ 0.008 & 0.268 $\pm$ 0.009 & 0.015 $\pm$ 0.001 & 0.014 $\pm$ 0.001 & 0.079 $\pm$ 0.006 & 0.078 $\pm$ 0.005 \\
        \midrule
        SBIRR & $\mathbf{43.352 \pm 1.456}$ & $\mathbf{42.779 \pm 0.629}$ & $\mathbf{23.519 \pm 1.087}$ & $\mathbf{25.242 \pm 0.314}$ & 15.991 $\pm$ 0.892 & 49.360 $\pm$ 2.334 \\
        vSB & 165.057 $\pm$ 8.938 & 126.886 $\pm$ 5.601 & 82.933 $\pm$ 2.302 & 79.014 $\pm$ 0.874 & 18.881 $\pm$ 0.658 & 53.435 $\pm$ 4.186 \\
        MSBM & 79.872 $\pm$ 2.644 & 46.023 $\pm$ 1.504 & 90.011 $\pm$ 4.888 & 49.395 $\pm$ 1.399 & $\mathbf{9.641 \pm 0.307}$ & $\mathbf{17.055 \pm 1.186}$ \\
        MFL & $43.929 \pm 2.094$ & 130.579 $\pm$ 13.905 & 63.077 $\pm$ 2.686 & 84.621 $\pm$ 4.619 & 42.660 $\pm$ 0.957 & 68.191 $\pm$ 3.010 \\
        AM & 44.914 $\pm$ 2.233 & 55.488 $\pm$ 3.059 & 66.901 $\pm$ 0.437 & 126.248 $\pm$ 8.847 & 12.328 $\pm$ 0.387 & 31.641 $\pm$ 2.890 \\
        TIGON & 179.367 $\pm$ 2.205 & 65.442 $\pm$ 5.152 & 54.515 $\pm$ 3.738 & 42.844 $\pm$ 1.857 & 96.144 $\pm$ 11.848 & 35.505 $\pm$ 3.089 \\
        \bottomrule
    \end{tabular}%
    }
    \caption{Variation in FKL across 3 seeds on synthetic datasets.
    }
    \label{tab:seeds_lv_repr_petal}
\end{table}

\begin{table}[htbp]
    \centering
    \resizebox{\textwidth}{!}{%
    \begin{tabular}{lcccccccc}
        \toprule
        \multirow{2}{*}{\textbf{Models}} & \multicolumn{2}{c}{\textbf{EB}} & \multicolumn{2}{c}{\textbf{hESC}} & \multicolumn{2}{c}{\textbf{Mouse Erythroid}} & \multicolumn{2}{c}{\textbf{Fibroblast}}\\
        \cmidrule(lr){2-3} \cmidrule(lr){4-5} \cmidrule(lr){6-7} \cmidrule(lr){8-9}
        & $\mathrm{KL}(\nu^A \| \nu^B)$ & $\mathrm{KL}(\nu^B \| \nu^A)$ & $\mathrm{KL}(\nu^A \| \nu^B)$ & $\mathrm{KL}(\nu^B \| \nu^A)$ & $\mathrm{KL}(\nu^A \| \nu^B)$ & $\mathrm{KL}(\nu^B \| \nu^A)$ & $\mathrm{KL}(\nu^A \| \nu^B)$ & $\mathrm{KL}(\nu^B \| \nu^A)$ \\
        \midrule
        vSB & 23.778 $\pm$ 0.982 & 27.727 $\pm$ 0.392 & 127.241 $\pm$ 3.511 & 124.057 $\pm$ 3.868 & 51.119 $\pm$ 1.330 & 48.054 $\pm$ 3.561 & 56.990 $\pm$ 5.092 & 41.638 $\pm$ 0.481 \\
        MSBM & 29.452 $\pm$ 0.757 & \textbf{21.454 $\pm$ 0.878} & 111.151 $\pm$ 3.632 & \textbf{81.697 $\pm$ 1.934} & 65.571 $\pm$ 2.524 & \textbf{37.563 $\pm$ 3.349} & 58.914 $\pm$ 2.043 & \textbf{26.784 $\pm$ 0.437} \\
        MFL & \textbf{22.058 $\pm$ 0.838} & 73.201 $\pm$ 2.341 & \textbf{97.134 $\pm$ 2.747} & 117.901 $\pm$ 3.346 & \textbf{42.268 $\pm$ 2.123} & 79.306 $\pm$ 4.288 & \textbf{29.706 $\pm$ 0.731} & 69.830 $\pm$ 7.687 \\
        AM & 74.803 $\pm$ 5.319 & 32.180 $\pm$ 0.480 & 145.552 $\pm$ 18.465 & 283.293 $\pm$ 32.506 & 89.223 $\pm$ 3.304 & 81.838 $\pm$ 5.794 & 73.227 $\pm$ 6.770 & 66.942 $\pm$ 4.031 \\
        TIGON & 122.486 $\pm$ 4.452 & 41.076 $\pm$ 1.958 & 293.102 $\pm$ 4.104 & 161.731 $\pm$ 5.677 & 250.619 $\pm$ 29.298 & 82.386 $\pm$ 7.848 & 197.769 $\pm$ 4.872 & 69.540 $\pm$ 5.794 \\
        \bottomrule
    \end{tabular}%
    }
    \caption{Variation in FKL across 3 seeds on real-world datasets.}
    \label{tab:seeds_real}
\end{table}

The reported performance metrics exhibit high consistency across multiple independent trials. The variance observed between different random seeds suggests that the model is robust to stochastic initialization and training noise, ensuring the reproducibility of our findings.

\subsubsection{
Marginal Metrics.
}
 In \cref{tab:lv_metrics_bootstrap,tab:repr_metrics_bootstrap,tab:petal_metrics_bootstrap} we estimate the uncertainty of marginal metrics via bootstrapping, on the three synthetic datasets. For Lotka-Volterraand Repressilator we considered 10 runs with a subsample size of 100, whereas for Petal we consider 10 runs and subsample size equal to 500. 
 
 The low variance present in most of the results show robustness in Monte Carlo sampling of the data. Notably, in the case of AM on the Repressilator dataset (\cref{tab:repr_metrics_bootstrap}), we observe that the variance scales positively with $\tau$. This behavior is expected, as larger values of $\tau$ correspond to trajectories that propagate further into the state space, naturally leading to a higher dispersion of samples and a subsequent increase in the system's variance.

\begin{table*}[htbp]
\centering
\scriptsize
\setlength{\tabcolsep}{2pt}
\renewcommand{\arraystretch}{0.95}
\resizebox{\textwidth}{!}{%
\begin{tabular}{cl|c|cccccc}
\toprule
\textbf{$\tau$} & \textbf{Metric} & \textbf{VAL} & \textbf{SBIRR} & \textbf{vSB} & \textbf{MSBM} & \textbf{MFL} & \textbf{AM} & \textbf{TIGON} \\
\midrule
$0.125$ & $EMD$ & $0.041\,\pm\,0.005$ & $\mathbf{0.182\,\pm\,0.016}$ & $1.007\,\pm\,0.018$ & $0.786\,\pm\,0.014$ & $0.997\,\pm\,0.051$ & $0.861\,\pm\,0.017$ & $0.434\,\pm\,0.025$ \\
 & $W_2$ & $0.054\,\pm\,0.008$ & $\mathbf{0.191\,\pm\,0.015}$ & $1.015\,\pm\,0.033$ & $0.788\,\pm\,0.014$ & $1.131\,\pm\,0.071$ & $0.864\,\pm\,0.016$ & $0.481\,\pm\,0.024$ \\
 & SWD & $0.029\,\pm\,0.007$ & $\mathbf{0.129\,\pm\,0.012}$ & $0.758\,\pm\,0.025$ & $0.596\,\pm\,0.011$ & $0.794\,\pm\,0.051$ & $0.654\,\pm\,0.012$ & $0.328\,\pm\,0.016$ \\
 & MWD & $0.038\,\pm\,0.009$ & $\mathbf{0.182\,\pm\,0.016}$ & $1.009\,\pm\,0.027$ & $0.786\,\pm\,0.015$ & $0.897\,\pm\,0.073$ & $0.862\,\pm\,0.016$ & $0.375\,\pm\,0.030$ \\
 & MMD & $0.019\,\pm\,0.008$ & $\mathbf{0.175\,\pm\,0.016}$ & $0.874\,\pm\,0.007$ & $0.717\,\pm\,0.011$ & $0.617\,\pm\,0.019$ & $0.762\,\pm\,0.012$ & $0.287\,\pm\,0.021$ \\
\addlinespace[2pt]
$0.375$ & $EMD$ & $0.058\,\pm\,0.007$ & $\mathbf{0.098\,\pm\,0.006}$ & $0.522\,\pm\,0.023$ & $0.322\,\pm\,0.020$ & $0.411\,\pm\,0.048$ & $0.329\,\pm\,0.029$ & $0.321\,\pm\,0.032$ \\
 & $W_2$ & $0.073\,\pm\,0.006$ & $\mathbf{0.117\,\pm\,0.009}$ & $0.545\,\pm\,0.066$ & $0.331\,\pm\,0.019$ & $0.696\,\pm\,0.105$ & $0.373\,\pm\,0.057$ & $0.375\,\pm\,0.029$ \\
 & SWD & $0.037\,\pm\,0.005$ & $\mathbf{0.069\,\pm\,0.006}$ & $0.363\,\pm\,0.052$ & $0.235\,\pm\,0.015$ & $0.498\,\pm\,0.080$ & $0.272\,\pm\,0.044$ & $0.246\,\pm\,0.019$ \\
 & MWD & $0.046\,\pm\,0.006$ & $\mathbf{0.080\,\pm\,0.009}$ & $0.515\,\pm\,0.027$ & $0.326\,\pm\,0.019$ & $0.645\,\pm\,0.093$ & $0.347\,\pm\,0.058$ & $0.345\,\pm\,0.032$ \\
 & MMD & $0.022\,\pm\,0.012$ & $\mathbf{0.049\,\pm\,0.009}$ & $0.478\,\pm\,0.011$ & $0.305\,\pm\,0.018$ & $0.245\,\pm\,0.019$ & $0.273\,\pm\,0.023$ & $0.170\,\pm\,0.025$ \\
\addlinespace[2pt]
$0.625$ & $EMD$ & $0.090\,\pm\,0.009$ & $\mathbf{0.248\,\pm\,0.028}$ & $0.311\,\pm\,0.020$ & $0.490\,\pm\,0.051$ & $0.421\,\pm\,0.061$ & $0.544\,\pm\,0.074$ & $0.270\,\pm\,0.035$ \\
 & $W_2$ & $0.113\,\pm\,0.014$ & $\mathbf{0.276\,\pm\,0.027}$ & $0.360\,\pm\,0.056$ & $0.511\,\pm\,0.051$ & $0.605\,\pm\,0.091$ & $0.980\,\pm\,0.234$ & $0.361\,\pm\,0.036$ \\
 & SWD & $0.063\,\pm\,0.012$ & $\mathbf{0.194\,\pm\,0.022}$ & $0.248\,\pm\,0.042$ & $0.366\,\pm\,0.038$ & $0.415\,\pm\,0.065$ & $0.727\,\pm\,0.176$ & $0.238\,\pm\,0.026$ \\
 & MWD & $0.083\,\pm\,0.018$ & $\mathbf{0.250\,\pm\,0.031}$ & $0.304\,\pm\,0.063$ & $0.506\,\pm\,0.052$ & $0.517\,\pm\,0.078$ & $0.969\,\pm\,0.236$ & $0.311\,\pm\,0.036$ \\
 & MMD & $0.039\,\pm\,0.012$ & $0.189\,\pm\,0.036$ & $0.183\,\pm\,0.016$ & $0.429\,\pm\,0.040$ & $0.271\,\pm\,0.037$ & $0.252\,\pm\,0.032$ & $\mathbf{0.136\,\pm\,0.028}$ \\
\addlinespace[2pt]
$0.875$ & $EMD$ & $0.181\,\pm\,0.025$ & $0.445\,\pm\,0.100$ & $0.293\,\pm\,0.046$ & $0.575\,\pm\,0.077$ & $0.639\,\pm\,0.080$ & $1.136\,\pm\,0.146$ & $\mathbf{0.289\,\pm\,0.053}$ \\
 & $W_2$ & $0.221\,\pm\,0.028$ & $0.514\,\pm\,0.102$ & $\mathbf{0.349\,\pm\,0.052}$ & $0.665\,\pm\,0.078$ & $0.924\,\pm\,0.105$ & $1.915\,\pm\,0.284$ & $0.380\,\pm\,0.066$ \\
 & SWD & $0.133\,\pm\,0.022$ & $0.345\,\pm\,0.079$ & $\mathbf{0.221\,\pm\,0.040}$ & $0.470\,\pm\,0.057$ & $0.664\,\pm\,0.079$ & $1.412\,\pm\,0.213$ & $0.254\,\pm\,0.053$ \\
 & MWD & $0.178\,\pm\,0.032$ & $0.472\,\pm\,0.110$ & $\mathbf{0.273\,\pm\,0.065}$ & $0.646\,\pm\,0.080$ & $0.837\,\pm\,0.113$ & $1.898\,\pm\,0.284$ & $0.331\,\pm\,0.072$ \\
 & MMD & $0.081\,\pm\,0.025$ & $0.234\,\pm\,0.054$ & $0.169\,\pm\,0.030$ & $0.396\,\pm\,0.045$ & $0.240\,\pm\,0.021$ & $0.227\,\pm\,0.031$ & $\mathbf{0.127\,\pm\,0.041}$ \\
\addlinespace[2pt]
\bottomrule
\end{tabular}
}
\caption{Distances from GT Lotka-Volterra trajectories to other methods at each snapshot $\tau$ (mean$\pm$std over resampling runs).}
\label{tab:lv_metrics_bootstrap}
\end{table*}

\begin{table*}[t]
\centering
\scriptsize
\setlength{\tabcolsep}{2pt}
\renewcommand{\arraystretch}{0.95}
\resizebox{\textwidth}{!}{%
\begin{tabular}{cl|c|cccccc}
\toprule
\textbf{$\tau$} & \textbf{Metric} & \textbf{VAL} & \textbf{SBIRR} & \textbf{vSB} & \textbf{MSBM} & \textbf{MFL} & \textbf{AM} & \textbf{TIGON} \\
\midrule
$0.1$ & $EMD$ & $0.063\,\pm\,0.003$ & $\mathbf{0.392\,\pm\,0.006}$ & $1.882\,\pm\,0.006$ & $1.467\,\pm\,0.014$ & $1.795\,\pm\,0.098$ & $1.456\,\pm\,0.016$ & $1.084\,\pm\,0.041$ \\
 & $W_2$ & $0.073\,\pm\,0.004$ & $\mathbf{0.416\,\pm\,0.007}$ & $1.887\,\pm\,0.006$ & $1.470\,\pm\,0.014$ & $1.915\,\pm\,0.114$ & $1.459\,\pm\,0.015$ & $1.147\,\pm\,0.043$ \\
 & SWD & $0.026\,\pm\,0.003$ & $\mathbf{0.211\,\pm\,0.004}$ & $1.020\,\pm\,0.003$ & $0.803\,\pm\,0.007$ & $1.116\,\pm\,0.074$ & $0.799\,\pm\,0.009$ & $0.617\,\pm\,0.025$ \\
 & MWD & $0.040\,\pm\,0.004$ & $\mathbf{0.395\,\pm\,0.005}$ & $1.811\,\pm\,0.006$ & $1.384\,\pm\,0.014$ & $1.515\,\pm\,0.085$ & $1.375\,\pm\,0.016$ & $0.887\,\pm\,0.031$ \\
 & MMD & $0.024\,\pm\,0.008$ & $\mathbf{0.367\,\pm\,0.005}$ & $1.268\,\pm\,0.002$ & $1.120\,\pm\,0.005$ & $0.931\,\pm\,0.019$ & $1.111\,\pm\,0.009$ & $0.690\,\pm\,0.016$ \\
\addlinespace[2pt]
$0.3$ & $EMD$ & $0.118\,\pm\,0.013$ & $\mathbf{0.856\,\pm\,0.037}$ & $1.234\,\pm\,0.019$ & $1.336\,\pm\,0.023$ & $1.734\,\pm\,0.060$ & $1.364\,\pm\,0.040$ & $0.871\,\pm\,0.060$ \\
 & $W_2$ & $0.143\,\pm\,0.020$ & $\mathbf{0.893\,\pm\,0.039}$ & $1.258\,\pm\,0.018$ & $1.366\,\pm\,0.022$ & $1.845\,\pm\,0.066$ & $1.402\,\pm\,0.045$ & $0.974\,\pm\,0.069$ \\
 & SWD & $0.063\,\pm\,0.015$ & $\mathbf{0.522\,\pm\,0.023}$ & $0.747\,\pm\,0.011$ & $0.790\,\pm\,0.014$ & $1.031\,\pm\,0.038$ & $0.804\,\pm\,0.029$ & $0.542\,\pm\,0.043$ \\
 & MWD & $0.100\,\pm\,0.026$ & $0.870\,\pm\,0.043$ & $1.135\,\pm\,0.025$ & $1.315\,\pm\,0.027$ & $1.519\,\pm\,0.064$ & $1.307\,\pm\,0.049$ & $\mathbf{0.770\,\pm\,0.078}$ \\
 & MMD & $0.047\,\pm\,0.024$ & $0.677\,\pm\,0.031$ & $0.880\,\pm\,0.013$ & $0.982\,\pm\,0.013$ & $0.866\,\pm\,0.017$ & $0.959\,\pm\,0.023$ & $\mathbf{0.478\,\pm\,0.030}$ \\
\addlinespace[2pt]
$0.5$ & $EMD$ & $0.174\,\pm\,0.026$ & $\mathbf{0.451\,\pm\,0.044}$ & $0.955\,\pm\,0.061$ & $1.014\,\pm\,0.039$ & $1.455\,\pm\,0.097$ & $2.253\,\pm\,0.448$ & $0.829\,\pm\,0.070$ \\
 & $W_2$ & $0.209\,\pm\,0.029$ & $\mathbf{0.491\,\pm\,0.055}$ & $0.996\,\pm\,0.056$ & $1.114\,\pm\,0.047$ & $1.593\,\pm\,0.095$ & $5.650\,\pm\,1.121$ & $0.899\,\pm\,0.070$ \\
 & SWD & $0.102\,\pm\,0.023$ & $\mathbf{0.263\,\pm\,0.035}$ & $0.578\,\pm\,0.036$ & $0.658\,\pm\,0.029$ & $0.892\,\pm\,0.056$ & $3.070\,\pm\,0.646$ & $0.471\,\pm\,0.046$ \\
 & MWD & $0.157\,\pm\,0.039$ & $\mathbf{0.375\,\pm\,0.070}$ & $0.935\,\pm\,0.055$ & $1.050\,\pm\,0.042$ & $1.316\,\pm\,0.106$ & $5.375\,\pm\,1.033$ & $0.730\,\pm\,0.079$ \\
 & MMD & $0.067\,\pm\,0.025$ & $\mathbf{0.290\,\pm\,0.027}$ & $0.596\,\pm\,0.032$ & $0.707\,\pm\,0.018$ & $0.642\,\pm\,0.027$ & $0.615\,\pm\,0.036$ & $0.433\,\pm\,0.035$ \\
\addlinespace[2pt]
$0.7$ & $EMD$ & $0.229\,\pm\,0.026$ & $\mathbf{0.787\,\pm\,0.108}$ & $1.090\,\pm\,0.036$ & $1.019\,\pm\,0.062$ & $1.380\,\pm\,0.110$ & $7.523\,\pm\,1.679$ & $0.818\,\pm\,0.079$ \\
 & $W_2$ & $0.272\,\pm\,0.027$ & $\mathbf{0.854\,\pm\,0.118}$ & $1.158\,\pm\,0.035$ & $1.157\,\pm\,0.073$ & $1.530\,\pm\,0.105$ & $18.938\,\pm\,2.964$ & $0.875\,\pm\,0.079$ \\
 & SWD & $0.125\,\pm\,0.020$ & $0.480\,\pm\,0.068$ & $0.622\,\pm\,0.018$ & $0.642\,\pm\,0.041$ & $0.861\,\pm\,0.067$ & $10.616\,\pm\,1.687$ & $\mathbf{0.453\,\pm\,0.051}$ \\
 & MWD & $0.205\,\pm\,0.042$ & $0.800\,\pm\,0.125$ & $0.999\,\pm\,0.044$ & $1.015\,\pm\,0.084$ & $1.305\,\pm\,0.112$ & $18.451\,\pm\,2.929$ & $\mathbf{0.684\,\pm\,0.076}$ \\
 & MMD & $0.075\,\pm\,0.021$ & $\mathbf{0.402\,\pm\,0.043}$ & $0.620\,\pm\,0.013$ & $0.620\,\pm\,0.028$ & $0.537\,\pm\,0.021$ & $0.563\,\pm\,0.019$ & $0.425\,\pm\,0.034$ \\
\addlinespace[2pt]
$0.9$ & $EMD$ & $0.287\,\pm\,0.042$ & $1.872\,\pm\,0.185$ & $0.954\,\pm\,0.073$ & $1.228\,\pm\,0.077$ & $1.228\,\pm\,0.104$ & $17.566\,\pm\,3.948$ & $\mathbf{0.852\,\pm\,0.087}$ \\
 & $W_2$ & $0.339\,\pm\,0.047$ & $1.943\,\pm\,0.189$ & $1.083\,\pm\,0.070$ & $1.403\,\pm\,0.089$ & $1.369\,\pm\,0.103$ & $38.787\,\pm\,5.272$ & $\mathbf{0.932\,\pm\,0.096}$ \\
 & SWD & $0.169\,\pm\,0.032$ & $1.132\,\pm\,0.111$ & $0.625\,\pm\,0.040$ & $0.781\,\pm\,0.054$ & $0.715\,\pm\,0.066$ & $22.178\,\pm\,3.052$ & $\mathbf{0.502\,\pm\,0.062}$ \\
 & MWD & $0.264\,\pm\,0.066$ & $1.894\,\pm\,0.189$ & $0.912\,\pm\,0.077$ & $1.284\,\pm\,0.101$ & $1.134\,\pm\,0.116$ & $38.447\,\pm\,5.252$ & $\mathbf{0.712\,\pm\,0.118}$ \\
 & MMD & $0.095\,\pm\,0.027$ & $0.563\,\pm\,0.054$ & $0.538\,\pm\,0.033$ & $0.680\,\pm\,0.025$ & $0.479\,\pm\,0.026$ & $\mathbf{0.367\,\pm\,0.022}$ & $0.428\,\pm\,0.040$ \\
\addlinespace[2pt]
\bottomrule
\end{tabular}
}
\caption{Distances from GT Repressilator trajectories to other methods at each snapshot $\tau$ (mean$\pm$std over resampling runs)}
\label{tab:repr_metrics_bootstrap}
\end{table*}

\begin{table*}[htbp]
\centering
\scriptsize
\setlength{\tabcolsep}{2pt}
\renewcommand{\arraystretch}{0.95}
\resizebox{\textwidth}{!}{%
\begin{tabular}{cl|c|cccccc}
\toprule
\textbf{$\tau$} & \textbf{Metric} & \textbf{VAL} & \textbf{SBIRR} & \textbf{vSB} & \textbf{MSBM} & \textbf{MFL} & \textbf{AM} & \textbf{TIGON} \\
\midrule
$0$ & $EMD$ & $0.025\,\pm\,0.003$ & $0.027\,\pm\,0.003$ & $\mathbf{0.025\,\pm\,0.002}$ & $0.026\,\pm\,0.003$ & $0.203\,\pm\,0.023$ & $0.026\,\pm\,0.002$ & $0.093\,\pm\,0.004$ \\
 & $W_2$ & $0.031\,\pm\,0.002$ & $0.034\,\pm\,0.003$ & $\mathbf{0.031\,\pm\,0.002}$ & $0.033\,\pm\,0.003$ & $0.694\,\pm\,0.061$ & $0.033\,\pm\,0.002$ & $0.098\,\pm\,0.004$ \\
 & SWD & $0.013\,\pm\,0.003$ & $0.016\,\pm\,0.003$ & $\mathbf{0.013\,\pm\,0.002}$ & $0.014\,\pm\,0.002$ & $0.489\,\pm\,0.047$ & $0.013\,\pm\,0.002$ & $0.056\,\pm\,0.003$ \\
 & MWD & $0.017\,\pm\,0.004$ & $0.019\,\pm\,0.004$ & $\mathbf{0.016\,\pm\,0.003}$ & $0.018\,\pm\,0.003$ & $0.541\,\pm\,0.060$ & $0.017\,\pm\,0.003$ & $0.068\,\pm\,0.003$ \\
 & MMD & $0.010\,\pm\,0.006$ & $0.013\,\pm\,0.004$ & $\mathbf{0.009\,\pm\,0.004}$ & $0.009\,\pm\,0.003$ & $0.054\,\pm\,0.009$ & $0.011\,\pm\,0.004$ & $0.022\,\pm\,0.003$ \\
\addlinespace[2pt]
$0.25$ & $EMD$ & $0.048\,\pm\,0.011$ & $0.048\,\pm\,0.004$ & $\mathbf{0.048\,\pm\,0.011}$ & $0.059\,\pm\,0.005$ & $0.292\,\pm\,0.012$ & $0.111\,\pm\,0.005$ & $0.123\,\pm\,0.004$ \\
 & $W_2$ & $0.071\,\pm\,0.019$ & $\mathbf{0.070\,\pm\,0.007}$ & $0.071\,\pm\,0.018$ & $0.080\,\pm\,0.008$ & $0.529\,\pm\,0.046$ & $0.126\,\pm\,0.005$ & $0.130\,\pm\,0.005$ \\
 & SWD & $0.034\,\pm\,0.011$ & $0.033\,\pm\,0.004$ & $\mathbf{0.031\,\pm\,0.010}$ & $0.041\,\pm\,0.006$ & $0.366\,\pm\,0.034$ & $0.067\,\pm\,0.004$ & $0.060\,\pm\,0.003$ \\
 & MWD & $0.050\,\pm\,0.016$ & $0.050\,\pm\,0.010$ & $\mathbf{0.047\,\pm\,0.018}$ & $0.058\,\pm\,0.010$ & $0.407\,\pm\,0.040$ & $0.086\,\pm\,0.005$ & $0.083\,\pm\,0.003$ \\
 & MMD & $0.028\,\pm\,0.013$ & $0.027\,\pm\,0.004$ & $0.021\,\pm\,0.012$ & $0.031\,\pm\,0.009$ & $0.057\,\pm\,0.004$ & $0.037\,\pm\,0.009$ & $\mathbf{0.019\,\pm\,0.012}$ \\
\addlinespace[2pt]
$0.5$ & $EMD$ & $0.070\,\pm\,0.017$ & $\mathbf{0.066\,\pm\,0.008}$ & $0.071\,\pm\,0.017$ & $0.090\,\pm\,0.010$ & $0.291\,\pm\,0.011$ & $0.176\,\pm\,0.009$ & $0.170\,\pm\,0.007$ \\
 & $W_2$ & $0.119\,\pm\,0.028$ & $\mathbf{0.114\,\pm\,0.012}$ & $0.122\,\pm\,0.026$ & $0.136\,\pm\,0.017$ & $0.460\,\pm\,0.047$ & $0.205\,\pm\,0.010$ & $0.180\,\pm\,0.009$ \\
 & SWD & $0.060\,\pm\,0.015$ & $\mathbf{0.056\,\pm\,0.007}$ & $0.056\,\pm\,0.016$ & $0.072\,\pm\,0.012$ & $0.308\,\pm\,0.033$ & $0.106\,\pm\,0.008$ & $0.091\,\pm\,0.005$ \\
 & MWD & $0.086\,\pm\,0.022$ & $0.086\,\pm\,0.012$ & $\mathbf{0.084\,\pm\,0.024}$ & $0.101\,\pm\,0.015$ & $0.348\,\pm\,0.045$ & $0.143\,\pm\,0.015$ & $0.121\,\pm\,0.007$ \\
 & MMD & $0.044\,\pm\,0.017$ & $0.038\,\pm\,0.008$ & $0.037\,\pm\,0.018$ & $0.055\,\pm\,0.013$ & $0.109\,\pm\,0.005$ & $0.076\,\pm\,0.015$ & $\mathbf{0.035\,\pm\,0.013}$ \\
\addlinespace[2pt]
$0.75$ & $EMD$ & $0.089\,\pm\,0.021$ & $\mathbf{0.080\,\pm\,0.009}$ & $0.084\,\pm\,0.020$ & $0.118\,\pm\,0.020$ & $0.196\,\pm\,0.012$ & $0.196\,\pm\,0.017$ & $0.147\,\pm\,0.014$ \\
 & $W_2$ & $0.176\,\pm\,0.031$ & $\mathbf{0.157\,\pm\,0.017}$ & $0.159\,\pm\,0.033$ & $0.189\,\pm\,0.033$ & $0.397\,\pm\,0.049$ & $0.239\,\pm\,0.022$ & $0.192\,\pm\,0.025$ \\
 & SWD & $0.087\,\pm\,0.018$ & $\mathbf{0.076\,\pm\,0.007}$ & $0.076\,\pm\,0.019$ & $0.100\,\pm\,0.022$ & $0.255\,\pm\,0.036$ & $0.121\,\pm\,0.015$ & $0.094\,\pm\,0.011$ \\
 & MWD & $0.134\,\pm\,0.036$ & $\mathbf{0.115\,\pm\,0.019}$ & $0.117\,\pm\,0.031$ & $0.142\,\pm\,0.034$ & $0.301\,\pm\,0.040$ & $0.170\,\pm\,0.025$ & $0.137\,\pm\,0.026$ \\
 & MMD & $0.055\,\pm\,0.020$ & $0.044\,\pm\,0.009$ & $0.045\,\pm\,0.019$ & $0.070\,\pm\,0.018$ & $0.069\,\pm\,0.007$ & $0.091\,\pm\,0.021$ & $\mathbf{0.039\,\pm\,0.015}$ \\
\addlinespace[2pt]
$1$ & $EMD$ & $0.103\,\pm\,0.025$ & $0.088\,\pm\,0.013$ & $\mathbf{0.087\,\pm\,0.026}$ & $0.143\,\pm\,0.032$ & $0.135\,\pm\,0.022$ & $0.182\,\pm\,0.025$ & $0.090\,\pm\,0.022$ \\
 & $W_2$ & $0.272\,\pm\,0.049$ & $0.237\,\pm\,0.034$ & $\mathbf{0.232\,\pm\,0.056}$ & $0.289\,\pm\,0.063$ & $0.422\,\pm\,0.049$ & $0.300\,\pm\,0.045$ & $0.241\,\pm\,0.049$ \\
 & SWD & $0.137\,\pm\,0.026$ & $0.117\,\pm\,0.015$ & $\mathbf{0.115\,\pm\,0.028}$ & $0.151\,\pm\,0.038$ & $0.261\,\pm\,0.034$ & $0.155\,\pm\,0.024$ & $0.116\,\pm\,0.025$ \\
 & MWD & $0.227\,\pm\,0.058$ & $0.196\,\pm\,0.033$ & $\mathbf{0.193\,\pm\,0.055}$ & $0.237\,\pm\,0.062$ & $0.332\,\pm\,0.044$ & $0.225\,\pm\,0.036$ & $0.193\,\pm\,0.046$ \\
 & MMD & $0.062\,\pm\,0.020$ & $0.047\,\pm\,0.011$ & $0.048\,\pm\,0.018$ & $0.080\,\pm\,0.023$ & $\mathbf{0.043\,\pm\,0.016}$ & $0.095\,\pm\,0.022$ & $0.047\,\pm\,0.015$ \\
\addlinespace[2pt]
\bottomrule
\end{tabular}
}
\caption{Distances from GT Petal trajectories to other methods at each snapshot $\tau$ (mean$\pm$std over resampling runs).}
\label{tab:petal_metrics_bootstrap}
\end{table*}

\end{document}